\documentclass[twoside,11pt]{article}

\usepackage[preprint]{jmlr2e}

\usepackage{subfig}
\usepackage{wrapfig}

\firstpageno{1}

\usepackage{amsmath}
\usepackage{amssymb}
\usepackage{graphicx}
\usepackage{amsmath}
\usepackage{xcolor}
\usepackage{algorithm}
\usepackage{algorithmic}
\usepackage{bm}
\usepackage[normalem]{ulem}

\usepackage{enumitem}

\hypersetup{colorlinks=true,citecolor=blue,linkcolor=blue}

\DeclareMathOperator*{\nth}{^{\text{th}}}

\newcommand{\Deltamin}{\Delta_{\min}}

\newcommand{\J}{\mathcal{J}}

\newcommand{\lev}{\ell} 

\newenvironment{proof-sketch}{\par\noindent{\bf Proof sketch\ }}{\hfill\BlackBox\\[2mm]}

\begin{document}

\title{Near-Optimal Algorithms for Differentially Private Online Learning in a Stochastic Environment\thanks{
The previous version of this arXiv preprint, \href{https://arxiv.org/abs/2102.07929v2}{2102.07929v2}, also contained a theoretical analysis of the algorithm Hybrid-UCB. Although that analysis is correct to our knowledge, it has been excluded from the current version. New to the current version are theoretical results for the algorithm Lazy-DP-TS which previously appeared in our UAI 2022 paper \citep{hu2022near}.
}}

\author{\name Bingshan Hu \email bingsha1@cs.ubc.ca \\
      \addr 
      University of British Columbia,
      Vancouver, Canada 
      \AND
      \name Zhiming Huang \email zhiminghuang@uvic.ca \\
      \addr  University of Victoria, Victoria, Canada
      \AND
      \name Nishant A. Mehta \email nmehta@uvic.ca \\
      \addr  University of Victoria, Victoria, Canada
      \AND
      \name Nidhi Hegde \email nidhi.hegde@ualberta.ca \\
      \addr 
      University of Alberta, Alberta Machine Intelligence Institute,
      Edmonton, Canada 
}

\editor{}

\maketitle

\begin{abstract}

In this paper, we study differentially private online learning problems in a stochastic environment under both bandit and full information feedback. For  differentially private stochastic bandits, we propose both   UCB and Thompson Sampling-based algorithms that are anytime and achieve the optimal $O \left(\sum_{j: \Delta_j>0} \frac{\ln(T)}{\min \left\{\Delta_j, \epsilon \right\}} \right)$ instance-dependent regret bound, where $T$ is the finite learning horizon, $\Delta_j$ denotes the suboptimality gap between the optimal arm and  a suboptimal arm $j$, and $\epsilon$ is the required privacy parameter. 
For the differentially private full information setting with stochastic rewards, we show an  $\Omega \left(\frac{\ln(K)}{\min \left\{\Delta_{\min}, \epsilon \right\}} \right)$ instance-dependent regret lower bound and an $\Omega\left(\sqrt{T\ln(K)} + \frac{\ln(K)}{\epsilon}\right)$ minimax lower bound, where $K$ is the total number of actions and $\Delta_{\min}$ denotes the minimum suboptimality gap among all the suboptimal actions. For the same differentially private full information setting, we also present an $\epsilon$-differentially private algorithm whose instance-dependent regret and worst-case regret match our respective lower bounds up to an extra $\log(T)$ factor.
\end{abstract}

\

\keywords{Multi-armed bandits, Differential privacy, Thompson
Sampling, Online learning, Sequential decision making}

\section{Introduction} \label{sec: intro}

This work considers
stochastic online learning settings where, in each of a sequence of rounds, a learning agent selects one of $K$ actions, collects an independent stochastic reward associated with that action, and then receives some feedback. 
We assume the stochastic reward sequence for any action $j \in [K]$ is i.i.d.~according to a Bernoulli distribution with mean $\mu_j$. 
There are two commonly-studied feedback models: the \emph{bandit} setting --- where only the reward of the played action is revealed ---  and the \emph{full information} setting --- where the rewards of all actions are revealed. As is customary, we will refer to actions as \emph{arms} in the bandit setting, while in general we use the term \emph{actions}.
The learning agent wishes to maximize its cumulative reward over $T$ rounds. 
Naturally, 
the learning agent 
can and should
use the revealed rewards from previous rounds to decide what actions to play in future rounds. 
However, in many settings, rewards themselves may encode private information that should be protected. 
For instance, consider a system for search advertising
where the objective is to display relevant ads for web queries. In such a setting the system would display a few advertisements to the user.  When the user clicks on an ad, a reward is collected by the system, which in accumulation would allow the system and any external observers to learn user preferences. Rewards that represent user preferences are private information and may further allow inference on the user's other private characteristics. 
Motivated by the need to protect privacy, in this work we study differentially private stochastic online learning under bandit feedback and full information feedback. 
The framework of differential privacy is widely accepted and is based on the notion of plausible deniability: from a distributional point of view, an adversary should learn almost the same thing if one element in the dataset is changed or missing \citep{dwork2014algorithmic}. In the context of online learning, a dataset is the stream of reward vectors fed into the algorithm, and a change to one element refers to a change to one reward vector in the stream. 

We now introduce the bandit and full information settings in turn, highlighting the open questions we will address along the way.

\paragraph{(I) Bandit setting.}
The stochastic multi-armed bandit problem admits myriad applications and has attracted significant theoretical and practical attention over the past few decades. 
We adopt as our performance measure 
the \emph{pseudo-regret} (hereafter, simply ``regret''), which is the difference between the expected cumulative reward of the best action in expectation and the expected cumulative reward of the algorithm. 
A key challenge in designing low-regret algorithms for this problem is the need to balance exploitation and exploration. Addressing this challenge has led to numerous, rather different algorithms, including Upper Confidence Bound (UCB)-based \citep{auer2002finite,audibert2009exploration,garivier2011kl,kaufmann2012bayesian,lattimore2018refining}, Thompson Sampling-based \citep{agrawal2017near,kaufmann2012thompson,bian2022maillard,jin2021mots,jin2022finite,jin2023thompson,hu2020problem,hu2023optimistic,Opt2023}, and elimination-based \citep{even2002pac,auer2010ucb} algorithms. 
Much has been settled in this setting: we have algorithms that are asymptotically optimal, algorithms that are minimax optimal in rate, and algorithms that satisfy a certain notion of instance-dependent optimality known as the sub-UCB property (further explained below); for at least one subclass of problems, a single algorithm achieves all three properties simultaneously \citep{lattimore2018refining}.
Moreover, algorithms have been developed that admit good theoretical guarantees while also having strong practical performance; Thompson Sampling \citep{agrawal2017near} is a notable example.

Before venturing to the private setting, it will be useful to introduce the sub-UCB property. 
Following \cite{lattimore2018refining}, we say an algorithm is \emph{sub-UCB} if its regret is
\begin{align}
O \left( \sum_{j \in [K] \colon \Delta_j > 0} \frac{\log(T)}{\Delta_j} \right) \quad, \label{eqn:sub-ucb}
\end{align}
where $\Delta_j = \max_{i \in [K]} \mu_i - \mu_j$ denotes the mean reward gap for arm $j \in [K]$ and indicates the single round performance loss when the learning agent pulls a sub-optimal arm $j$ instead of the optimal arm. Instance-dependent optimality is nuanced: when considering regret bounds depending on the gaps $(\Delta_1, \dotsc, \Delta_K)$ but not directly depending on the means $(\mu_1, \dotsc, \mu_K)$, the sub-UCB property is a suitable notion of instance-dependent optimality due to known instance-dependent lower bounds \cite[Chapter 16]{lattimore2020bandit}.\footnote{Improved instance-dependent bounds are sometimes possible when the bounds can depend on the means.} We therefore adopt the sub-UCB property as our notion of instance-dependent optimality.

The state of affairs is much less settled when one requires differential privacy.
\cite{shariff2018differentially} proved that
for any algorithm that 
satisfies a very mild notion of consistency\footnote{Namely, always enjoying $O(T^{3/4})$ regret, although the particular exponent of $3/4$ is not important.},
the requirement of $\epsilon$-differential privacy (see Section~\ref{sec:dp-def} for a formal definition) comes at the price of 
$\Omega \left( \frac{K \log(T)}{\epsilon} \right)$ regret on any problem instance.
Together with known sub-UCB lower bounds, the optimal rate for instance-dependent regret (in terms of the gaps) is
\begin{align}
O \left( \sum_{j \in [K] \colon \Delta_j > 0}
             \frac{ \log (T)}{\min \left\{\Delta_j, \epsilon\right\}}
  \right) \quad; 
\label{eqn:private-sub-ucb}
\end{align}
for brevity, we say an algorithm achieving the above rate is \emph{private sub-UCB}.
Shortly after \cite{shariff2018differentially} proved their lower bound, \citet{sajed2019optimal} designed Differentially Private Successive Elimination (DP-SE), an elimination-style algorithm whose non-private regret term is sub-UCB and whose private regret term matches the lower bound of \cite{shariff2018differentially}. 
Being an elimination-style algorithm, DP-SE requires knowledge of the time horizon $T$ and therefore is not an anytime algorithm. Although DP-SE could be made anytime by using the doubling trick, this comes at the cost of practical performance. On the other hand, in the non-private setting, UCB and Thompson Sampling are sub-UCB and ``natively'' anytime; moreover, they both typically well-outperform elimination-style algorithms, with Thompson Sampling exhibiting particularly good practical performance. Therefore, it is natural to ask if one can develop differentially private UCB-style and Thompson Sampling-style algorithms with regret bounds of the same order as DP-SE while also being anytime. This is indeed possible, and the resulting algorithms constitute two of our core contributions.

\paragraph{(II) Full information setting.}
Upgrading to full information feedback and allowing the learning agent to hedge by playing a distribution over actions in each round, we obtain an interesting special case of online stochastic convex optimization with close connections to (batch) statistical learning. Technically, we obtain a stochastic variant of 
decision-theoretic online learning (DTOL), a simple specialization of the game of prediction with expert advice. 
Unlike the case of bandit feedback, exploration is not needed since the rewards of all actions are revealed regardless of which action was taken.
Consequently, 
it is much less challenging to develop an optimal algorithm. 
Let $\Deltamin$ be the minimum gap between the mean reward of any suboptimal arm and the mean reward $\max_{j \in [K]} \mu_j$ of the optimal arm.
Follow-the-Leader (FTL) \citep{vanerven2014follow}, a simple learning algorithm that plays the action with the highest empirical mean, achieves the optimal $O \left(\frac{\log(K)}{\Deltamin} \right)$ instance-dependent regret bound \citep{kotlowski2019private} and is actually optimal in a much stronger sense \citep{kotlowski2018minimaxity}.
It may therefore come as a surprise that, compared to the bandit setting, 
there are even larger gaps in our understanding of the stochastic full information setting under the constraint of differential privacy. 
First, while an instance-independent regret upper bound was recently shown by \cite{asi2022private}, it is unclear whether the rate they obtain is optimal. 
Next, to our knowledge, 
no prior work has given instance-dependent regret upper bounds, and lower bounds also are yet to be shown. We make progress on both these fronts. Moreover, via a standard argument, our instance-dependent upper bound can be used to recover the instance-independent upper bound of  \cite{asi2022private}.

\subsection{Preview of Results}
\label{sec:preview}

We begin with our contributions in the bandit setting. Our first contribution is Anytime-Lazy-UCB (Algorithm~\ref{Optimal DP-UCB}), the first anytime algorithm for differentially private stochastic bandits that satisfies the private sub-UCB property shown in \eqref{eqn:private-sub-ucb}.
Note that while DP-SE is also private sub-UCB, it is not anytime. 
Our second contribution is Lazy-DP-TS (Algorithm~\ref{Optimal DP-TS}), the first Thompson Sampling-based algorithm for differentially private stochastic bandits that is private sub-UCB; this algorithm is also anytime.

Anytime-Lazy-UCB and Lazy-DP-TS share the same type of arm-specific schedule for updating the empirical mean of each arm. 
This schedule is a critical design element for the algorithms to be private sub-UCB. The schedule works as follows. The outer part of the algorithm (a suitably modified UCB or Thompson Sampling rule) decides, given a stored private empirical mean for each arm, an arm to pull. 
Control then passes to the inner part of the algorithm, which maintains a separate epoch schedule for each arm and is responsible for updating the private empirical mean of the arm. For a given arm, the epoch length (i.e., the number of pulls) doubles in each successive epoch. 
The inner part only updates the private empirical mean of the corresponding arm at the end of the arm's current epoch; moreover, the private empirical mean is computed using only the observations from the pulls within this current epoch.
Therefore, for any arm, (a) the updating of the empirical mean is lazy (it only happens at the end of 
the arm's  epoch); (b) it is forgetful, as the previous observations (from the arm's previous epochs) are forgotten; and (c) all non-pulled arms do not progress in their respective schedules. This way of separately maintaining a 
``doubling'' schedule 
for each arm
can be viewed as a generalization of the geometric schedule adopted by DP-SE. 
Note that the schedules of all arms are synchronized in DP-SE, 
whereas this is not the case in our algorithm design template.

As would be expected, our design of Anytime-Lazy-UCB follows the principle of optimism in the face of uncertainty (OFU). Beyond the use of arm-specific epoch schedules, Anytime-Lazy-UCB adds Laplace noise to the empirical mean of each arm to protect privacy. To ``cancel'' the impact of the added Laplace noise, Anytime-Lazy-UCB adds an extra exploration bonus term to  the non-private UCB index to ensure that the private UCB index is indeed an upper confidence bound for  arm's true mean. Unexpectedly, our design of Lazy-DP-TS  is also inspired by the principle of OFU.
Different from Anytime-Lazy-UCB where an extra exploration bonus term is  added directly to the non-private UCB index, in Lazy-DP-TS, we add a bonus term to the mean of the (differentially private) posterior distribution. Intuitively, we shift each arm's (differentially private) posterior distribution to the right as compared to the (non-private) posterior distribution, thereby ensuring that a posterior sample drawn from the shifted posterior distribution is more likely to be optimistic as compared to drawing a sample from the non-private posterior distribution.

In the full information setting, we present RNM-FTNL, a differentially private algorithm that achieves an instance-dependent regret
\begin{align}
O \left( 
    \frac{ \log (K)}{ \Deltamin} 
    + \frac{\log(K) \log \left( \Delta_{\max}/\max\left\{ \Deltamin, \epsilon, 1/T \right\} \right)}
           {\epsilon}
  \right) \quad,
  \label{eqn:full-info-inst-dep-upper}
\end{align}
where $\Deltamin = \min_{j \in [K]: \Delta_j >0} \Delta_j$ and $\Delta_{\max} = \max_{j \in [K]} \Delta_j$ are the minimum and maximum mean reward gap respectively. To our knowledge, this is the first algorithm in the differentially private full information setting for which an instance-dependent regret bound has been shown. As a simple consequence of our instance-dependent regret bound, we also show an instance-independent regret bound of order
\begin{align}
O \left( \sqrt{T \log (K)} + \frac{\log(K) \log(T)}{\epsilon} \right) \quad, 
 \end{align}   
matching a rate recently achieved by \cite{asi2022private}.

Similar to Anytime-Lazy-UCB and Lazy-DP-TS, the design of RNM-FTNL crucially relies on geometrically increasing epochs,
forgetting past observations, and lazy updating. The algorithm itself is simple: because all actions' rewards are observed in each round, we use a global epoch schedule, in each epoch playing for all rounds the ``noisy leader'' based on the statistics from only the previous epoch. Here, the noisy leader is simply the leader based on empirical means with added Laplace noise of a suitable scale. The analysis is more interesting. First, since the full information setting allows the algorithm's output to depend on complete reward vectors, it is tempting to increase the scale of each action's Laplace noise to order $K / \epsilon$; but this would spoil the regret bound. By adopting a Report-Noisy-Max (hence ``RNM'') view of our algorithm, we manage to keep the noise at a scale of $1/\epsilon$. Second, to obtain our bounds, our analysis relies on a ``grouping trick'' that partitions the actions into disjoint groups, with successive groups having geometrically larger gaps. Due to the epoch structure of our algorithm and its forgetfulness, it may seem natural to bound the regret separately for each epoch; this is precisely what \cite{asi2022private} do. By instead using the grouping trick (together with a more or less standard FTL analysis), we bound the regret separately for each group while tuning a group-dependent horizon globally (i.e., for all epochs). As we explain in Section~\ref{sec:full-info}, our style of analysis allows us to obtain better regret bounds in certain important problem instances.

In addition, for the same setting, 
we present the first 
instance-dependent\footnote{Similar to the bandit setting, here we mean dependence on the gaps but not necessarily the means.} regret lower bound, of order
\begin{align}
    \Omega \left(\frac{\log (K)}{\Deltamin} + \frac{\log (K)}{\epsilon} \right) \quad. 
    \label{eqn:full-info-inst-dep-lower}
\end{align}
 This lower bound also implies a minimax regret lower bound of order
\begin{align}
  \Omega \left(\sqrt{T \log(K)} + \frac{\log (K)}{\epsilon} \right)\quad.
\end{align}
The form of the upper bound \eqref{eqn:full-info-inst-dep-upper} may seem unsatisfying, especially since it does not match the lower bound \eqref{eqn:full-info-inst-dep-lower}. Intriguingly and as we further discuss in Section~\ref{sec:conclusion}, in the special case where all suboptimal arms have the same gap of $\Deltamin$, a refined argument gives an upper bound that matches the rate \eqref{eqn:full-info-inst-dep-lower} of the lower bound. That is, for the instance which intuitively is the hardest, our instance-dependent upper and lower bounds match. Yet, these bounds clearly do not match in general.

\section{Problem Setting}

We now formally present the stochastic online learning setting under both bandit and full information feedback, followed by formally introducing differential privacy.

\subsection{Bandit and Full Information Settings} \label{non-private online learning setting}

In a stochastic multi-armed bandit problem, 
there is 
a set $[K]$ of arms and a stochastic environment. At the beginning of each round $t = 1,2, \dotsc$, the environment generates a  reward vector $X(t) := \left(  X_1(t),X_2(t), \dotsc, X_K(t) \right)$, where each $X_{j}(t) \in \{0, 1\}$ is i.i.d.~over time from a fixed but unknown Bernoulli distribution with parameter $\mu_j \in (0,1)$.\footnote{Our learning algorithms can be generalized to any bounded reward setting by using the trick shown in \citet{agrawal2012analysis}.}
Simultaneously, the learning agent pulls an arm  $J_t \in [K]$. The learning agent then observes and obtains $X_{J_t}(t)$, the reward associated with the pulled arm. 
The full information setting is identical to the bandit setting, except that upon playing an action $J_t \in [K]$, the learning agent observes the complete reward vector $X(t)$.\footnote{The full information setting we consider is a specialization of DTOL, which (a) typically is studied under losses and (b) instead of playing a single action $J_t \in [K]$, the learning agent plays a probability distribution over all actions in each round. To unify the presentation with bandit setting, we use reward vectors and the learning agent plays a single action.} 

The goal of the learning agent is to pull arms (or take actions) sequentially to accumulate as much reward as possible over a finite number of $T$ rounds. Without loss of generality, we assume that the first arm is the unique optimal arm, i.e., $\mu_1 > \mu_j$ for all $j \ne 1$. Regret $\mathcal{R}(T)$ is used to measure the performance of the learning agent's decisions for pulling arms (or taking actions),  expressed as
\begin{equation}
\begin{array}{lll}
\mathcal{R}(T) &:= & 
T \cdot \mu_1 - \mathbb{E} \left[\sum\limits_{t=1}^{T} \mu_{J_t}  \right]  \quad,
\end{array}
\label{regret def}
\end{equation}
where the expectation is taken over the learning agent's sequential decisions $\left(J_1, J_2, \dotsc, J_T\right)$.

\subsection{Differentially Private Online Learning} \label{sec:dp-def}

More formally, let $X_{1:t}$ be the sequence of reward vectors up to round $t$ and let $X'_{1:t}$ be a neighbouring sequence which differs in at most one reward vector, 
say, at some round $s \leq t$. 
The output of an online learning algorithm is the sequence of decisions  $\left(J_1, J_2, \dotsc, J_t \right)$ made up to round $t$.  In the context of online learning, differential privacy is defined as follows \citep{agarwal2017price,sajed2019optimal}.

\begin{definition}[Differential privacy in  online learning]
A randomized online learning algorithm  $\mathcal{M}$ is $\epsilon$-DP if for any two reward vector sequences $X_{1:T}$ and $X'_{1:T}$ differing in at most one vector, for any decision set $\mathcal{D}_{1:t} \subseteq [K]^t$, we have  $\mathbb{P} \left\{\mathcal{M}({X}_{1:t}) \in \mathcal{D}_{1:t}  \right\} \le e^{\epsilon} \cdot \mathbb{P} \left\{\mathcal{M}({X}_{1:t}') \in \mathcal{D}_{1:t} \right\} $ holds for all $t \le T$ simultaneously.
\end{definition}
The above definition  follows the standard notion introduced in \citet{dwork2014algorithmic}. It can be interpreted in terms of 
the max-divergence $D_{\infty}(Q, Q') := \mathop{\max}_{y \in \text{supp}(Q')} \ln \left(\frac{P (Q=y)}{P(Q'=y)} \right)$  between two probability distributions $Q$ and $Q'$, where $Q$ denotes the output distribution (the distribution of the sequentially pulled arms) when working over the true reward sequences ${X}$ and $Q'$ denotes the output distribution when working over  ${X}'$.
  An $\epsilon$-differentially private algorithm $\mathcal{M} (\cdot)$ ensures that the max-divergence between $Q$ and $Q'$ is at most $\epsilon$, i.e., $\ln \left( \frac{P \left(Q=y\right)}{P \left(Q'=y \right)} \right) \le \epsilon$ for all possible outputs $y$. 
The value of  $\ln \left( \frac{P \left(Q=y\right)}{P \left(Q'=y \right)} \right)$ quantifies the privacy loss incurred when an adversary witnesses an outcome $y$.

\section{Background and Related Work} \label{sec:literature}

In this section, we describe some of the history of non-private and private learning in the bandit and full information settings.

\subsection{Learning Under Bandit Feedback}

\paragraph{Non-private learning.}

It will be useful to review the algorithm design principles behind UCB-based, Thompson Sampling-based, and elimination-based algorithms. 
Essentially, all these algorithms follow the principle of OFU, and rely on tracking the past empirical means and the number of pulls of each arm to make future decisions. The key difference between UCB-based and Thompson Sampling-based algorithms lies in the design of exploitation-vs-exploration mechanism. 

For UCB-based algorithms \citep{auer2002finite,audibert2009exploration,garivier2011kl,kaufmann2012bayesian,lattimore2018refining}, a UCB index that captures an exploitation component and an exploration component  typically is computed for each arm, and then the learning agent makes a decision based on all the UCB indices. Take the vanilla UCB1 \citep{auer2002finite}  for example. The UCB index is the sum of the empirical mean (exploitation component) and the square root of the multiplicative inverse of the number of pulls (exploration component). For Thompson Sampling-based algorithms \citep{kaufmann2012thompson,agrawal2017near,jin2021mots,jin2022finite,jin2023thompson,bian2022maillard,Opt2023,hu2020problem,hu2023optimistic}, instead of computing a deterministic UCB index for each arm, the learning agent first constructs a data-dependent distribution whose mean takes into account the empirical mean and whose variance takes into account the multiplicative inverse of the number of pulls. Note that the mean of the data-dependent distribution contributes to the exploitation while the spread of the data-dependent distribution contributes to the exploration. 
After constructing 
the data-dependent distribution, the learning agent
 draws a random sample  for each arm and makes a decision based on all the drawn samples. If the data-dependent distribution is designed to be a posterior distribution with a conceptually assumed prior distribution and reward likelihood function \citep{kaufmann2012thompson,agrawal2017near}, it recovers the original Thompson Sampling, one of the oldest Bayesian-inspired randomized learning algorithms. There are two original versions of Thompson Sampling based on the conceptually assumed prior distributions and reward likelihood functions. \cite{kaufmann2012thompson,agrawal2017near} derived instance-dependent bounds for the version where uniform distribution $\text{Beta}\left(1,1 \right)$ is assumed to be the prior distribution that models mean reward, that is, $\mu_j \sim \text{Beta}\left(1,1 \right)$, and the probability mass function of $\text{Bernoulli}(\mu_j)$ is assumed to be the reward likelihood function. \cite{agrawal2017near,Opt2023} derived instance-dependent bounds for the version where a zero-mean Gaussian distribution with infinite variance is assumed to be  the prior distribution, that is, $\mu_j \sim \mathcal{N} \left(0, \infty \right)$, and the probability dense function of $\mathcal{N} (\mu_j, 1)$ is assumed to be the reward likelihood function. Different from the UCB-based and Thompson Sampling-based algorithms that natively are anytime and operate on a per-round basis, elimination-based algorithms usually run in epochs. At the end of each epoch, such algorithms eliminate arms with poor empirical means. 
 
 For each of the aforementioned families of learning algorithms, there are algorithms \citep{auer2002finite,audibert2009exploration,auer2010ucb,agrawal2017near,jin2022finite,bian2022maillard,lattimore2018refining,kaufmann2012bayesian,Opt2023} that achieve instance-dependent regret bounds
that are sub-UCB, meaning a regret bound of the $\sum_{j: \Delta_j > 0}O \left(  \frac{\log(T)}{\Delta_j} \right)$ form.
In some cases \citep{garivier2011kl,honda2015non,agrawal2017near,kaufmann2012bayesian}, more refined problem-dependent regret bounds $\sum_{j: \Delta_j > 0} \frac{(1+\kappa)\ln(T) \Delta_j}{d_{\text{KL}}\left( \mu_j, \mu_1 \right)} + O \left(\frac{1}{\kappa}\right)$ have been shown for $\kappa > 0$ a universal constant that can be arbitrarily small and $d_{\text{KL}}\left(a,b\right)$ the KL-divergence between two Bernoulli distributions with parameters $0 < a, b < 1$.

\paragraph{Private learning.}

As shown in Proposition~2.1 in \citet{dwork2014algorithmic},  differential privacy is invariant to post-processing.  
In other words, if a learning algorithm takes the output of an $\epsilon$-differentially private  algorithm as input, then the output of this learning algorithm itself is also $\epsilon$-differentially private. In designing stochastic bandit algorithms with differential privacy, if the inner algorithm to compute the empirical mean is $\epsilon$-differentially private, then following from the post-processing property, we can claim the bandit  algorithm itself is  $\epsilon$-differentially private. This framework has indeed been used in the design of differentially private algorithms in previous work \citep{mishra2015nearly,tossou2016algorithms,sajed2019optimal,chen2020locally,azize2022privacy,chandak2023differentially,hu2022near}.

\citet{mishra2015nearly} presented the first  UCB-based and Thompson Sampling-based algorithms for differentially private stochastic bandits.  However, the regret bounds they derived are far from the ``private sub-UCB'' $\Omega \left( \sum_{j: \Delta_j > 0} \frac{\log(T)}{\Delta_j} + \frac{K \log(T)}{\epsilon} \right)$
instance-dependent regret lower bound due to
\citet{shariff2018differentially}.  The key reason resulting in the suboptimality is the usage of  the $T$-bounded Binary Mechanism \citep{dwork2010differential,chan2011private} to inject  random noise to mask the true empirical mean  of each arm.\footnote{\citet{dwork2010differential} call it the Tree-based Mechanism, but the core idea is identical.} Also, since  their algorithms need to know the time horizon $T$ in advance to calibrate the distribution of the needed noise, they cannot be anytime.
Later,  \citet{sajed2019optimal} proposed DP-SE, an optimal elimination-style algorithm.  The key idea in achieving optimality is to use fresh observations to compute the differentially private empirical means, thus minimizing the number of noise variables needed and the variance of the noise distribution. 
Our algorithms, Anytime-Lazy-UCB and Lazy-DP-TS, are the first optimal UCB-based and Thompson Sampling-based algorithms for differentially private stochastic bandits.

\subsection{Recent Advances in the Full Information Setting}

\paragraph{Non-private learning.}

For the non-private stochastic full information setting, the balance between exploitation-exploration is not necessary as the environment reveals the complete reward vector regardless of the action taken, i.e., pure exploitation is enough. \cite{vanerven2014follow} showed that FTL achieves the optimal $O \left(\log(K)/\Deltamin \right)$ regret bound. 
In addition, Decreasing Hedge --- which involves running Hedge \citep{freund1997decision} with a time-varying learning rate $\eta_t$ of order $\sqrt{\log(K)/t}$ --- enjoys an instance-dependent regret bound of the same order \citep{mourtada2019optimality}. The interest of the latter algorithm is that it simultaneously enjoys $O(\sqrt{T \log K})$ regret if the rewards are generated by an adaptive adversary.

\paragraph{Private learning.}

Let us recount the developments for the private stochastic full information setting. 
 \citet{jain2012differentially, guha2013nearly, agarwal2017price} derived regret bounds with adversarial losses and until very recently, the best known regret bound (in terms of the problem of prediction with expert advice) was $O \left( \sqrt{T\log(K)} + \frac{K\log(K)\log^2(T)}{\epsilon} \right)$ \citep{agarwal2017price}. Note that the term involving  $\epsilon$ is at least linear in $K$. 
 Very recently, \citet{asi2022private} showed an instance-\emph{independent} regret bound of order $O \left( \sqrt{T \log K} + \frac{\log(K) \log(T)}{\epsilon} \right)$.
 In light of the optimal regret in the non-private setting, it is natural to wonder whether any private learning algorithm can achieve $O \left(\frac{\log(K)}{ \Delta} + \frac{\log(K)}{ \epsilon} \right)$ regret. In this work, we  show the first instance-dependent regret lower bound stating that any $\epsilon$-differentially private 
 learning algorithm must incur $\Omega \left(\log(K)/ \min \left\{ \epsilon, \Delta \right\}  \right)$ regret. We also show the first minimax regret lower bound stating that any $\epsilon$-differentially private 
 learning algorithm must incur $\Omega \left(\sqrt{T \log(K)} + \log(K)/\epsilon \right)$ regret.
Our proposed algorithm, RNM-FTNL, achieves an instance-dependent regret bound of order 
$O \left( \frac{ \log K}{ \Deltamin} + \frac{\log(K) \log \left( \Delta_{\max}/\max\left\{ \Deltamin, \epsilon, 1/T \right\} \right)}{\epsilon} \right)$
as well as the same instance-independent regret bound as \citet{asi2022private}.\footnote{
A previous version of this work \citep[arXiv:2102.07929v1]{hu2021optimal} contained exactly the same algorithm, FTNL, as appears in the present paper. However, in that version we claimed a regret bound of order 
$O \left( \frac{\log K}{\min\{\Deltamin, \epsilon\}} \right)$ 
based on a proof with a mistake at one step; for details, see the discussion after the proof of Corollary~\ref{cor:ftnl-indep-regret}.} Therefore, up to a $\log(T)$ factor: \emph{(i)} our algorithm is instance-dependent optimal; and \emph{(ii)} 
both our algorithm and that of \cite{asi2022private} are minimax optimal. 

\section{Optimal Private Algorithms for Bandit Setting} \label{sec:bandit}

In this section, we present our proposed  learning algorithms.  Anytime-Lazy-UCB is presented in Section~\ref{sec: lazy-ucb} and  Lazy-DP-TS is presented in Section~\ref{sec: lazy-ts}.
Both of our proposed learning algorithms  are motivated by the differentially private instance-dependent regret lower bound shown in Corollary~15 of \cite{shariff2018differentially} and DP-SE of \cite{sajed2019optimal}. These bounds show that  a good learning algorithm can pull a suboptimal arm $j$  $O \left(\frac{\log(T)}{\Delta_j \cdot \min \left\{\Delta_j, \epsilon\right\}}\right)$ times.
Take a simple $K$-armed bandit instance with  identical gap $\Delta$ for example. If we knew $\Delta$ and $T$ in advance,  we could pull each arm $j$ exactly $O \left(\frac{\log(T)}{\Delta \cdot \min \left\{\Delta, \epsilon\right\}}\right)$ times to have a single batch of data associated with arm $j$.
Then, we add a $\text{Lap}\left(1/\epsilon \right)$  noise to the aggregated reward of this batch of data and compute the differentially private empirical mean of each arm $j$.\footnote{$\text{Lap} (b)$ denotes a random variable drawn from a Laplace distribution centered at $0$ with scale $b$. The probability density function of $\text{Lap} (b)$ can be found in Fact~\ref{pdf of lap} in the appendix.} With  all the differentially private empirical means  in hand, we can either construct upper confidence bounds (UCB-based) or draw random samples from data-dependent distributions (Thompson Sampling-based) to make future decisions. Since  now all suboptimal arms have been sufficiently observed,
the chance to pull any suboptimal arm in the future is very low.

\paragraph{Importance of Doubling-Trick.} As in reality, we cannot assume all the suboptimal arms have the same mean reward gap nor we know  $T$ in advance, we can use \emph{doubling trick}  to estimate $O \left( \frac{\log (T)}{\Delta_j \cdot \min \{\Delta_j, \epsilon\}} \right)$ as a whole instead of estimating $\Delta_j$ and $T$ separately. The estimation can be achieved by 
constructing  arm-specific epochs and processing data associated with arm $j$ in batches.  For each arm $j$, in the $r{\nth} $ epoch, our goal is to collect $2^{r}$ observations  to form a batch. Then, we add a $\text{Lap} (1/\epsilon)$ noise to the aggregated reward of these $2^{r}$ observations to compute the differentially private empirical mean of this batch of data. Once the batch size for some epoch hits $O \left( \frac{\log (T)}{\Delta_j \cdot \min \{\Delta_j, \epsilon\}} \right)$, the probability of pulling this suboptimal arm $j$ in the future is very low.

\paragraph{Notation for Algorithms Presented in Section~\ref{sec:bandit}.}
Let $r_j(t-1)$ denote the index of the most-recently processed epoch by the end of round $t-1$. Here, we say an epoch is processed if all the observations obtained in that epoch are used to update the (differentially private) empirical mean.
Based on the definition of $r_j(t-1)$, we know the number of observations $O_j(t-1)$ that have been  used to compute the (differentially private) empirical mean is exactly $ 2^{r_j(t-1)}$. Also,  we know that the learning algorithm has not had the chance to process the batch data collected in the $\left(r_j(t-1)+1\right){\nth}$ epoch. 
Let $\widehat{\mu}_{j, O_j(t-1)}$ denote the empirical mean based on the batch data collected in the $r_j(t-1){\nth}$ epoch. Let $\widetilde{\mu}_{j, O_j(t-1)} := \widehat{\mu}_{j, O_j(t-1)} + \frac{\text{Lap}(1/\epsilon)}{O_j(t-1)}$ denote the differentially private empirical mean.

\subsection{Anytime-Lazy-UCB}\label{sec: lazy-ucb}

 Algorithm~\ref{Optimal DP-UCB} presents the pseudo-code of Anytime-Lazy-UCB. It is an anytime learning algorithm as it does not require to input time horizon $T$. Line~\ref{ini}  is to initialize  the learning algorithm. We pull each arm  once to initialize $O_j(t-1) $, $r_j(t-1)$, and $\widetilde{\mu}_{j, O_j(t-1)}$.  Also, we initialize the number of unprocessed observations    $N_j = 0$ and the aggregated reward among unprocessed observations $ \Phi_j = 0$.
  Starting from round $t = K+1$, the learning algorithm first constructs the (differentially private) upper confidence bounds $\overline{\mu}_{j, O_j(t-1)}(t) := \widetilde{\mu}_{j, O_j(t-1)} + \sqrt{3\ln(t)/O_j(t-1)} + 3\ln(t)/(\epsilon \cdot O_j(t-1))$ for all $j \in [K]$  and then pulls the arm $J_t = \mathop{\arg\max}_{j \in [K]} \overline{\mu}_{j, O_j(t-1)}(t)$ with the highest differentially private upper confidence bound (Line~\ref{construct ucb}). Note that the added bonus  term $3\ln(t)/(\epsilon \cdot O_j(t-1))$   is to construct an upper confidence bound for the injected Laplace noise.\footnote{Our non-private upper confidence bound is based on UCB1 policy \citep{auer2002finite}. We can also construct upper confidence bounds based on other  policies such as UCB-V of \cite{audibert2009exploration} and KL-UCB of \cite{garivier2011kl}. AdaP-KLUCB of \cite{azize2022privacy} is based on KL-UCB.} Lines~\ref{process start} to \ref{process end}  present how to process the obtained observations. 
We increment the counter $N_{J_t}$ by one and update the corresponding aggregated reward $\Phi_{J_t}$. Then, we check whether it is the right time to update the differentially private empirical mean of the pulled arm $J_t$. If $N_{J_t}$  hits $2^{r_{J_t}+1}$,  we update the differentially private empirical mean of arm $J_t$ as $\left(\Phi_{J_t} + \text{Lap}\left(1/\epsilon \right)\right)/N_{J_t}$. Since now all the observations have been processed, we increment the index of the processed epoch by one and reset $N_{J_t}$ and $\Phi_{J_t}$ to process the future observations of the pulled arm $J_t$.
\begin{algorithm}[H]
    \centering
    \caption{Anytime-Lazy-UCB}
	\label{Optimal DP-UCB}
    \footnotesize
	\begin{algorithmic}[1]
	\STATE {\bf{Input}:} Arm set $\mathcal{A}$ and privacy parameter $\epsilon$

	\STATE \label{ini}{\bf{Initialization}:} For each arm $j \in [K]$, 
		 pull it once to initialize  $O_{j} \leftarrow 1 , r_j \leftarrow 0$,      $\widetilde{\mu}_{j, O_{j}}  \leftarrow X_{j}  + \text{Lap} \left( \frac{1}{\epsilon} \right)$,  $N_{j} \leftarrow 0$, and   $\Phi_{j} \leftarrow 0$

	\FOR {$t = K+1, K+2, \dotsc$} \label{post 7}

\STATE \label{construct ucb} Construct  
$\overline{\mu}_{j, O_j}(t) = \widetilde{\mu}_{j, O_j}  +\sqrt{\frac{3\ln(t)}{O_j}} + \frac{3\ln(t)}{\epsilon \cdot O_j }$ for all $j$ and pull arm  $J_t \in \mathop{\arg\max}_{j \in [K]} \overline{\mu}_{j, O_j}(t)$

\STATE Update $N_{J_t} \leftarrow N_{J_t}+ 1$ and $\Phi_{J_t} \leftarrow \Phi_{J_t}+ X_{J_t}(t)$    \label{process start}

	\IF {$N_{J_t} $ hits $2^{r_{J_t}+1}$ }
	\STATE Update $O_{J_t} \leftarrow 2^{r_{J_t}+1}$, $\widetilde{\mu}_{J_t, O_{J_t}} \leftarrow \frac{ \Phi_{J_t} +\text{Lap} \left(\frac{1}{\epsilon}\right)}{O_{J_t}}$, and $r_{J_t} \leftarrow r_{J_t} + 1$;  Reset $N_{J_t} \leftarrow 0, \Phi_{J_t} \leftarrow 0$.
	\ENDIF \label{process end}
	\ENDFOR	
\end{algorithmic}
\end{algorithm}
\paragraph{Remarks  for Algorithm~\ref{Optimal DP-UCB}.} 
(a) The number of observations  used to compute the differentially private empirical mean doubles each time, i.e., $O_j(t-1)$ takes values from $2^{r_j}, r_j \ge 0$. (b) The number of  noise  variables included in the differentially  private empirical means is always 1 and it is drawn from $\text{Lap}\left(1/\epsilon 
\right)$. (c) An observation can only be used at most once. Once it is processed, it will be discarded by the learning algorithm.

\paragraph{Anytime-Lazy-UCB vs AdaP-UCB of \citet{azize2022privacy}.} 
Essentially, AdaP-UCB in \citet{azize2022privacy} is equivalent to our Anytime-Lazy-UCB with the key modifications summarized as follows. 
(a) Another way  to implement the doubling trick is to use the accumulated amount of observations, as used in the Logarithmic Mechanism of \cite{chan2011private}.  That is, the differentially private aggregated reward is computed when the number of observations is accumulated to exactly $2^r, r \ge 0$. 
(b) Another difference is Anytime-Lazy-UCB injects a $\text{Lap}(1/\epsilon)$ noise to  the aggregated reward, whereas AdaP-UCB injects a noise to the empirical mean.  However, essentially, they are the same.

We now present a privacy guarantee and regret guarantees for Algorithm~\ref{Optimal DP-UCB}. 
\begin{theorem}
Algorithm~\ref{Optimal DP-UCB} is $\epsilon$-differentially private.
\label{DP: Optimal DP-UCB}
\end{theorem}
\begin{proof-sketch}
Intuitively, differential privacy holds because if two neighbouring reward vector sequences only differ in a single round $t$, then this difference can only be witnessed by the pulled arm  $J_t$ in round $t$. Ultimately, the affected observation $X_{J_t}(t)$ will be used only once via a noisy sum involving   $\text{Lap} \left( 1/\epsilon \right) $ noise. 
Hence, $\epsilon$-differential privacy holds. For completeness, we give a full, mathematical proof in Appendix~\ref{app: dp}.
\end{proof-sketch}
\begin{theorem}
The instance-dependent regret  of Algorithm~\ref{Optimal DP-UCB} is 
$O \left( \sum_{j: \Delta_j > 0}    \log (T)/ \min \{\Delta_j, \epsilon\}  \right)$ and the instance-independent regret  of Algorithm~\ref{Optimal DP-UCB} is 
$O \left( \sqrt{KT\log(T)} + K\log(T)/\epsilon \right)$.
\label{Regret: Optimal DP-UCB}
\end{theorem}
Anytime-Lazy-UCB is instance-dependent optimal due to the $\Omega \left( \max \left\{ \sum_{j: \Delta_j > 0}\frac{ \log (T)}{\Delta_j} , \frac{K \log (T)}{\epsilon} \right\} \right)$ instance-dependent regret lower bound  \citep{shariff2018differentially}. Anytime-Lazy-UCB is minimax optimal up to an extra $\log(T)$ factor due to the $\Omega \left(\max \left\{\sqrt{KT}, \frac{K}{\epsilon} \right\} \right)$ minimax regret lower  bound \citep{azize2022privacy}.

Now, we sketch the proof for the instance-dependent regret bound  and defer the full proof for Theorem~\ref{Regret: Optimal DP-UCB} to Appendix~\ref{app: Lazy-DP-UCB}.

\begin{proof-sketch}(of Theorem~\ref{Regret: Optimal DP-UCB})
Recall $r_j(T)$ is the index of the most-recent processed epoch  by the end of round $T$. 
Let $n_j(T)$ denote the number of pulls of arm $j$ by the end of round $T$. Note both $r_j(T)$ and $n_j(T)$ are random, and we have $n_j(T) \le \sum_{q=0}^{r_j(T)+1}2^q$.
Instead of upper bounding $\mathbb{E}\left[n_j(T)\right]$, we upper bound  $\mathbb{E}\left[\sum_{q=0}^{r_j(T)+1}2^q\right]$.  
Let $d_j := \left\lceil \log \left(   \frac{24 \ln (T)}{\Delta_j \cdot \min \{\Delta_j, \epsilon\}}  \right) \right\rceil $. Intuitively,  $2^{d_j}$ is exactly the ``differentially private sample complexity'' of a suboptimal arm $j$.
Let $\omega_j^{(r)} :=   \sum_{q = 0}^{r} 2^{q} = 2^{r+1} - 1 $ count the  number of pulls  needed for  the $r{\nth}$ epoch to be processed.
Then, we have  $ \mathbb{E} \left[ n_j(T) \right]   
     \le  \omega_j^{(d_j+1)} + \sum_{s= d_j+1}^{\log(T)}\mathbb{P} \left\{r_j(T) = s \right\}  \left(\sum_{q = d_j + 1}^{s+1} 2^q   \right)
  \le \omega_j^{(d_j+1)} + 
 \sum_{s= d_j+1}^{\log(T)}\mathbb{P} \left\{r_j(T) = s \right\}   \omega_j^{(s+1)}$. Note that the value of $r_j(T)$ can be at most $\log (T)$ based on doubling-trick.
When $s \ge d_j+1$, we know  the number of observations  used to compute the differentially private empirical mean is lower bounded by $2^{d_j} = O \left( \frac{\ln (T)}{\Delta_j \cdot \min \{\Delta_j, \epsilon\}} \right) $, the ``differentially private sample complexity''. 
This amount of observations ensures that the probability that arm $j$ is pulled in the future is very low. We formalize this intuition in Lemma~\ref{Tomorrow2} stating that $\mathbb{P} \left\{ r_j(T) = s  \right\} \cdot \omega_j^{(s+1)} \le O(1)$, which concludes the proof. 
\end{proof-sketch}

\subsection{Lazy-DP-TS} \label{sec: lazy-ts}

Before presenting Lazy-DP-TS, we first recap  one of the original versions of  Thompson Sampling: Thompson Sampling using Beta Priors for Bernoulli bandits \citep{agrawal2017near}. Algorithm~\ref{Non-private TS} restates this classical algorithm.  
Different from   UCB-based algorithms where the exploitation-exploration trade-off is achieved by  the constructed upper confidence bounds, Thompson Sampling simply maintains a posterior distribution over the mean reward of each arm based on  some simple reward likelihood function and its conjugate prior.
In Algorithm~\ref{Non-private TS} the used conjugate pair is a  Bernoulli likelihood and a $\text{Beta}(1,1)$ prior, i.e., the uniform distribution.
Then, the posterior distribution can be computed as $\text{Beta} \left(\widehat{\mu}_{j, O_j} \cdot O_j +1, \left(1-\widehat{\mu}_{j, O_j}   \right)\cdot O_j +1\right)$, where $O_j$ is the total number of observations by the end of round $t-1$ and $\widehat{\mu}_{j, O_j}$ is the empirical mean. In each round $t$, the learning agent draws a posterior sample $\theta_j(t)$ for each arm and pulls the arm $J_t = \mathop{\arg\max}_{j \in [K]}\theta_j(t)$ with the highest sample value.

\begin{algorithm}[H]
	\caption{Thompson Sampling using Beta Priors \citep{agrawal2017near}}
	\label{Non-private TS}
	    \footnotesize
	\begin{algorithmic}[1]
	\STATE {\bf{Input}:} Arm set $\mathcal{A}$ 
	\STATE {\bf{Initialization}:} Set $O_j \leftarrow 0$, $\widehat{\mu}_{j, O_j} \leftarrow 0$
	\FOR {$t = 1,2, \dotsc$} 

\STATE For each arm $j \in [K]$, draw $\theta_j(t) \sim \text{Beta}\left(\widehat{\mu}_{j, O_j} \cdot O_j + 1, \left(1-\widehat{\mu}_{j, O_j} \right)\cdot O_j + 1\right)$ 

\STATE Pull  $J_t \in \mathop{\arg\max}_{j \in [K]} \theta_j(t)$ 
 and update $O_{J_t} \leftarrow O_{J_t} + 1$ and the empirical mean $\widehat{\mu}_{J_t, O_{J_t}} $.
\ENDFOR	
\end{algorithmic}
\end{algorithm}
Algorithm~\ref{Optimal DP-TS} presents the pseudo-code of Lazy-DP-TS. Like Algorithm~\ref{Non-private TS}, it is still  anytime. The initialization (Line~\ref{ini-2}) is the same as the one used in Algorithm~\ref{Optimal DP-UCB}.    Starting from round $t = K+1$, we first compute the parameter $\overline{\mu}_{j, O_j(t-1)}(t) :=\text{Clip}_{[0,1]} \left( \widetilde{\mu}_{j, O_j(t-1)}  +  \frac{3\log(t)}{\epsilon \cdot O_j(t-1) } \right) \in [0,1]$ for the Beta posterior distribution, where $\text{Clip}_{[0,1]}(x) = \bm{1} \left\{x \in [0,1] \right\}x + \bm{1} \left\{x > 1 \right\}$ denotes a  clipping function that maps any $x \in \mathbb{R}$ to a real number in $[0,1]$. The purpose of 
adding a bonus term $\frac{3\log(t)}{\epsilon \cdot O_j(t-1) }$ to the differentially private empirical mean  $\widetilde{\mu}_{j, O_j(t-1)}$ is to construct an upper confidence bound on the empirical mean $\widetilde{\mu}_{j, O_j(t-1)}$, and the  clipping is to ensure that the   Beta distribution parameters are valid, that is, non-negative. 
Next, we draw a  sample 
$\theta_j(t) \sim \text{Beta} \left(\overline{\mu}_{j, O_j(t-1)}(t) \cdot O_j(t-1)+1, \left(1-\overline{\mu}_{j, O_j(t-1)}(t) \right) \cdot O_j(t-1)+1 \right)$ and pull the arm $J_t = \mathop{\arg\max}_{j \in \mathcal{A}}\theta_j(t)$ with the highest  sample value. Finally, we process the obtained observations exactly as we did in Algorithm~\ref{Optimal DP-UCB}.
\begin{algorithm}[H]
  \centering
	\caption{Lazy-DP-TS}
 \footnotesize
	\label{Optimal DP-TS}
	\begin{algorithmic}[1]
	\STATE {\bf{Input}:} Arm set $\mathcal{A}$ and privacy parameter $\epsilon$
\STATE \label{ini-2}{\bf{Initialization}:} For each arm $j$,  pull it once to initialize  $O_{j} \leftarrow 1 $, $r_j \leftarrow 0$,    $\widetilde{\mu}_{j, O_{j}}  \leftarrow X_{j}  + \text{Lap} \left( \frac{1}{\epsilon} \right)$, $N_{j} \leftarrow 0$, and   $\Phi_{j} \leftarrow 0$
	
\FOR {$t = K+1, K+2, \dotsc$}

\STATE \label{post 11}Draw $\theta_j(t) \sim \text{Beta}\left(\overline{\mu}_{j, O_j} \cdot O_j + 1, \left(1-\overline{\mu}_{j, O_j} \right)\cdot O_j + 1\right)$, where
$\overline{\mu}_{j, O_j} =\text{Clip}_{[0,1]} \left( \widetilde{\mu}_{j, O_j}  +  \frac{3\log(t)}{\epsilon \cdot O_j } \right)$

\STATE Pull arm  $J_t \in \mathop{\arg\max}_{j \in [K]} \theta_j(t)$ \label{post 22}

\STATE \label{process 1} Update $N_{J_t} \leftarrow N_{J_t}+1$ and $\Phi_{J_t} \leftarrow \Phi_{J_t} + X_{J_t}(t)$

\IF {$N_{J_t} $ hits $2^{r_{J_t}+1}$ }
	\STATE Update $O_{J_t} \leftarrow 2^{r_{J_t}+1}$, $\widetilde{\mu}_{J_t, O_{J_t}} \leftarrow \frac{ \Phi_{J_t} +\text{Lap} \left(\frac{1}{\epsilon}\right)}{O_{J_t}}$, and $r_{J_t} \leftarrow r_{J_t} + 1$;  Reset $N_{J_t} \leftarrow 0, \Phi_{J_t} \leftarrow 0$.
	\ENDIF \label{process 2}
\ENDFOR	
\end{algorithmic}
\end{algorithm}

We now present a privacy guarantee and regret guarantees  for Algorithm~\ref{Optimal DP-TS}.

\begin{theorem}
Algorithm~\ref{Optimal DP-TS} is $\epsilon$-differentially private.
\label{DP: Optimal DP-TS}
\end{theorem}
\begin{proof}
 Lines \ref{process 1} to \ref{process 2} in Algorithm~\ref{Optimal DP-TS} are for computing the differentially private empirical mean and Lines~\ref{post 11} to \ref{post 22} can be viewed as post-processing. Now, we only need to show that the inner algorithm (Lines \ref{process 1} to \ref{process 2}) to compute the empirical mean is $\epsilon$-differentially private.
Suppose reward sequences ${X}$ and ${X}'$ differ in round $h$, i.e., the reward vectors $X_h = \left(X_1(h), \dotsc, X_K(h) \right)$ and $X'_h = \left(X'_1(h), \dotsc, X'_K(h) \right)$  are not the same. The changing from $X_h$ to $X'_h$ can only impact arm $J_h$. Let $J_h = j$. 
Since arm $j$'s differentially private means are always based on fresh observations, the changing from $X_h$ to $X'_h$ can only impact the differentially private aggregated reward  of arm $j$ once and by at most 1. By adding a noise  variable drawn from $\text{Lap} \left(\frac{1}{\epsilon} \right)$ to $\Phi_{j}$, from Theorem~3.6 in \citet{dwork2014algorithmic}, we know that the inner algorithm is $\epsilon$-differentially private, which concludes the proof.
\end{proof}
\begin{theorem}
The instance-dependent regret  of Algorithm~\ref{Optimal DP-TS}  is 
$  O \left(  \sum_{j: \Delta_j > 0}
   \log(T)/\min \left\{\Delta_j, \epsilon\right\}\right) $ and the instance-independent regret  of Algorithm~\ref{Optimal DP-TS}  is $O \left( \sqrt{KT\log(T)} + K\log(T) /\epsilon\right)$.
\label{Regret: Lazy-DP-TS}
\end{theorem}
We now sketch the instance-dependent regret bound proof. The full proof   is deferred to Appendix~\ref{app: ts}. The proof for the instance-independent regret bound  is the same as the one for Anytime-Lazy-UCB shown in Appendix~\ref{app: isolated 2}.
Let $\mathcal{F}_{t-1}  $ collect all the history information by the end of round $t-1$, consisting of the pulled arms, the obtained rewards, and the added noise. Define $\mathcal{F}_0 = \left\{ \right\}$. We first define two high probability events  motivated by the concentration inequalities shown in Facts~\ref{Hoeffding} and \ref{fact 1}.
Define event $C_j(t-1)$ as $ \left\{\left|\widehat{\mu}_{j, O_j(t-1)} - \mu_j \right| \le \sqrt{3\log(t)/O_j(t-1)}\right\}$. Define  $G_j(t-1)$ as the event that the confidence interval of the Laplace random variable holds, i.e., $G_j(t-1) := \left\{\left|\widetilde{\mu}_{j, O_j(t-1)} - \widehat{\mu}_{j, O_j(t-1)} \right|\le 3\log(t)/\left(\epsilon \cdot O_j(t-1)\right) \right\}$.  For each suboptimal arm $j$, we let $y_j := \mu_1 - \Delta_j/6$ and  define event $E_j^{\theta}(t) := \left\{\theta_j(t) \le y_j  \right\}$. Let $\overline{C_j(t-1)}$, $\overline{G_j(t-1)}$ and $\overline{E_j^{\theta}(t)}$ denote the complementary events of $C_j(t-1)$, $G_j(t-1)$, and $E_j^{\theta}(t) $, respectively.

\

\begin{proof-sketch}(of Theorem~\ref{Regret: Lazy-DP-TS})
We directly upper bound the expected number of pulls $\mathbb{E} \left[n_j(T) \right]$ a suboptimal arm $j$ by the end of round $T$. We decompose $\mathbb{E} \left[n_j(T) \right]$ as
\begin{equation}
    \begin{array}{lll}
 \mathbb{E} \left[n_j(T) \right] & = & \sum\limits_{t = K+1}^{T}\mathbb{E} \left[ \bm{1} \left\{ J_t = j \right\}  \right] +1 \\
 
 &   \le & \underbrace{\sum\limits_{t = K+1}^{T}\mathbb{P}  \left\{J_t = j, C_j(t-1), G_j(t-1), \overline{E_j^{\theta}(t)} \right\}}_{=:\omega_1} 
   +  \underbrace{\sum\limits_{t = K+1}^{T} \mathbb{P}  \left\{J_t = j,   E_j^{\theta}(t), G_1(t-1) \right\}}_{=:\omega_2} \\
   & + & \underbrace{\sum\limits_{t = K+1}^{T} \mathbb{P}   \left\{ \overline{C_j(t-1)} \right\} +  \mathbb{P} \left\{ \overline{G_j(t-1)} \right\} +  \mathbb{P} \left\{ \overline{G_1(t-1)} \right\}}_{=: \omega_3} +1\quad.
    \end{array}
    \label{cloud 22}
\end{equation}
Since $\overline{C_j(t-1)}, \overline{G_j(t-1)}$ and $\overline{G_1(t-1)}$ are  low probability events, we have  $\omega_3 = O(1)$.

\paragraph{Upper bounding $\omega_1$.} We partition all $T$ rounds into multiple intervals based on whether arm $j$'s differentially private empirical mean is updated. Let $\lambda^{(j)}_{r}$ denote the round  by the end of which  arm $j$'s differentially private empirical mean will be updated based on the $2^{r}$ 
  observations obtained in the $r{\nth}$ epoch.
Let $d_j := \left\lceil \log \left( \frac{72 \log(T)}{\Delta_j \cdot \min \left\{\epsilon, \Delta_j \right\}} \right) \right\rceil$. 
For all the rounds up to (and including) round $\lambda^{(j)}_{d_j}$,  the total number of pulls is at most $\sum_{r=0}^{d_j} 2^r = O \left( \frac{\log(T)}{\Delta_j \cdot \min \left\{\epsilon, \Delta_j \right\}} \right)$. For all the rounds starting from round  $\lambda^{(j)}_{d_j}+1$, only concentration bounds are needed. Intuitively, if both of the high probability events $C_j(t-1)$ and $G_j(t-1)$ hold, and the number of observations to compute the differentially private empirical mean is enough, i.e., greater than $2^{d_j}$,  the probability to draw $\theta_j(t)$ greater than $y_j$ is only of order  $1/T$. Lemma~\ref{lazy dp-ts high term2} in Appendix~\ref{app: ts}  states that $\omega_1 = O \left( \frac{\log(T)}{\Delta_j \cdot \min \left\{\epsilon, \Delta_j  \right\}}\right)$.

\paragraph{Upper bounding $\omega_2$.} It is  challenging  to upper bound $\omega_2$. We  do it in  four steps. 

\textbf{Step 1:} Similar to Lemma~2.8 in  \citet{agrawal2017near}, we develop Lemma~\ref{Martingale event} to link the probability to pull a suboptimal arm $j$ to the probability of pulling the optimal arm $1$. Let $\mathbb{P}_{t-1}\left\{ \cdot \right\} := \mathbb{P}\left\{ \cdot \mid \mathcal{F}_{t-1} = F_{t-1}\right\}$.
\begin{lemma}\label{Martingale event}
For any suboptimal $j$ and any instantiation $F_{t-1}$ of $\mathcal{F}_{t-1}$, we have

$   \mathbb{P}_{t-1}   \left\{J_t = j ,  E^{\theta}_j(t) , G_1(t-1) \right\}
    \le  \frac{\mathbb{P}_{t-1}   \left\{\theta_1(t) \le y_j   \right\}  }{\mathbb{P}_{t-1}   \left\{\theta_1(t) > y_j    \right\} } \cdot \mathbb{P}_{t-1}   \left\{J_t = 1 ,   E^{\theta}_j(t), G_1(t-1)  \right\}$.
\end{lemma}
After using Lemma~\ref{Martingale event}, we have   $\omega_2 \le \sum_{t = K+1}^{T} \mathbb{E} \left[ \frac{\mathbb{P} \left\{\theta_1(t) \le y_j \mid \mathcal{F}_{t-1}\right\}}{\mathbb{P} \left\{\theta_1(t) > y_j \mid \mathcal{F}_{t-1} \right\}} \bm{1}\left\{J_t = 1,    G_1(t-1) \right\} \right]$. 

\textbf{Step 2:} We partition all $T$ rounds into multiple intervals based on whether the differentially private empirical mean of the optimal arm $1$ is updated. Let $\lambda^{(1)}_r$ be the round such that by the end of this round, $2^r$ observations obtained in the $r{\nth}$ epoch associated with the optimal arm $1$ will be used to compute the differentially private empirical mean. After the partition, we have
$\omega_2 \le \sum_{r=0}^{\log(T)}\mathbb{E} \left[\sum_{t = \lambda^{(1)}_r +1}^{\lambda^{(1)}_{r+1}} \frac{\mathbb{P} \left\{\theta_1(t) \le y_j \mid \mathcal{F}_{t-1}\right\}}{\mathbb{P} \left\{\theta_1(t) > y_j \mid \mathcal{F}_{t-1} \right\}} \bm{1}\left\{J_t = 1,    G_1(t-1) \right\} \right]$.

\textbf{Step 3:} Despite $\widetilde{\mu}_{1,O_1(t-1)}$ staying the same in all rounds $t \in \left\{\lambda^{(1)}_{r}+1, \dotsc, \lambda^{(1)}_{r+1} \right\}$, the posterior distributions for $\theta_1(t)$  still change due to the added bonus term. The change of the  distributions  implies  the change of the coefficient $\frac{\mathbb{P} \left\{\theta_1(t) \le y_j \mid \mathcal{F}_{t-1}\right\}}{\mathbb{P} \left\{\theta_1(t) > y_j \mid \mathcal{F}_{t-1} \right\}}$.
To tackle this challenge we construct another Beta distribution and use the property of first-order stochastic dominance.
Let $\theta'_{1, O_1(t-1)} \sim \text{Beta} \left(\widehat{\mu}_{1,O_1(t-1)} \cdot O_1(t-1) +1, (1-\widehat{\mu}_{1,O_1(t-1)}) \cdot O_1(t-1) +1 \right)$ denote a random variable drawn from a Beta distribution based on the (non-private) empirical mean $\widehat{\mu}_{1,O_1(t-1)}$. For any instantiation $F_{t-1}$ of $\mathcal{F}_{t-1}$ such that event $G_1(t-1)$ is true, we have  $    \frac{\mathbb{P}_{t-1}   \left\{\theta_1(t) \le y_j   \right\}  }{\mathbb{P}_{t-1}   \left\{\theta_1(t) > y_j  \right\} }  \le  \frac{\mathbb{P}_{t-1}   \left\{\theta'_{1, O_1(t-1)}\le y_j   \right\}  }{\mathbb{P}_{t-1}   \left\{\theta'_{1, O_1(t-1)} > y_j  \right\} }$,
which implies $    \omega_2  \le \sum_{r=0}^{\log(T)} \mathbb{E} \left[ \frac{\mathbb{P} \left\{\theta'_{1,2^r} \le y_j \mid \mathcal{F}_{\lambda^{(1)}_{r}}\right\}}{\mathbb{P} \left\{\theta'_{1,2^r} > y_j \mid \mathcal{F}_{\lambda^{(1)}_{r}} \right\}}  \right]2^{r+1}$.

\textbf{Step 4:} Now, we only need to deal with $\theta'_{1, 2^r} \sim \text{Beta} \left(\widehat{\mu}_{1,2^r} \cdot 2^r +1, (1-\widehat{\mu}_{1,2^r}) \cdot 2^r +1 \right)$, which has nothing to do with differential privacy. 
Thanks to Lemma~2.9 in \citet{agrawal2017near}, we can use their results directly.
\begin{lemma}[Restatement of Lemma~2.9 \citep{agrawal2017near}]
Let $\tau_s^{(1)}$ be the round when the $s{\nth}$ pull of the optimal arm $1$ occurs and let $\theta_{1,s} \sim \text{Beta}\left(\widehat{\mu}_{1,s} \cdot s+1, \left(1- \widehat{\mu}_{1,s}\right) \cdot s +1\right)$, where $\widehat{\mu}_{1,s}$ is the empirical mean of   the optimal arm $1$ among $s$ observations. We have
\begin{equation*}
    \begin{array}{l}
        \mathbb{E} \left[ \frac{\mathbb{P} \left\{\theta_{1,s} \le y_j \mid \mathcal{F}_{\lambda^{(1)}_{s}}\right\}}{\mathbb{P} \left\{\theta_{1,s} > y_j \mid \mathcal{F}_{\lambda^{(1)}_{s}} \right\}}  \right] \le
        
     \begin{cases}
         
         \begin{array}{ll}

 \frac{3}{\mu_1 - y_j}, & \quad s < \frac{8}{\mu_1 - y_j},
 \\
              \Theta \left( e^{-0.5s(\mu_1-y_j)^2}+\frac{e^{-2s(\mu_1-y_j)^2}}{(s+1)(\mu_1-y_j)^2}+\frac{1}{e^{0.25s(\mu_1-y_j)^2}-1}\right), &\quad s \ge \frac{8}{\mu_1-y_j}.  
         \end{array}
        
     \end{cases}   
     \end{array}
\end{equation*}
\end{lemma}
Let $d_1 :=  \log \left(\frac{8}{\mu_1-y_j} \right)$. Then, $\omega_2 \le \sum\limits_{r=0}^{\lfloor d_1 \rfloor} \mathbb{E} \left[ \frac{\mathbb{P} \left\{\theta'_{1,2^r} \le y_j \mid \mathcal{F}_{\lambda^{(1)}_{r}}\right\}}{\mathbb{P} \left\{\theta'_{1,2^r} > y_j \mid \mathcal{F}_{\lambda^{(1)}_{r}} \right\}}  \right]2^{r+1}
     +  \sum\limits_{r= \lceil d_1 \rceil }^{\log (T)} \mathbb{E} \left[ \frac{\mathbb{P} \left\{\theta'_{1,2^r} \le y_j \mid \mathcal{F}_{\lambda^{(1)}_{r}}\right\}}{\mathbb{P} \left\{\theta'_{1,2^r} > y_j \mid \mathcal{F}_{\lambda^{(1)}_{r}} \right\}}  \right]2^{r+1}$,
which can be upper bounded by $O \left(1/\Delta_j^2\right) $ and $O \left(\ln(T)/\Delta_j^2\right) $, separately.
\end{proof-sketch}

\section{Nearly Matching Upper and Lower Bounds Under Full Information} \label{sec:full-info}

In this section, we discuss  differentially private learning algorithms for the full information setting.  We still use the ideas of doubling trick and forgetfulness presented in Section~\ref{sec:bandit} to process the obtained observations. Different from the bandit setting where only the reward of the pulled arm can be observed, under full information feedback, a complete reward vector can be seen regardless of which action is played. Intuitively, we may need more noise to design an $\epsilon$-differentially private learning algorithm under full information feedback.
Since the total privacy budget is $\epsilon$, a naive algorithm, Follow-the-Noisy-Leader (FTNL), is to make  each action's inner algorithm that computes the empirical mean  $(\epsilon/K)$-differentially private. Then, the learning agent takes the action with the highest differentially private empirical mean. However, an $O \left(\max \left\{\frac{\log(K)}{\Delta}, \frac{\log(K)}{\epsilon/K} \right\}  \right)$ regret bound is expected for this naive algorithm, which may not good when the action space is large.

Despite the downside that a complete reward vector can be seen, a nice thing of the full information setting is exploration is not needed. Thus, it is unnecessary to track the detailed differentially private empirical means themselves. Instead, we can simply track the \emph{index} of the action with the highest differentially private empirical mean. 
To get rid of the linear dependency on $K$ for the regret  involving  $\epsilon$, we introduce Report Noisy Max (RNM) of \cite{dwork2014algorithmic} in  FTNL. Algorithm~\ref{Full Information Algorithm} presents the pseudo-code of RNM-FTNL. The algorithm progresses in epochs indexed by $s$ and the action with the highest differentially private aggregated reward is recommended at the end of each epoch. In epoch $s$, the action  recommended at the end of epoch $s-1$ denoted as $J^{(s-1)}$ will be played $2^s$ times. 
Note that epoch $0$ only has one round. For initialization purpose only, we can play any of the actions once.
We initialize $\Phi_j = 0$ to track the aggregated reward of future observations of action $j$ and at the end of epoch $s$, the action $J^{(s)} =  \mathop{\arg\max}_{j \in [K]} \left(\Phi_j + \text{Lap} (1/\epsilon) \right)$ with the highest differentially private aggregated reward is recommended, where $\Phi_j$ denotes the aggregated reward of $2^s$ observations of action $j$. 

 \begin{algorithm}[H]
    \centering
    \begin{algorithmic}[1]
    \caption{RNM-FTNL}
   \label{Full Information Algorithm}
       \footnotesize
     \STATE \textbf{Input:} 
Action set $[K]$ and privacy parameter $\epsilon$ 
\STATE \textbf{Initialization:} Play the first action  once to initialize $J \leftarrow \mathop{\arg\max}_{j \in [K]} \left(X_j + \text{Lap}\left( 1/\epsilon\right) \right)$,  
 $s \leftarrow 1$,  and $\Phi_j \leftarrow 0$ for all $j$ \label{ini-3}
    \WHILE{still have rounds left}

     \STATE Play $J $ for $2^s$ times 

     \STATE Update $J \leftarrow \mathop{\arg\max}_{j \in [K]} \left( \Phi_j + \text{Lap} \left(\frac{1}{\epsilon}\right)\right)$, where $\Phi_j$ is the aggregated reward  among $2^s$ observations
     
     \STATE Set $s \leftarrow s+1$ and reset $\Phi \leftarrow 0$ for all $j \in [K]$.
     \ENDWHILE
     \end{algorithmic}
 \end{algorithm}

 We now present  privacy and regret guarantees for Algorithm~\ref{Full Information Algorithm}.
\begin{theorem}
Algorithm~\ref{Full Information Algorithm} is $\epsilon$-differentially private.
\label{full information DP theorem}
\end{theorem}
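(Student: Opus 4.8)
The plan is to reduce the claim to the privacy of a single invocation of Report Noisy Max and then exploit the forgetful, epoch-based structure so that a single differing reward vector is ``charged'' to exactly one epoch. First I would record three structural facts about Algorithm~\ref{Full Information Algorithm}. (i) In the full-information setting the per-epoch data $X(s)$ consists of the \emph{complete} reward vectors of the rounds in epoch $s$; since the full vector is observed no matter which action is played, $X(s)$ is a deterministic function of the input sequence and does not depend on any of the outputs $J^{(0)},J^{(1)},\dots$. (ii) The epoch boundaries are data-independent (epoch $s$ always spans $2^s$ rounds), so the collections $X(0),X(1),X(2),\dots$ form a fixed partition of the rounds into disjoint blocks. (iii) The entire sequence of decisions played over rounds $1,\dots,T$ is a deterministic function of $\bigl(J^{(0)},J^{(1)},\dots\bigr)$, because in epoch $s$ the algorithm simply repeats $J^{(s-1)}$ for $2^s$ rounds. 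By the post-processing property, it therefore suffices to prove that the law of the output vector $\bigl(J^{(0)},J^{(1)},\dots\bigr)$ is $\epsilon$-differentially private.

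Next I would carry out a parallel-composition argument. Because each $J^{(s)}$ is produced by RNM (Algorithm~\ref{Report Noisy Max}) from $X(s)$ using \emph{fresh}, independent $\mathrm{Lap}(1/\epsilon)$ noise, and the $X(s)$ are disjoint by (i)--(ii), the joint law of $\bigl(J^{(0)},J^{(1)},\dots\bigr)$ factorizes as a product of the individual RNM output laws (with $J^{(0)}=1$ deterministic). Now let $S$ and $S'$ be neighbouring sequences differing in a single reward vector at some round $t^\star$. By (ii) this round lies in a unique epoch $s^\star$, so $X(s)=X'(s)$ for every $s\neq s^\star$ while $X(s^\star)$ and $X'(s^\star)$ differ in exactly one reward vector. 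In the product over epochs, every factor with $s\neq s^\star$ is identical under $S$ and $S'$ and cancels in the likelihood ratio, leaving only the epoch-$s^\star$ factor. Hence for any set $\mathcal{D}$ of decision sequences,
\begin{equation}
\frac{\mathbb{P}\{\Pi(S)\in\mathcal{D}\}}{\mathbb{P}\{\Pi(S')\in\mathcal{D}\}} \le \max_{j\in\mathcal{A}} \frac{\mathbb{P}\{\mathrm{RNM}(X(s^\star),\epsilon)=j\}}{\mathbb{P}\{\mathrm{RNM}(X'(s^\star),\epsilon)=j\}} \le e^{\epsilon},
\end{equation}
provided a single RNM call is $\epsilon$-differentially private with respect to changing one reward vector of its input.

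It remains to establish that last per-epoch guarantee, which is exactly Report Noisy Max of \cite{dwork2014algorithmic}. Changing one reward vector in $X(s)$ changes each empirical mean $\widehat{\mu}_{j,2^s}=2^{-s}\sum_{t} X_j(t)$ by at most $1/2^s$ in absolute value, because every $X_j(t)\in[0,1]$; equivalently, reporting the maximizing index of $\widehat{\mu}_{j,2^s}+\mathrm{Lap}(1/\epsilon)/2^s$ is the same as reporting the maximizing index of the sums $\sum_t X_j(t)+\mathrm{Lap}(1/\epsilon)$, each of sensitivity $1$. The Report Noisy Max mechanism then yields the $e^{\epsilon}$ bound by the standard argument that fixes all noise variables except the one attached to the reported index and bounds the shift of the winning threshold. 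I expect the main obstacle to be conceptual rather than computational: one must argue rigorously that forgetfulness turns what looks like sequential composition across (up to $\log T$) epochs into \emph{parallel} composition, so that the privacy loss is $\epsilon$ once rather than $\epsilon$ per epoch --- this is precisely the step that fails if any observation were reused across RNM calls. A secondary point requiring care is tracking the sensitivity constant in the substitution neighbouring model, so that the Laplace scale in RNM is set to make each individual call exactly $\epsilon$-private.
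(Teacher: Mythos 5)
Your proposal is correct and follows essentially the same route as the paper's proof: both factor the law of the decision sequence over epochs (the paper writes this directly as a product of per-epoch decision blocks, you route through post-processing of the $J^{(s)}$ outputs and call it parallel composition), both observe that forgetfulness and data-independent epoch boundaries ensure the single differing reward vector affects only the one epoch's factor, and both bound that factor by $e^{\epsilon}$ via Claim 3.9 of \cite{dwork2014algorithmic} on Report Noisy Max. The only additions on your side are presentational — making explicit that $X(s)$ is decision-independent in the full-information setting and spelling out the sensitivity-$1$ reduction for RNM — which the paper leaves implicit.
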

\begin{proof-sketch}
Let $X_{1:T}$ be the original reward vector sequence and $X'_{1:T}$ be a neighbouring reward vector sequence that differs in round $t$. Let epoch $s$ be the epoch including round $t$. Clearly, the decisions made in all rounds prior to and including epoch $s$ are the same under either reward vector sequence. Also, from the forgetfulness property of the algorithm, the decisions made in all rounds from  epoch $s+2$  are also the same under either reward sequence. It remains to consider the action played in all rounds in epoch $s+1$, that is, the action recommended by Report Noisy Max (RNM) at the end of epoch $s$. 
We will show the distribution of this action  
is ``almost'' the same whether working over ${X}_{1:T}$ or  ${X}'_{1:T}$. Recall that $J^{(s)}$ indicates the action recommended by RNM at the end of epoch $s$ when working over ${X}_{1:T}$. Let $J'^{(s)} $ indicate the action recommended by RNM when working over ${X}'_{1:T}$. As $\text{Lap} \left( 1/\epsilon \right)$ noise is injected to each action $j$'s aggregated reward, from Claim 3.9 (the RNM algorithm is $\epsilon$-differentially private) of \cite{dwork2014algorithmic}, for any $\sigma \in [K]$, we have 
 $P \left(  J^{(s)} = \sigma \right) 
      \le   e^{\epsilon}  \cdot P\left(  J'^{(s)} = \sigma \right)$,
  which concludes the proof. We defer the full, mathematical proof to Appendix~\ref{app:full-info-dp}.
\end{proof-sketch}

Before presenting our regret upper bound for RNM-FTNL, we first introduce a decomposition of the set of actions into brackets (or levels) of actions whose suboptimality gap is geometrically increasing. To this end, for $\lev = 1, 2, \ldots$, define
\begin{equation}
\begin{array}{l}
\J_\lev := 
  \left\{ 
      j \in {K} \colon 2^{\lev-1} \cdot \Deltamin \leq  \Delta_j < 2^\lev \cdot \Deltamin
  \right\} \quad. 
\end{array}\label{eqn:bracket}
\end{equation}

In addition, let $\Delta_{(\lev)} := 2^{\lev-1} \Deltamin$; hence, $j \in \J_\lev$ implies that $\Delta_{(\lev)} \leq \Delta_j< 2 \Delta_{(\lev)}$.

\begin{theorem} \label{thm:ftnl-regret}
The instance-dependent reget of RNM-FTNL is
\begin{align*}
O \left( 
  \frac{\log (K)}{\Deltamin} 
  + \sum_{\lev \geq 1 \colon |\J_\lev| > 0, \Delta_{(\lev)} > \epsilon} \frac{1 + \log |\J_\lev|}{\epsilon}
\right) 
= O \left(
    \frac{\log(K)}{\Deltamin} 
    + \frac{\log(K) \log \left( \frac{\Delta_{\max}}{\max\left\{ \Deltamin, \epsilon, \frac{1}{T} \right\}} \right)}{\epsilon}
    \right) .
\end{align*}
\end{theorem}

Before sketching a proof of Theorem~\ref{thm:ftnl-regret}, let us explore the implications of this result. 
The following corollary illustrates the power of the theorem with some examples.
\begin{corollary} \label{cor:ftnl-regret}
The following are true: (a) If all $\Delta_j \leq \epsilon$,  RNM-FTNL has regret 
at most $O \left( \log(K)/\Deltamin \right)$. (b)
If all $\Delta_j \in [\Deltamin, 2 \Deltamin]$,  RNM-FTNL has regret 
at most $O \left( \frac{\log(K)}{\min\{ \Deltamin, \epsilon\} } \right)$.
\end{corollary}
\begin{proof}
For \emph{(a)}, note that if $j \in \J_\lev$, then $\Delta_{(\lev)} \leq \Delta_j \leq \epsilon$. Therefore, the second term in the first bound of Theorem~\ref{thm:ftnl-regret} vanishes. The bound \emph{(b)} is immediate from the second bound of Theorem~\ref{thm:ftnl-regret} since $\frac{\Delta_{\max}}{\Deltamin} = O(1)$.\end{proof}

By modifying the proof of Theorem~\ref{thm:ftnl-regret} slightly, we have  the following instance-independent regret bound.
\begin{corollary}[Instance-independent bound] \label{cor:ftnl-indep-regret}
For any problem instance, RNM-FTNL's regret is at most
$O \left( \sqrt{T \log K} + \log(K) \log(T)/\epsilon \right)$.
\end{corollary}
\begin{proof}
We use a well-known idea that appeared at least as early as a work of \citet[discussion after Theorem 2]{audibert2009minimax}. 
Let $ \Delta^* := \sqrt{\log K/T}$ be the critical gap.  
For all epochs in which the learning algorithm takes actions $j$ satisfying $\Delta_j < \Delta^*$, the total contribution to the regret can be at most $T\Delta^* = \sqrt{T \log K}$. 
Next, we bound the regret contribution from actions $j$ for which $\Delta_j \geq \Delta^*$. To do so, we may simply adapt the proof of Theorem~\ref{thm:ftnl-regret} to consider only these suboptimal actions, so that the effective $\Deltamin$ becomes $\Delta^*$. The second bound of Theorem~\ref{thm:ftnl-regret} then bounds the regret contribution as $O \left(
\frac{\log(K)}{\Delta^{*}} 
+ \frac{\log(K) \log \left( \Delta_{\max} / \max\left\{ \Delta^{*}, \epsilon, 1/T \right\} \right)}{\epsilon}
\right) = O \left( \sqrt{T \log (K)} + \frac{\log(K) \log(T)}{\epsilon} \right)$.
\end{proof}

A previous version of this work \citep[arXiv:2102.07929v1]{hu2021optimal} contained exactly the same algorithm as we present here. In that version, we claimed that the regret is
$O \left( \frac{\log (K)}{\min\{\Deltamin, \epsilon\}} \right)$.
However, there was a mistake in one part of the analysis. In more detail, in equation (54) of \citep[arXiv:2102.07929v1]{hu2021optimal}, the first inequality does not hold in general. For example, when $\epsilon$ is small enough, we can have that $d_1 
= d_2 - 1 
= d_3 - 2 
= \dots 
= d_{r_{\max}} + 1 - r_{\max} $,
which implies that we may have to pay an extra factor of order
$\min \left\{ 
     \log \left( 1/\Deltamin \right), \log(T)
\right\}$. We thus far have not succeeded in proving the originally claimed
$O \left( \log (K)/\min\{\Deltamin, \epsilon\} \right)$
regret bound. 

More recently, we discovered a later work \citep{asi2022private} that presented a instance-independent regret bound for the pseudo-regret\footnote{While their results are stated for the expected regret, from their proofs it appears that they actually bound the pseudo-regret (which we remind the reader is abbreviated to regret throughout our work).} (see Corollary 3.4 of \citet{asi2022private}). Their approach is to reduce the online setting to the offline differentially private stochastic convex optimization setting, and their instance-independent regret bound is of order $\sqrt{T \log K} + \log(K) \log(T)/\epsilon$, which is the same rate that we obtain in our Corollary~\ref{cor:ftnl-indep-regret}. Similar to our approach, they also use an epoch-based algorithm which, in the current epoch, passes the samples from the previous epoch to a differentially private offline algorithm and then plays the resulting hypothesis (action) for all rounds of the current epoch. 
The differentially private offline algorithm that we adopt within each epoch is simple: it can be viewed as differentially private ERM. 
Their offline\footnote{In fact, it is an online algorithm, but we call it offline as it only produces a single action at the end.} algorithm, Private Variance Reduced Frank-Wolfe, is more sophisticated, but we stress that they solve a more general problem. It is not clear to us if their approach also can be used to obtain instance-dependent bounds. Even if it could be used to obtain such bounds (perhaps by first showing instance-dependent bounds for Private Variance Reduced Frank-Wolfe), their style of analysis adds up the regret bound for each epoch separately. Although our analysis of course also proceeds in epochs, we tune certain parameters in our regret analysis in a global way, i.e., taking into consideration all the epochs. We believe this is what allows our regret bound to, in certain special cases, have a privacy price of only $\log (K)/\epsilon$ rather than $\log(K) \log(T)/\epsilon$.
Although both our works have a privacy price of $\log(K) \log(T)/\epsilon$, we conjecture that the optimal rate should have a price of either $\log(K)/\epsilon$ or $\log(T)/\epsilon$. We have not yet succeeded in obtaining either of these rates. 

We now sketch a proof of Theorem~\ref{thm:ftnl-regret}. The full proof can be found in Appendix~\ref{app:full-info-upper-bound}.

\

\begin{proof-sketch}(of Theorem~\ref{thm:ftnl-regret})
The idea of the proof is to first decompose the set of actions into brackets of geometrically increasing gaps, as in \eqref{eqn:bracket}. We then bound the contribution to the regret from each bracket separately by bounding the expected number of plays of an action in each bracket. Fix a bracket $\J_\lev$ indexed by some integer $\lev$. For this bracket, we bound the regret by partitioning the rounds into two contiguous blocks: the \emph{undersampled regime} consists of round 1 up to some ``sufficiently sampled'' threshold, and the \emph{sufficiently sampled} regime consists of all the remaining rounds. Our actual analysis proceeds by epochs, but this only requires small, less interesting changes to the analysis. In the undersampled regime, we simply assume that a suboptimal action in bracket $\J_\lev$ is played in each round. In the sufficiently sampled regime, we use Hoeffding-style (sub-Gaussian) and Laplace concentration to control the sum, over rounds (actually epochs), of the probability of playing an  action in bracket $\J_\lev$ in each round. Finally, we tune the sufficiently sampled threshold to minimize the expected total number of plays of an action in this bracket. By executing the above steps, we show that the regret contribution from bracket $\J_\lev$ is at most
$O \left(
    \frac{1 + \ln |\J_\lev|}{\min\{\Delta_{(\lev)}, \epsilon\}}
  \right)$.
Summing over all brackets and some basic manipulations gives the first bound in the theorem. For the second bound, we start with the first bound in the theorem and apply the trivial upper bound $|\J_\lev| \leq K$ for all $\lev$. 
Next, we bound the number brackets in the summation in the first bound by the logarithm (due to the geometrically increasing property) of the range of $\Delta_{(\lev)}$. The largest value of $\Delta_{(\lev)}$ is $O(\Delta_{\max})$. For the smallest value, clearly we can take the maximum of $\Deltamin$ and $\epsilon$; moreover, without loss of generality, we may ignore brackets for which the gaps are $O(1/T)$ as these brackets can contribute at most $O(1)$ to the regret.
\end{proof-sketch}

To complement the above upper bounds, we also show an instance-dependent regret lower bound and then briefly explain how this also implies a minimax regret lower bound.

\begin{theorem} \label{thm:dp-dtol-lower-bound}
Let $\Deltamin \in (0, 1/4)$, 
$K \geq 4$, and 
$T \geq \max \left\{ \frac{\ln (K)}{16 \Deltamin^2}, \frac{\ln (K)}{10 \epsilon \Deltamin} \right\}$. 
Then, for any $\epsilon$-differentially private online learning algorithm for the 
full information setting, 
there is a distribution over reward vectors (with minimum gap $\Deltamin$ as above) such that the optimal action is unique and the regret of the algorithm is lower bounded as 
$\Omega \left( \max \left\{ 
     \frac{\log (K)}{\Deltamin},
     \frac{\log (K)}{\epsilon}
     \right\} \right)$. 
\end{theorem}

We will sketch a proof of this result momentarily; the full proof is left to Appendix~\ref{app:lower-bound}. 
The first term in the lower bound is from the non-private setting (see Proposition 4 of \citet{mourtada2019optimality}) and requires $\Deltamin > 1/4$, $K \geq 4$ and 
$T \geq \frac{\ln K}{16 \Deltamin^2}$.
The second term is the new one, coming from the restriction of $\epsilon$-differential privacy, and establishing it requires only $T \geq \frac{\ln K}{10 \epsilon \Deltamin}$. Note that by taking $\Deltamin = \sqrt{\ln K/T}$ (as mentioned by \citet{mourtada2019optimality} just after their Proposition 4), we recover the minimax (i.e., instance-independent) lower bound --- now also with a privacy-related term --- of order $\max\left\{ \sqrt{T \log K}, \log K/\epsilon \right\}$, which holds provided that $T \geq \frac{\ln K}{100 \epsilon^2}$. The lower bound of Theorem~\ref{thm:dp-dtol-lower-bound}  does not match the best known upper bound, which, as mentioned earlier, is of order $\log K/\Deltamin + \log(K) \log(T)/\epsilon$. Closing the gap between the upper and lower bounds is a very exciting direction.

\begin{proof-sketch}(of Theorem~\ref{thm:dp-dtol-lower-bound})
The non-private  $\Omega \left(\log(K)/\Deltamin  \right)$ term is from Proposition 4 of \citet{mourtada2019optimality}. We now sketch a proof for the $\Omega \left( \log(K)/\epsilon \right)$ term. 
The proof is in three steps.

\textbf{Step 1:} First, we construct a set of $K$ problem instances. Each problem instance is specified by a product distribution composed of $K$ reward distributions, with each action associated with a reward distribution. The two key properties of the construction are that:
\begin{enumerate}
\item in each problem instance, a different action has the highest mean reward;
\item for any two problem instances, the total variation distance between their corresponding product distributions is of order $\Deltamin$. 
\end{enumerate}
Note that the second property means the product distributions form a packing.

\textbf{Step 2:} Next, we consider the multiple hypothesis testing problem corresponding to the packing from Step 1. For any $\epsilon$-differentially private algorithm, Corollary 4 of \citet{acharya2021differentially} gives an $\Omega\left( \log K/\left(\Deltamin \cdot \epsilon \right) \right)$ lower bound on the sample complexity of this hypothesis testing problem (for a constant failure probability).

\textbf{Step 3:} The final step is a ``batch-to-online'' conversion which, given a sample complexity lower bound for the multiple hypothesis testing problem, yields a lower bound on the regret of any $\epsilon$-differentially private online learning algorithm for the full information online learning problem. 
The key intuition is that any such online learning algorithm can be transformed into a hypothesis selector by taking the majority vote among the hypotheses that were played in each round. Therefore, if the learning algorithm is run for a number of rounds strictly less than the sample complexity $m$, then the best action cannot be the majority vote (if it were, then the algorithm is successful on the hypothesis testing problem). Consequently, in each of at least roughly $0.5 \cdot m$ rounds, a suboptimal action must have been played, giving regret $\Omega\left(\log (K)/\epsilon \right)$.\end{proof-sketch}

\section{Experimental Results}  \label{sec: empirical results}
We conduct numerical experiments in two settings to compare the empirical performance of Anytime-Lazy-UCB, Lazy-DP-TS, DP-SE of \cite{sajed2019optimal}, and AdaP-KLUCB of \cite{asi2022private}. In both settings, we set the number of rounds $T=10^6$  and the number of arms $K=5$. The privacy parameter is set to $\epsilon=\left\{0.25, 0.5,1\right\}$.
In the first setting, all the mean rewards are set to $\left\{0.75, 0.625, 0.5, 0.375, 0.25 \right\}$ with different mean reward gaps.
For the second setting, all the mean rewards are set to $\left\{0.5, 0.4, 0.4, 0.4, 0.4 \right\}$ with the same mean reward gap.
All the reported results in  Figures~\ref{fig:resultsetting1} and~\ref{fig:resultsetting2} are the average of $20$ independent runs. 
Not surprisingly, the empirical performance of Lazy-DP-TS is similar to AdaP-KLUCB due to the facts that:
 \begin{enumerate}
     \item the non-private version of Lazy-DP-TS (Thompson Sampling with Beta Priors \citep{agrawal2017near}) and the non-private version of AdaP-KLUCB (KL-UCB \citep{garivier2011kl}) have the same coefficient for the leading $\log(T)$ term in the instance-dependent regret bound;
     \item both AdaP-KLUCB and Lazy-DP-TS use, essentially, the same way to add noise. Therefore, it is also expected that the coefficient for the $\log(T)/\epsilon$ privacy term is also the same for these two algorithms.
 \end{enumerate}
Additionally, as compared with DP-SE and Anytime-Lazy-UCB,  Lazy-DP-TS always empirically performs better.

\begin{figure}[h]
\centering
\subfloat [$\epsilon = 0.25$] {\label{fig:result1ep025}
\includegraphics[width = 0.33\columnwidth]{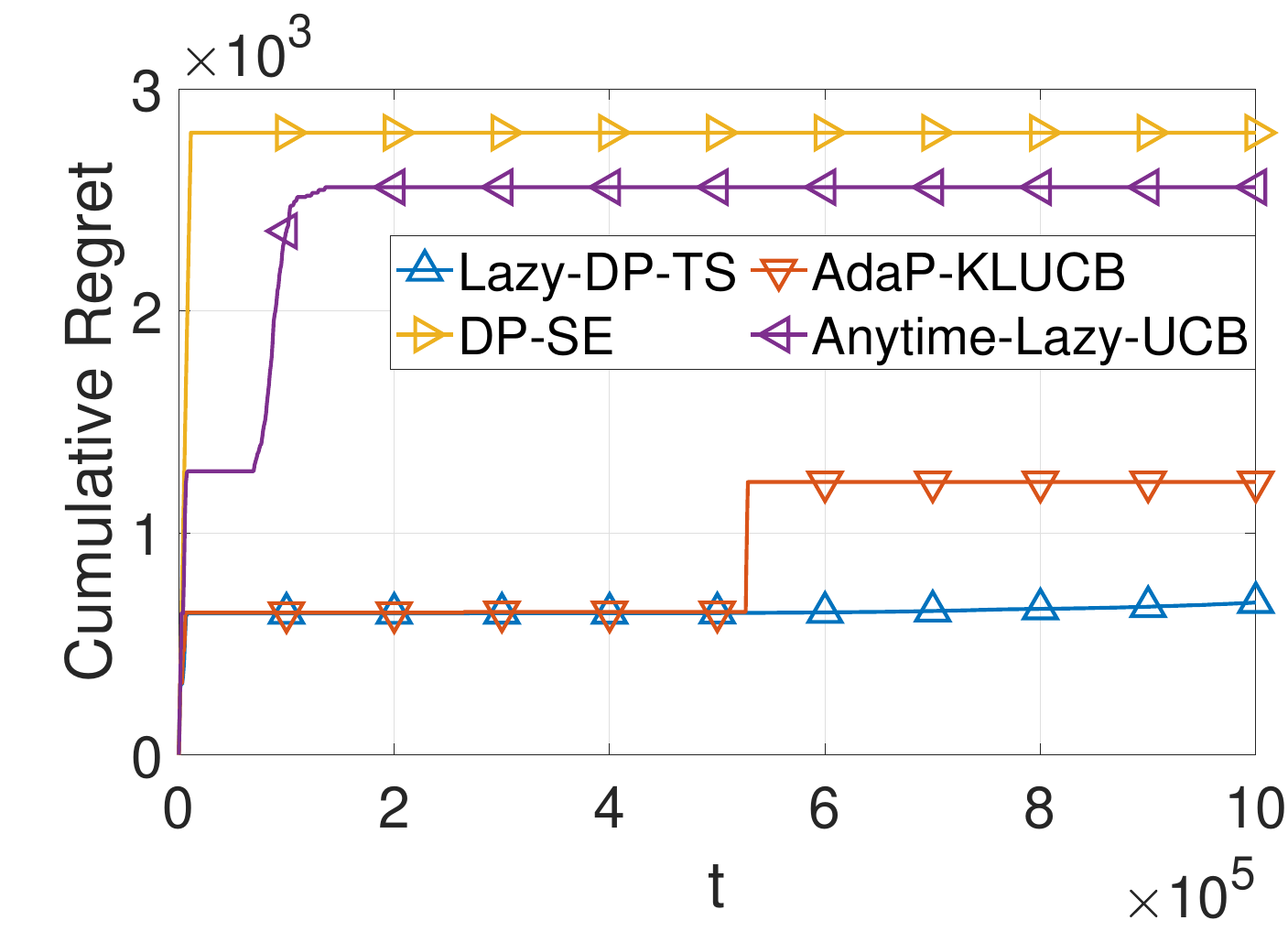}
}
\subfloat [$\epsilon = 0.5$] {\label{fig:result1ep05}
\includegraphics[width = 0.33\columnwidth]{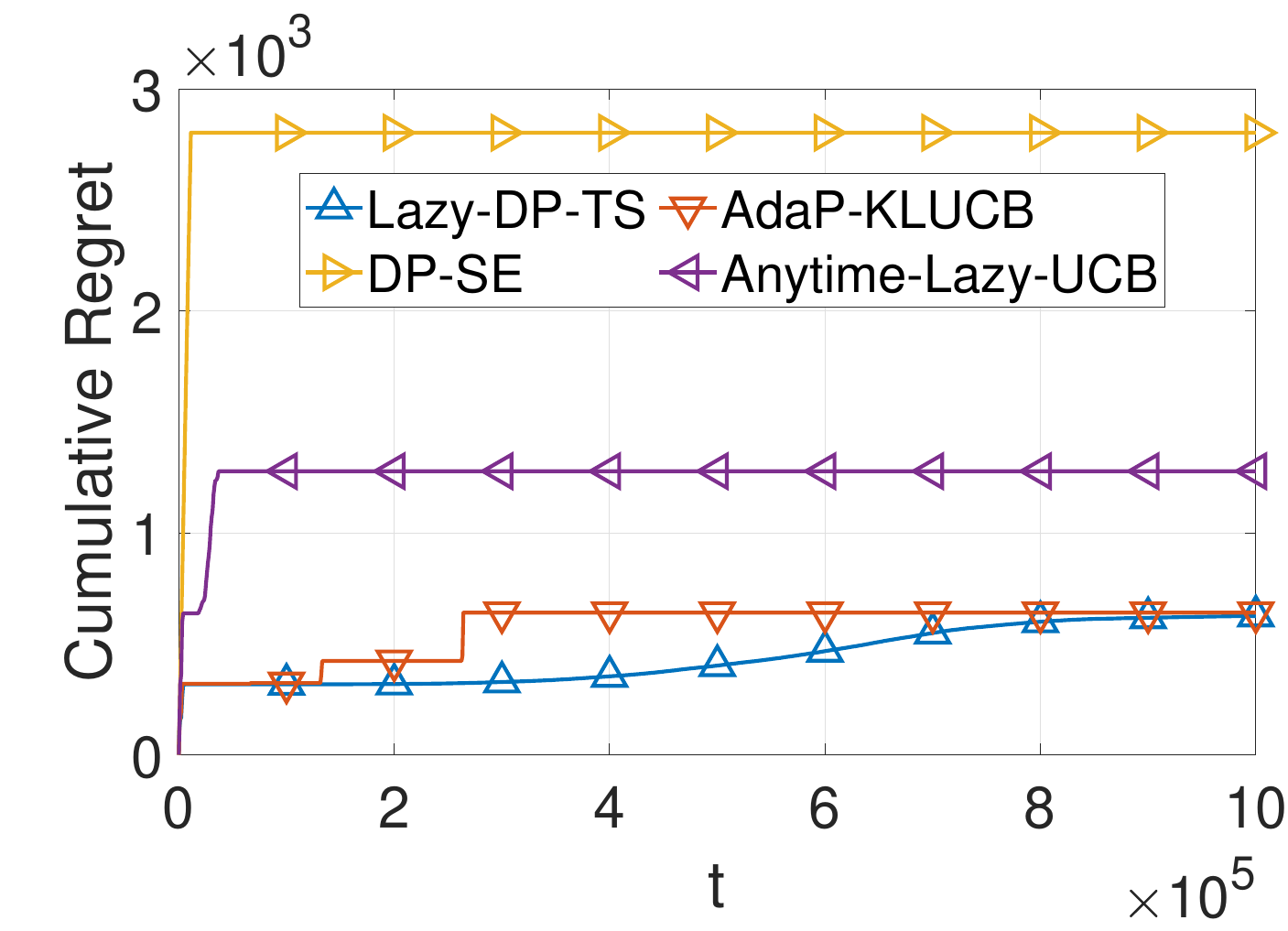}
}
\subfloat [$\epsilon = 1$] {\label{fig:result1ep1}
\includegraphics[width = 0.33\columnwidth]{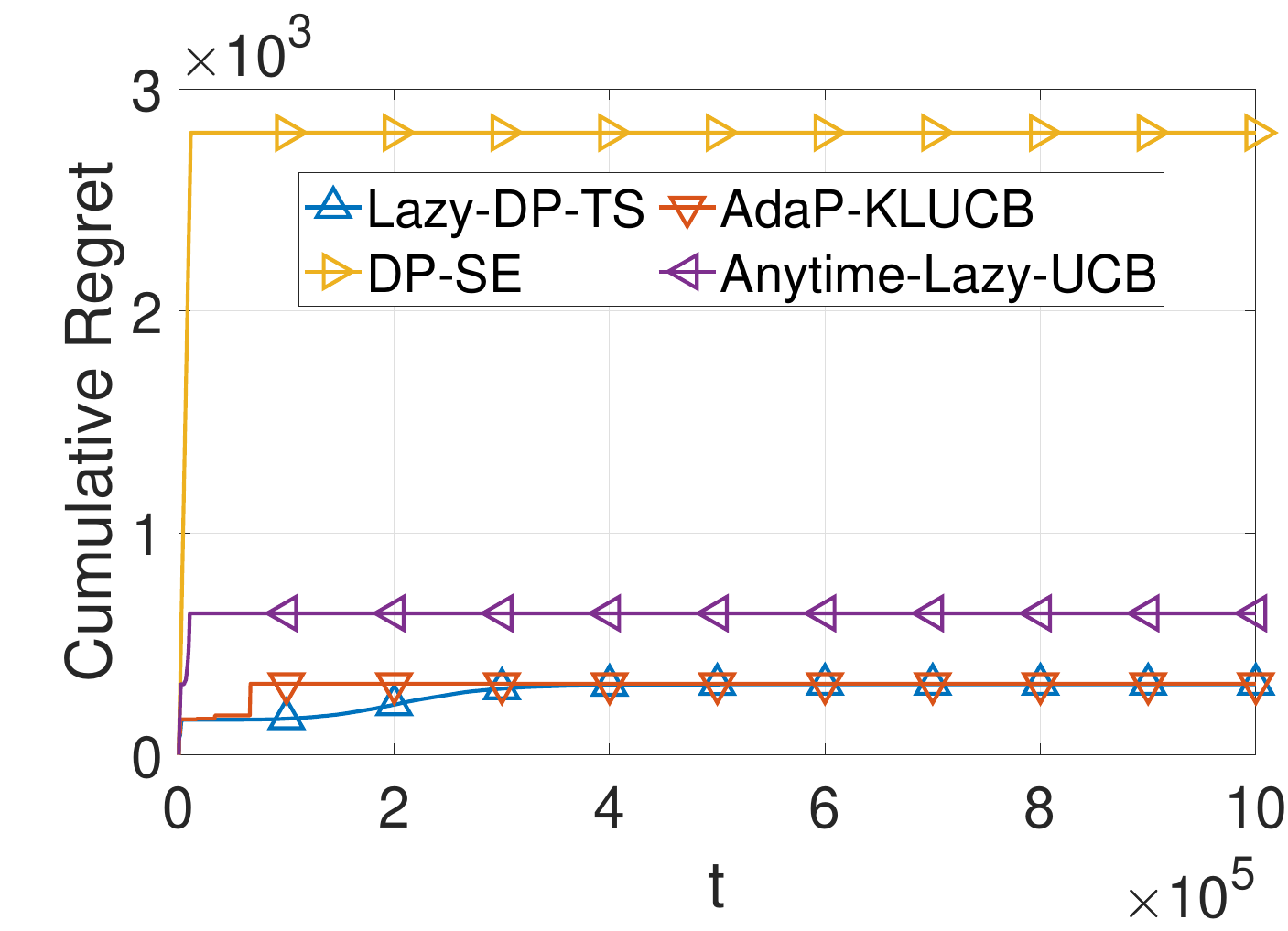}
}
\caption{The cumulative regret for the first setting.}\label{fig:resultsetting1}
\end{figure}

\begin{figure}[h]
\centering
\subfloat [$\epsilon = 0.25$] {\label{fig:result2ep025}
\includegraphics[width = 0.33\columnwidth]{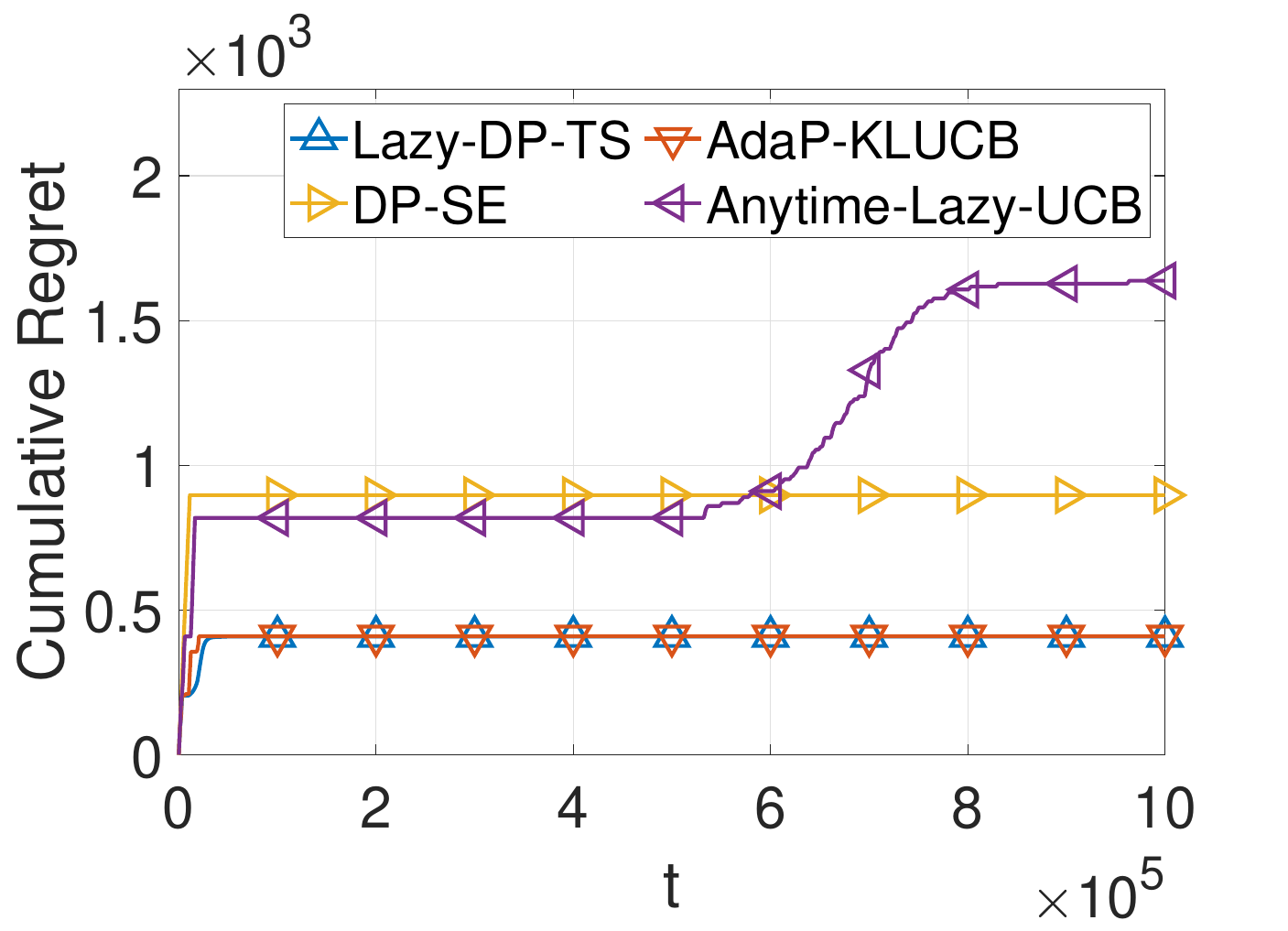}
}
\subfloat [$\epsilon = 0.5$] {\label{fig:result2ep05}
\includegraphics[width = 0.33\columnwidth]{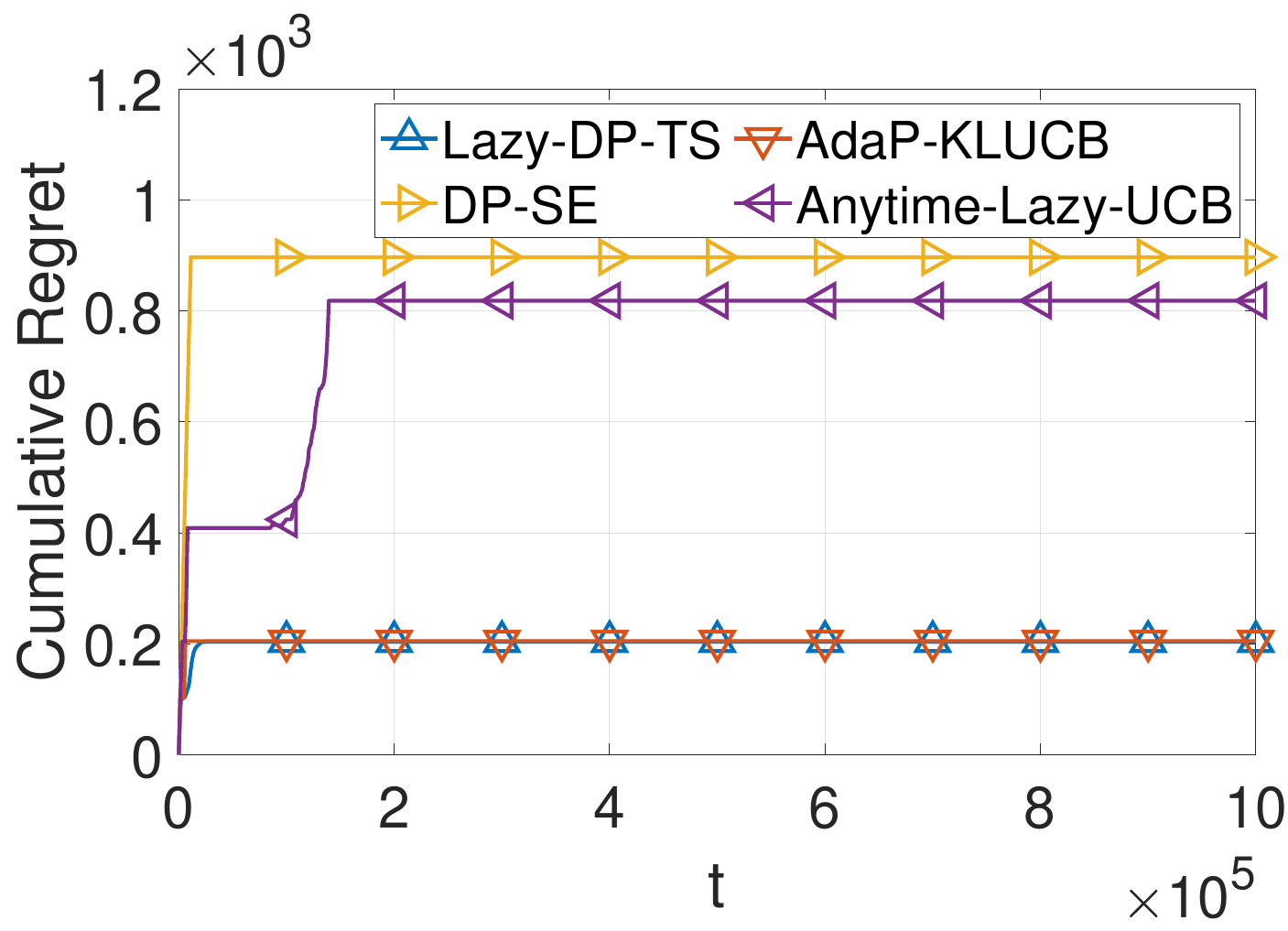}
}
\subfloat [$\epsilon = 1$] {\label{fig:result2ep1}
\includegraphics[width = 0.33\columnwidth]{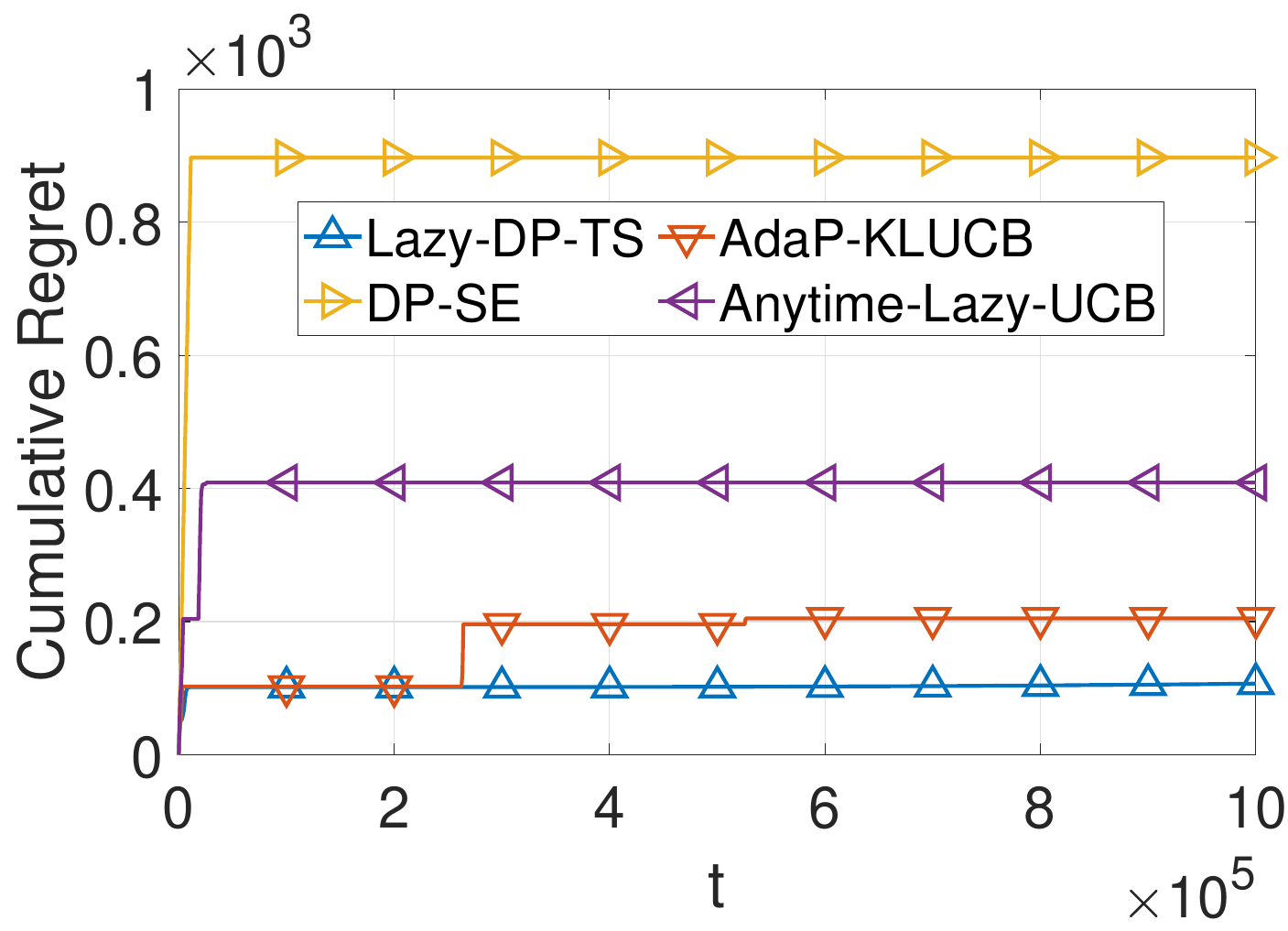}
}
\caption{The cumulative regret for the second setting.}\label{fig:resultsetting2}
\end{figure}

\section{Conclusion} \label{sec:conclusion}

In the differentially private stochastic bandit setting,  we have presented both the optimal UCB-based algorithm (Anytime-Lazy-UCB) and Thompson Sampling-based  algorithm (Lazy-DP-TS), which fills a gap in the differentially private bandit literature. We believe that the ideas of maintaining an arm-specific epoch with doubling length and updating the differentially private empirical mean in a lazy and forgetful way  contribute to developing optimal algorithms for other differentially private  online learning variants beyond bandits.
Although Thompson Sampling-based algorithms instrincally randomize the collected data, both of our proposed algorithms use the same framework to preserve privacy.
One of the interesting future directions is to devise private Thompson Sampling-based algorithms  leveraging the fact that posterior samples  provide some degree of differential privacy for free.

In the differentially private full information setting, there is a gap between our instance-dependent upper and lower bounds. Surprisingly, for fixed $\Deltamin$, the ``hard problem instance'' used to prove the instance-dependent lower bound (Theorem~\ref{thm:dp-dtol-lower-bound}) is \emph{not} the instance which maximizes our upper bound (Theorem~\ref{thm:ftnl-regret}). Indeed, in the ``hard problem instance'' all suboptimal actions $j$ have $\Delta_j = \Deltamin$; our upper bound then achieves the rate $\frac{\log (K)}{\min\{\Delta, \epsilon\}}$, the same rate given by our lower bound. Rather, the witness of the largest discrepancy between our upper and lower bounds is when we have the ``doubling gap instance'' $\Delta_j = \Deltamin \cdot 2^{j-2}$ for $j = 2, 3, \ldots, K$ for $\epsilon \leq \Deltamin$, in which case our upper bound is of order
$\frac{\log (K)}{\Deltamin} + \frac{\log(K) \log(T)}{\epsilon}$.
Intuitively, the ``doubling gap instance'' should not be more difficult than the instance used to prove the lower bound. Therefore, we conjecture that the rate of $\frac{\log (K)}{\min\{\Deltamin, \varepsilon\}}$ is optimal. It would be very interesting to resolve the gap between the upper and lower bounds. 

\vskip 0.2in
\bibliography{arxiv_v3_dp_stoch_online_learning}

\newpage

\appendix
\begin{center}
   \Large \textbf{APPENDIX}
\end{center}
The appendix is organized as follows.
\begin{enumerate}

\item Useful facts 
in Appendix~\ref{app: fact};
    \item Proofs for Anytime-Lazy-UCB in Appendix~\ref{app: Anytime-Lazy-UCB};
       \item Proofs for Lazy-DP-TS in Appendix~\ref{app: Lazy-DP-TS};
       
       \item Proofs for RNM-FTNL in Appendix~\ref{app:ftnl}.

    \end{enumerate}

\section{Useful facts} \label{app: fact}
We first present some useful facts.
\begin{fact}[Beta distribution] The probability density function of Beta distribution $\text{Beta}(\alpha, \beta)$ with parameters $\alpha, \beta >0$ is
\begin{equation}
    f(x; \alpha, \beta) = \frac{x^{\alpha-1} (1-x)^{\beta-1}}{\int_{0}^{1}x^{\alpha-1} (1-x)^{\beta-1} dx} \quad.
\end{equation}
\label{pdf of beta}
\end{fact}

\begin{fact}[Laplace distribution] The probability density function of Laplace distribution $\text{Lap}(b)$ centered at $0$ with scale
$b$ is 
\begin{equation}
    f(x; b) = \frac{1}{2b} e^{- \frac{|x|}{b}} \quad.
\end{equation}
\label{pdf of lap}
\end{fact}

\begin{fact}[Hoeffding's inequality] 
Let $X_1, X_2, \dotsc, X_n$  be $n$ independent  random
variables with common range $[0,1]$. Let $X = \sum_{j=1}^{n}X_j$. Then, for any $a > 0$, we have
\begin{equation}
   \mathbb{P} \left\{\left|X -\mathbb{E}[X] \right| \ge a \right\} \le 2e^{-2a^2/n} \quad.
\end{equation}
\label{Hoeffding}
\end{fact}
\begin{fact}[Concentration bound for $\text{Lap}(b)$, Fact 3.7 in \cite{dwork2014algorithmic}] 
If $Y \sim \text{Lap}(b)$, for any $0 < \delta < 1$, we have
\begin{equation}
   \mathbb{P} \left\{ |Y| > \ln \left( \frac{1}{\delta}  \right)\cdot b \right\} = \delta \quad.
\end{equation}
\label{fact 1}
\end{fact}

\begin{fact}[Chernoff-Hoeffding bound] 
Let $X_1, X_2, \dotsc, X_n$  be $n$ independent random
variables with each $X_j \in \{0,1\}$. Let $X = \frac{1}{n} \sum_{j=1}^{n}X_j$ and $\mu = \mathbb{E} \left[ X \right]$. Then, for any $0 < \lambda < 1- \mu$, we have
\begin{equation}
    \mathbb{P} \left\{X \ge \mu+\lambda \right\} \le e^{-n \cdot d_{\text{KL}} \left(\mu+\lambda, \mu\right)}\quad,
\end{equation}
and for any $0 < \lambda < \mu$, we have
\begin{equation}
    \mathbb{P} \left\{X \ge \mu- \lambda \right\} \le e^{-n \cdot d_{\text{KL}} \left(\mu-\lambda, \mu\right)}\quad,
\end{equation}
where $d_{\text{KL}}(x,y) = x\ln \left(\frac{x}{y}\right)+(1-x) \ln \left(\frac{1-x}{1-y}\right)$ denotes the KL-divergence between two Bernoulli distributions with parameters $x$ and $y$. 
\label{chernoff-hoeffding}
\end{fact}

\begin{fact}
For all positive integers $\alpha, \beta$, we have
\begin{equation}
    F_{\alpha, \beta}^{\text{Beta}}(y) = 1-F_{\alpha+ \beta-1,y}^{\text{B}}(\alpha-1) \quad,
\end{equation}
where $F_{\alpha, \beta}^{\text{Beta}}(\cdot)$ denotes the cumulative density function of Beta distribution with parameters $\alpha, \beta$ and  $F_{n,p}^{\text{B}}(\cdot)$ denotes the cumulative density function of  Binomial distribution with parameters $n,p$. \label{cdfs}
\end{fact}

\section{Proofs for Anytime-Lazy-UCB} \label{app: Anytime-Lazy-UCB}

The full proof for Theorem~\ref{DP: Optimal DP-UCB} is shown in Section~\ref{app: dp}, and the full proof for Theorem~\ref{Regret: Optimal DP-UCB} is shown in Section~\ref{app: Lazy-DP-UCB}.
\subsection{Proofs for Theorem~\ref{DP: Optimal DP-UCB}} \label{app: dp}
\begin{proof}(of Theorem~\ref{DP: Optimal DP-UCB})
 Let ${X}_{1 : T}$ be the original reward vector sequence and ${X}'_{1 : T}$ be an arbitrary neighbouring reward vector sequence of ${X}_{1 : T}$ such that they can differ in an arbitrary round. Let us say ${X}_{1 : T}$ and ${X}'_{1 : T}$ differ from each other in round $t$.
  Let $D_{1:T}$ be the sequence of pulled arms from round $1$ to round $T$ when working over ${X}_{1 : T}$. Let $D_{1:T}'$ be the sequence of pulled arms  when working over ${X}'_{1 : T}$.

For an arbitrary point  $\sigma_{1 : T} \in [K]^T$, we claim the probability mass functions of $D_{1:T}$ and $D'_{1:T}$ satisfy
\begin{equation}
      P\left(  D_{1:T}= \sigma_{1 : T} \mid X_{1:T}\right) \le e^{\epsilon} \cdot P  \left( D_{1:T}'= \sigma_{1 : T} \mid X'_{1:T} \right) \quad.
 \end{equation}
Let $P \left( \cdot \mid X_{1:T} \right) := P_{X} \left( \cdot\right)$ and $P \left( \cdot \mid X'_{1:T} \right) := P_{X'} \left( \cdot\right)$. To prove the above claim, it is sufficient to show
\begin{equation}
      P_{X}\left(  D_{t+1:T}= \sigma_{t+1 : T} \mid D_{1:t}= \sigma_{1 : t}\right) \le e^{\epsilon} \cdot P_{X'}  \left(D_{t+1:T}'= \sigma_{t+1 : T}  \mid D_{1:t}'= \sigma_{1 : t} \right) 
 \end{equation}
 due to  $  P\left(  D_{1:t}= \sigma_{1 : t} \mid X_{1:T}\right) = P  \left(D_{1:t}'= \sigma_{1 : t}  \mid X'_{1:T}\right)$. Note that  $X_{1:t-1}=X'_{1:t-1}$.

Since the learning algorithm processes the obtained observations in a delayed manner and based on epochs, we let  
 $t_* \ge t$  be the  round such that, at the end of this round, the obtained observation   in round $t$ will be processed.
We have
\begin{equation}
\begin{array}{ll}
 & P_X\left(  D_{t+1:T}= \sigma_{t+1 : T} \mid D_{1:t}= \sigma_{1 : t}\right) \\
 &\\
= &\sum\limits_{s \ge t}  \underbrace {P_{X} \left(  D_{t+1:t_*}= \sigma_{t+1 : t_*}  \mid D_{1:t}= \sigma_{1 : t},  t_*=s \right) \cdot  P_{X} \left(   t_* = s \mid D_{1:t}= \sigma_{1 : t} \right) }_{\beta}\\
   & \underbrace{P_{X} \left(  D_{t_*+1:T}= \sigma_{t_*+1 : T}  \mid D_{1:t_*}= \sigma_{1 : t_*},  t_*=s \right)}_{\alpha}
\end{array}
\end{equation}
and
\begin{equation}
\begin{array}{ll}
  & P_{X'}\left(  D'_{t+1:T}= \sigma_{t+1 : T} \mid D'_{1:t}= \sigma_{1 : t}\right) \\
  & \\
= &\sum\limits_{s \ge t}  \underbrace{ P_{X'} \left(  D'_{t+1:t_*}= \sigma_{t+1 : t_*}  \mid D'_{1:t}= \sigma_{1 : t},  t_*=s \right) \cdot  P_{X'} \left(   t_* = s \mid D'_{1:t}= \sigma_{1 : t} \right)}_{\beta'} \\
   & \underbrace{P_{X'} \left(  D'_{t_*+1:T}= \sigma_{t_*+1 : T}  \mid D'_{1:t_*}= \sigma_{1 : t_*},  t_*=s \right)}_{\alpha'}\quad.
\end{array}
\end{equation}
It is not hard to see we have $\beta = \beta'$ as the obtained observation in round $t$ will only be used from round $t_*+1$. 
Now, we show $\alpha \le e^{\epsilon} \cdot \alpha' $ to complete the proof. 
Let $\widetilde{A}_{\sigma_t} $ denote the output of the Laplace mechanism for the function that computes  the aggregated  reward of arm $\sigma_t$. From Theorem~3.6 in \cite{dwork2014algorithmic} that Laplace mechanism preserves $\epsilon$-differential privacy, for any $y \in \mathbb{R}$, we have the conditional density functions satisfy
\begin{equation}
  \begin{array}{lll}
f_{\widetilde{A}_{\sigma_t}}(y \mid D_{1:t_*} = \sigma_{1:t_*}, t_* = s, X_{1:T}) & \le &e^{\epsilon} \cdot f_{\widetilde{A}_{\sigma_t}}(y \mid D_{1:t_*} = \sigma_{1:t_*}, t_* = s, X'_{1:T})\\
& = & e^{\epsilon} \cdot f_{\widetilde{A}_{\sigma_t}}(y \mid D'_{1:t_*} = \sigma_{1:t_*}, t_* = s, X'_{1:T})\quad.
    \end{array}
\end{equation}
We rewrite $\alpha$ and have
\begin{equation}
\begin{array}{ll}
& P_{X} \left(  D_{t_*+1:T}= \sigma_{t_*+1 : T}  \mid D_{1:t_*}= \sigma_{1 : t_*} ,  t_*=s \right) \\
 = & \int_{-\infty}^{+\infty} P_{X} \left(  D_{t_*+1:T}= \sigma_{t_*+1 : T}  \mid D_{1:t_*}= \sigma_{1 : t_*} ,  t_*=s, \widetilde{A}_{\sigma_t} = y\right) f_{\widetilde{A}_{\sigma_t} }(y  \mid D_{1:t_*}= \sigma_{1 : t_*} ,  t_*=s, X_{1:T}) dy\\
  \le & \int_{-\infty}^{+\infty} P_{X'} \left(  D'_{t_*+1:T}= \sigma_{t_*+1 : T}  \mid D'_{1:t_*}= \sigma_{1 : t_*} ,  t_*=s, \widetilde{A}_{\sigma_t} = y\right) \cdot e^{\epsilon} \cdot  f_{\widetilde{A}_{\sigma_t} }(y  \mid D'_{1:t_*}= \sigma_{1 : t_*} ,  t_*=s, X'_{1:T}) dy\\
    = & e^{\epsilon} \cdot P_{X'} \left(  D'_{t_*+1:T}= \sigma_{t_*+1 : T}  \mid D'_{1:t_*}= \sigma_{1 : t_*} ,  t_*=s \right) \\
    = & e^{\epsilon} \alpha'\quad,
\end{array}
\end{equation}
which concludes the proof.\end{proof}

\subsection{Proofs for Theorem~\ref{Regret: Optimal DP-UCB}} \label{app: Lazy-DP-UCB}
The instance-dependent regret bound proof is shown in Section~\ref{app: isolated 1}, and the instance-independent regret bound proof is shown in Section~\ref{app: isolated 2}.

\subsubsection{Proofs for the instance-dependent regret bound} \label{app: isolated 1}

As already mentioned in the proof sketch, to prove Theorem~\ref{Regret: Optimal DP-UCB}, instead of upper bounding the expected number of pulls of a suboptimal arm $j$ directly, we upper bound the expected number of epochs processed by the end of round $T$. 
Recall that $n_j(T)$ denotes the number of pulls of arm $j$ by the end of round $T$ and $r_j(T)$ denotes the index of the most recently processed epoch by the end of round $T$.  Recall $d_j = \left\lceil \log \left(   \frac{24 \ln (T)}{\Delta_j \cdot \min \{\Delta_j, \epsilon\}}  \right) \right\rceil $ and $\omega_j^{(r)} =  \sum_{q = 0}^{r} 2^{q}  =  2^{r+1}-1$.
We have  the number of pulls of a suboptimal arm $j$ by the end of round $T$ is at most
\begin{equation}
\begin{array}{lll}
n_j(T) & \le & \sum\limits_{q = 0}^{r_j(T)+1} 2^q \le \sum\limits_{q = 0}^{d_j+1} 2^q   + \sum\limits_{q = d_j + 1}^{r_j(T)+1} 2^q
 =  \omega_j^{(d_j+1)} + \sum\limits_{q = d_j + 1}^{r_j(T)+1} 2^q \quad.
\end{array}
\label{Ni 2}
\end{equation}
We now prove Theorem~\ref{Regret: Optimal DP-UCB}.
\begin{proof}(of Theorem~\ref{Regret: Optimal DP-UCB})
By taking the expectation of both sides in (\ref{Ni 2}), we have
\begin{equation}
\begin{array}{lll}
 \mathbb{E} \left[ n_j(T)  \right]
 & \le &  \omega_j^{(d_j+1)}  + \sum\limits_{s = d_j + 1}^{\log (T)} \mathbb{P} \left\{ r_j(T) = s  \right\} \cdot \sum\limits_{q = d_j +1}^{s+1} 2^q  \\
 & \le &  \omega_j^{(d_j)}  + 
 \sum\limits_{s = d_j + 1}^{\log (T)} \underbrace{\mathbb{P} \left\{ r_j(T) = s  \right\} \cdot \omega_j^{(s+1)} }_{= O(1), \text{ Lemma~\ref{Tomorrow2}}} \\
  & \le &  O \left(  \frac{\log(T)}{\Delta_j \cdot \min \left\{  \Delta_j, \epsilon\right\}} \right) + O \left( \log(T) \right)\\
  & = & O \left(  \frac{\log(T)}{\Delta_j \cdot \min \left\{  \Delta_j, \epsilon\right\}} \right) \quad,
 \end{array}
 \label{Ni}
\end{equation}
where the second inequality of (\ref{Ni}) uses $\sum_{q = d_j + 1}^{s+1} 2^q \le \omega_j^{(s+1)}$. 

Now, we have the instance-dependent regret of Anytime-Lazy-UCB is
\begin{equation}
\begin{array}{l}
\mathcal{R}(T)  =  \sum\limits_{j: \Delta_j > 0}  \mathbb{E} \left[ n_j(T) \right] \cdot \Delta_j 
\le \sum\limits_{j: \Delta_j > 0} O \left(  \frac{\log(T)}{\Delta_j \cdot \min \left\{  \Delta_j, \epsilon\right\}} \right) \cdot \Delta_j 
=\sum\limits_{j: \Delta_j > 0} O \left(  \frac{\log(T)}{ \min \left\{\Delta_j,  \epsilon\right\}} \right),
\end{array}
\end{equation}
which concludes the proof.
\end{proof}
\begin{lemma}
For any $s \ge d_j+1$, we have $\mathbb{P} \left\{ r_j(T) = s  \right\} \cdot  \omega_j^{(s+1)} \le O(1)$.
\label{Tomorrow2}
\end{lemma}
\begin{proof}(of Lemma~\ref{Tomorrow2})
Recall that $d_j = \left\lceil \log \left(   \frac{24\ln (T)}{\Delta_j \cdot \min (\Delta_j, \epsilon)}  \right) \right\rceil $ and $\omega_j^{(r)} = \sum\limits_{q = 0}^{r} 2^{q} = 2^{r+1} - 1$. Then, we have  $   \frac{24\ln (T)}{\Delta_j \cdot \min (\Delta_j, \epsilon)}    \le 2^{d_j} \le 2 \cdot \frac{24\ln (T)}{\Delta_j \cdot \min (\Delta_j, \epsilon)}   $.
Now, we have
\begin{equation}
\begin{array}{ll}
   & \mathbb{P} \left\{ r_j(T) = s \right\} \cdot \omega_j^{(s+1)} \\ 
   \le & \mathbb{P} \left\{ \exists t \in \{ \omega_j^{(s-1)}+1 , \dotsc, T \}:  J_t = j,  r_j(t-1) = s-1 \right\} \cdot \omega_j^{(s+1)}  \\
   \le &\sum\limits_{t =  \omega_j^{(s-1)}+1}^{T}  \mathbb{P} \left\{  \overline{\mu}_j (t) \ge \overline{\mu}_1(t),  r_j(t-1) = s-1\right\}  \cdot \omega_j^{(s+1)} \\
   \le & \sum\limits_{t =  \omega_j^{(s-1)}+1}^{T}  \mathbb{E} \left[\bm{1} \left\{  \overline{\mu}_j (t) \ge \overline{\mu}_1(t),  r_j(t-1) = s-1 \right\}  \right] \cdot \omega_j^{(s+1)} \\
  \le & \sum\limits_{t =  \omega_j^{(s-1)}+1}^{T} \sum\limits_{\tau = 0}^{ \log(t-1) }  \mathbb{E} \left[  \bm{1} \left\{ \underbrace{ \widetilde{ \mu}_{j, 2^{s-1}} + \sqrt{\frac{\ln (t^{3})}{2^{s-1}}} + \frac{\ln (t^{3})}{\epsilon  2^{s-1}} \ge \widetilde{ \mu}_{1, 2^{\tau}} + \sqrt{\frac{\ln (t^{3})}{2^{\tau}}} + \frac{\ln (t^{3})}{\epsilon  2^{\tau}} }_{\psi} \right\}  \right] \omega_j^{(s+1)}.
 \end{array}
 \label{girl -1}
\end{equation}
If $\psi $ is true, then at least one of the following is true:
\begin{equation}
\begin{array}{lll}
 \widetilde{ \mu}_{j, 2^{s-1}} &\ge & \mu_j + \sqrt{\frac{\ln (t^{3})}{2^{s-1}}} + \frac{\ln (t^{3})}{\epsilon 2^{s-1}} \quad,\\ 
 \widetilde{ \mu}_{1, 2^{\tau}} & \le &\mu_1- \sqrt{\frac{\ln (t^{3})}{2^{\tau}}} - \frac{\ln (t^{3})}{\epsilon 2^{\tau}} \quad,\\  
\frac{\Delta_j}{2} & < & \sqrt{\frac{\ln (t^{3})}{2^{s-1}}} + \frac{\ln (t^{3})}{\epsilon 2^{s-1}} \quad.
\end{array}
\label{Infamous}
\end{equation}
For the first two events in (\ref{Infamous}), we apply concentration inequality of Laplace distribution (Fact~\ref{fact 1}) and  Hoeffding's inequality  (Fact~\ref{Hoeffding}), and have
\begin{equation}
\begin{array}{ll}
& \mathbb{P} \left\{ \widetilde{ \mu}_{j, 2^{s-1}} \ge \mu_j + \sqrt{\frac{\ln (t^{3})}{2^{s-1}}} + \frac{\ln (t^{3})}{\epsilon 2^{s-1}} \right\} \\
\le &\underbrace{ \mathbb{P} \left\{ \widetilde{ \mu}_{j, 2^{s-1}} \ge \widehat{ \mu}_{j, 2^{s-1}} + \frac{\ln (t^3)}{\epsilon 2^{s-1}}  \right\} }_{\text{Fact~\ref{fact 1}}} +  \underbrace{\mathbb{P} \left\{ \widehat{ \mu}_{j, 2^{s-1}} \ge \mu_j + \sqrt{\frac{\ln (t^3)}{2^{s-1}}}  \right\}}_{\text{Fact~\ref{Hoeffding}}} \\
= &O\left(\frac{1}{t^3}\right)\quad.
\end{array}
\label{girl 3}
\end{equation}
Similarly, we have
$\mathbb{P} \left\{ \widetilde{ \mu}_{1, 2^{\tau}} \le \mu_1- \sqrt{\frac{\ln (t^{3})}{2^{\tau}}} - \frac{\ln (t^{3})}{\epsilon 2^{\tau}} \right\} = O\left(\frac{1}{t^3} \right) $.

We now prove that the last event in  (\ref{Infamous}) cannot occur by using contradiction. We have
\begin{equation}
\begin{array}{lll}
 \sqrt{\frac{\ln (t^{3})}{2^{s-1}}} + \frac{\ln (t^{3})}{\epsilon 2^{s-1}}  
& \le & \sqrt{\frac{\ln (T^{3})}{2^{d_j}}} + \frac{\ln (T^{3})}{\epsilon 2^{d_j}}  \quad \text{(Note that $s-1 \ge d_j$)}\\
& \le & \sqrt{\frac{\ln (T^{3})}{ \frac{24 \ln (T)}{\Delta_j \cdot \min \left\{\Delta_j, \epsilon \right\}} }} + \frac{\ln (T^{3})}{\epsilon \cdot \frac{24 \ln (T)}{\Delta_j \cdot \min \left\{\Delta_j, \epsilon \right\}} } \quad \text{(Use the lower bound of $2^{d_j}$)}  \\
& < &  0.5 \Delta_j \quad, 
\end{array}
\end{equation}
which implies  the last event in (\ref{Infamous}) cannot occur.

We now come back to (\ref{girl -1}) and have
\begin{equation}
\begin{array}{ll}
  & \mathbb{P} \left\{ r_j(T) = s \right\} \omega_j^{(s+1)}  \\
  \le & \sum\limits_{t =  \omega_j^{(s-1)}+1}^{T} \sum\limits_{\tau = 0}^{ \log(t-1) }  \mathbb{E} \left[ \underbrace{ \bm{1} \left\{ \widetilde{ \mu}_{j, 2^{s-1}} + \sqrt{\frac{\ln (t^{3})}{2^{s-1}}} + \frac{\ln (t^{3})}{\epsilon 2^{s-1}} \ge \widetilde{ \mu}_{1, 2^{\tau}} + \sqrt{\frac{\ln (t^{3})}{2^{\tau}}} + \frac{\ln (t^{3})}{\epsilon 2^{\tau}} \right\}  }_{\psi}\right] \omega_j^{(s+1)}  \\
   \le & \sum\limits_{t =  \omega_j^{(s-1)}+1}^{T} \sum\limits_{\tau = 0}^{ \log(t-1) }  \left( \underbrace{ \mathbb{P} \left\{ \widetilde{ \mu}_{j, 2^{s-1}} \ge \mu_j + \sqrt{\frac{\ln (t^{3})}{2^{s-1}}} + \frac{\ln (t^{3})}{\epsilon 2^{s-1}} \right\} }_{(=O(1/t^3))}+  \underbrace{\mathbb{P} \left\{ \widetilde{ \mu}_{1, 2^{\tau}} \le \mu_1- \sqrt{\frac{\ln (t^{3})}{2^{\tau}}} - \frac{\ln (t^{3})}{\epsilon 2^{\tau}} \right\} }_{(=O(1/t^3))}\right)\omega_j^{(s+1)} \\
  
 \le & \sum\limits_{t =  \omega_j^{(s-1)}+1}^{T} \sum\limits_{\tau = 0}^{\log(t-1) } O \left( \frac{1}{t^3} \right) \cdot \omega_j^{(s+1)}  \\
  \le & \sum\limits_{t =  \omega_j^{(s-1)}+1}^{T} O \left( \frac{1}{t^2} \right) \cdot \omega_j^{(s+1)}  \\
  \le & \int_{\omega_j^{(s-1)}}^{T} O \left( \frac{1}{t^2} \right) dt \cdot \omega_j^{(s+1)} \\
  < & O \left(  \frac{\omega_j^{(s+1)} }{\omega_j^{(s-1)} } \right) \\
   \le & O(1) \quad,
 \end{array}
\end{equation}
which concludes the proof.\end{proof}
             
\subsubsection{Proofs for the instance-independent regret bound} \label{app: isolated 2}

Let $\Delta^* := \sqrt{K\ln(T)/T}$ be the critical gap. We have the regret 
\begin{equation}
    \begin{array}{lll}
      \mathcal{R}(T) & =  &\sum\limits_{t=1}^{T} \mathbb{E}\left[\Delta_{J_t} \right]  \\
 &   \le     & T\Delta^* + \sum_{j: \Delta_j > \Delta^*} \mathbb{E}\left[n_j(T) \right] \cdot \Delta_{j}\\
 & \le & \sqrt{KT\ln(T)} + O \left( \frac{K\ln(T)}{\sqrt{K\ln(T)/T}} \right)+  O \left(\frac{K\ln(T)}{\epsilon} \right)\\
 & \le & O \left(\sqrt{KT\ln(T)} \right) + O \left(\frac{K\ln(T)}{\epsilon} \right)\quad,
    \end{array}
\end{equation}
which concludes the proof.

 \section{Proofs for Lazy-DP-TS} \label{app: Lazy-DP-TS}

 The full proof for Theorem~\ref{Regret: Lazy-DP-TS} is shown in Section~\ref{app: ts}.
 
 \subsection{Proofs for Theorem~\ref{Regret: Lazy-DP-TS}} \label{app: ts}

Here, we only show the full proof for the instance-dependent regret bound. For the instance-independent regret bound, we can reuse the same proof as already shown in Section~\ref{app: isolated 2}.

We first recall some definitions and notation presented in Section~\ref{sec: lazy-ts}. 
 Recall that $O_j(t-1)$ is the number of observations used to compute the differentilly private empirical mean of arm $j$ by the end of round $t-1$. From Algorithm~\ref{Optimal DP-TS}, we know that it only takes values from $2^r, r\ge 0$.  Recall that $y_j = \mu_1 - \frac{\Delta_j}{6}$, and  $E_j^{\theta}(t)$ is the event  that $\left\{\theta_j(t) \le y_j \right\}$ and $\overline{E_j^{\theta}(t)}$ is the complementary event of $E_j^{\theta}(t)$. Recall that $C_j(t-1)$ is the good event that the confidence interval holds, that is,  $\left\{\left| \mu_j -\widehat{\mu}_{j, O_j(t-1)} \right| \le \sqrt{\frac{3\ln(t)}{O_j(t-1)}}\right\}$, and  $\overline{C_j(t-1)}$ is the complementary event of $C_j(t-1)$.  Recall that $G_j(t-1)$ is the good event that the true empirical mean is close to the differentially private empirical mean, that is, $ \left\{ \left| \widehat{\mu}_{j, O_j(t-1)} - \widetilde{\mu}_{j, O_j(t-1)} \right|  \le \frac{3\ln(t)}{\epsilon \cdot O_j(t-1)} \right\}$, and  $\overline{G_j(t-1)}$ is the complementary event of $G_j(t-1)$. 
Also, recall the regret decomposition shown in (\ref{cloud 22}) is 
\begin{equation}
    \begin{array}{lll}
 \mathbb{E} \left[n_j(T) \right] & = & \sum\limits_{t = K+1}^{T}\mathbb{E} \left[ \bm{1} \left\{ J_t = j \right\}  \right] +1 \\
 
 &   \le & \underbrace{\sum\limits_{t = K+1}^{T}\mathbb{P}  \left\{J_t = j, C_j(t-1), G_j(t-1), \overline{E_j^{\theta}(t)} \right\}}_{=:\omega_1} 
   +  \underbrace{\sum\limits_{t = K+1}^{T} \mathbb{P}  \left\{J_t = j,   E_j^{\theta}(t), G_1(t-1) \right\}}_{=:\omega_2} \\
   & + & \underbrace{\sum\limits_{t = K+1}^{T} \mathbb{P}   \left\{ \overline{C_j(t-1)} \right\} +  \mathbb{P} \left\{ \overline{G_j(t-1)} \right\} +  \mathbb{P} \left\{ \overline{G_1(t-1)} \right\}}_{=: \omega_3} +1\quad.
    \end{array}
\end{equation}
\paragraph{Upper bounding $\omega_1$.} We prepare Lemma~\ref{lazy dp-ts high term2} below to bound this term, which states that if both of the good events occur, it is highly unlikely that the posterior sample of a suboptimal arm $j$ will be drawn greater than $y_j$, that is, $\overline{E_j^{\theta}(t)}$ is a low probability event. For the proof, only concentration bounds are needed.

\begin{lemma}
For any suboptimal arm $j$, we have
\begin{equation}
\begin{array}{lll}
  \underbrace{  \sum\limits_{t =K+ 1}^{T} \mathbb{E} \left[ \bm{1} \left\{J_t = j, C_j(t-1), G_j(t-1) , \overline{E_j^{\theta}(t)} \right\}  \right]}_{\omega_1} &\le &O \left( \frac{\ln(T)}{\Delta_j \cdot \min \left\{\epsilon, \Delta_j  \right\}}\right) \quad.
   \end{array}
   \label{sunshine 3}
\end{equation}
\label{lazy dp-ts high term2}
\end{lemma}
\begin{proof}(of Lemma~\ref{lazy dp-ts high term2})
  Let $\lambda^{(j)}_r$ be the round such that, by the end of this round, we use $2^r$ fresh observations to update arm $j$'s differentially private empirical mean.
  Note that in all rounds $t \in \left\{ \lambda^{(j)}_r+1, \lambda^{(j)}_r +2, \dotsc, \lambda^{(j)}_{r+1} \right\}$, the number of observations for arm $j$ is exactly $2^r$ and the differentially private empirical mean for arm $j$ stays the same. 
Recall $d_j =\left\lceil \log \left( \frac{72  \ln(T)}{\Delta_j \cdot \min \left\{ \epsilon, \Delta_j \right\}} \right) \right\rceil$. We have
 
 \begin{equation}
    \begin{array}{lll}
         \omega_1 & = & \sum\limits_{t=K+1}^{T} \mathbb{E} \left[  \bm{1} \left\{J_t = j ,   C_j(t-1),G_j(t-1), \overline{E_j^{\theta}(t)}   \right\}\right] \\

        & \le  & \sum\limits_{r=0}^{\log(T)}  \mathbb{E} \left[ \sum\limits_{t=\lambda^{(j)}_r +1}^{\lambda^{(j)}_{r+1}}   \bm{1} \left\{J_t = j ,   C_j(t-1),G_j(t-1) ,  \overline{E_j^{\theta}(t)}   \right\}\right] \\
     &    \le & \sum\limits_{r=0}^{ d_j }  \mathbb{E} \left[ \sum\limits_{t=\lambda^{(j)}_r +1}^{\lambda^{(j)}_{r+1}}   \bm{1} \left\{ J_t = j   \right\}\right] 
         + \sum\limits_{r=d_j +1}^{\log(T)}  \mathbb{E} \left[ \sum\limits_{t=\lambda^{(j)}_r +1}^{\lambda^{(j)}_{r+1}}   \bm{1} \left\{ J_t = j ,   C_j(t-1),G_j(t-1) , \overline{E_j^{\theta}(t)}   \right\}\right] \\
       &   \le & \sum\limits_{r=0}^{ d_j } 2^{r+1} + \sum\limits_{r= d_j +1}^{\log(T)}  \mathbb{E} \left[ \sum\limits_{t=\lambda^{(j)}_r +1}^{\lambda^{(j)}_{r+1}}   \bm{1} \left\{ J_t = j ,   C_j(t-1),G_j(t-1) , \overline{E_j^{\theta}(t)}   \right\}\right] \\
       &   \le & O \left( \frac{\log(T)}{\Delta_j \cdot \min \left\{\epsilon, \Delta_j  \right\}} \right) + \sum\limits_{r=d_j+1 }^{\log(T)}\underbrace{  \mathbb{E} \left[ \sum\limits_{t=\lambda^{(j)}_r +1}^{\lambda^{(j)}_{r+1}}   \bm{1} \left\{ J_t = j ,   C_j(t-1),G_j(t-1) , \overline{E_j^{\theta}(t)}   \right\}\right]}_{=:\eta} \quad.
         \end{array}
         \label{temp 3}
         \end{equation}
 We now show $\eta \le O \left( 1/T\right)$. We  have     \begin{equation}
         \begin{array}{lll}
             \eta & = &   \mathbb{E} \left[ \sum\limits_{t=\lambda^{(j)}_r +1}^{\lambda^{(j)}_{r+1}}   \bm{1} \left\{ J_t = j ,   C_j(t-1),G_j(t-1) , \overline{E_j^{\theta}(t)}   \right\}\right]\\
              & \le &   \mathbb{E} \left[\sum\limits_{t=\lambda^{(j)}_r +1}^{\lambda^{(j)}_{r+1}} \bm{1} \left\{ \overline{\mu}_{j, O_j(t-1)}(t) \le \mu_j + \frac{\Delta_j}{3}, O_j(t-1) = 2^r \right\} \cdot  \bm{1} \left\{J_t = j ,  \theta_j(t) > y_j   \right\}\right]\\
                  & \le & \sum\limits_{t=2^r +1}^{T}  \mathbb{E} \left[ \underbrace{\underbrace{\bm{1} \left\{ \overline{\mu}_{j, O_j(t-1)}(t) \le \mu_j + \frac{\Delta_j}{3}, O_j(t-1) = 2^r \right\}}_{\chi} \cdot \mathbb{P} \left\{\theta_j(t) > y_j \mid \mathcal{F}_{t-1}  \right\}}_{\gamma}\right]\quad.
             \end{array}
             \label{temp 4}
             \end{equation}
             The first inequality in (\ref{temp 4} ) uses the fact that  if   events $C_j(t-1)$ and $G_j(t-1)$
are true simultaneously, for any $t \in \left\{\lambda^{(j)}_r +1, \dotsc, \lambda^{(j)}_{r +1} \right\}$, we have
 $
\overline{\mu}_{j, O_j(t-1)}(t) 
   =  \text{Clip}_{[0,1]} \left( \widetilde{\mu}_{j, O_j(t-1)}  +  \frac{3\ln(t)}{\epsilon \cdot O_j(t-1) } \right) 
   \le  \text{Clip}_{[0,1]} \left(\mu_j + \frac{6\ln(t)}{\epsilon \cdot \frac{72 \ln(t)}{\epsilon \cdot \Delta_j}} + \sqrt{\frac{3\ln(t)}{\frac{72 \ln(t)}{\Delta_j^2}}} \right) \le  \mu_j + \frac{\Delta_j}{3} $ due to $O_j(t-1) = 2^r \ge  \frac{72  \ln(T)}{\Delta_j \cdot \min \left\{ \epsilon, \Delta_j \right\}}$.
   
To continue upper bounding $\eta$, we  categorize all instantiations $F_{t-1}$ of $\mathcal{F}_{t-1}$ into two types based on whether the indicator function $\chi$ returns 1.   Let $F^{\text{Beta}}_{\alpha, \beta}(\cdot)$ be the CDF of a Beta distribution with parameters $\alpha$ and $\beta$ and let $F^{B}_{n,p}(\cdot)$ be the CDF of a Binomial distribution with parameters $n$ and $p$.
  For the $F_{t-1}$ such that the indicator function $\chi$ returns 0, we have $\gamma = 0$.
 For the    $F_{t-1}$ such that  the indicator function $\chi$ returns 1, we have
\begin{equation}
    \begin{array}{lll}
\gamma   & = & \mathbb{P} \left\{ \theta_j(t) > y_j \mid \mathcal{F}_{t-1} = F_{t-1} \right\} \\
   & = & 1- F^{\text{Beta}}_{\overline{\mu}_{j, O_j(t-1)}(t) \cdot O_j(t-1) +1, \    \left(1-\overline{\mu}_{j, O_j(t-1)}(t)\right) \cdot O_j(t-1)  + 1}(y_j) \\
    & = & 1- F^{\text{Beta}}_{\overline{\mu}_{j, O_j(t-1)}(t) \cdot 2^r +1, \    \left(1-\overline{\mu}_{j, O_j(t-1)}(t)\right) \cdot 2^r  + 1}(y_j) \\
    & {\le}^{(a)} & 1- F^{\text{Beta}}_{\left(\mu_j + \frac{\Delta_j}{3}   \right)\cdot 2^r + 1, \ \left(1-\left(\mu_j + \frac{\Delta_j}{3}  \right)\right) \cdot 2^r + 1 }(y_j) \\
     &  \le^{(b)} & 1- F^{\text{Beta}}_{ \left\lceil\left(\mu_j + \frac{\Delta_j}{3}   \right)\cdot 2^r\right\rceil + 1, \ 2^r- \left\lceil\left(\mu_j + \frac{\Delta_j}{3}  \right) \cdot 2^r \right\rceil + 1 }(y_j) \\
   &  {=}^{(c)} & F^{B}_{2^r+1, y_j}\left(\left\lceil\left(\mu_j + \frac{\Delta_j}{3} \right) \cdot 2^r \right\rceil \right) \\
    & \le^{(d)} & F^{B}_{2^r+1, y_j}\left(\left(\mu_j + \frac{\Delta_j}{3} \right) \cdot 2^r + 1 \right) \\
    &    = & F^{B}_{2^r+1, y_j}\left(\left(\mu_j + \frac{\Delta_j}{3} + \frac{1}{2^r}\right) \cdot 2^r  \right) \\
   &             {\le}^{(e)} & F^{B}_{2^r+1, y_j}\left(\left(\mu_j + \frac{\Delta_j}{3} + \frac{\Delta_j}{12}\right) \cdot \left(2^r+1\right)  \right) \\
   &  {\le}^{(f)} & e^{- \left(2^r + 1  \right) \cdot d_{\text{KL}} \left( \mu_j + \frac{5\Delta_j}{12}, y_j \right)} \\
    &   {\le}^{(g)} & e^{- 2^{d_j} \cdot 2 \cdot \frac{25\Delta_j^2}{144}} \\
   &  \le & \frac{1}{T^2} \quad.
    \end{array}
    \label{temp 6}
\end{equation}
Here, we  provide explanations about each key step in (\ref{temp 6}). 
Inequalities (a) and (b) use the facts that for any fixed $n \ge 1$,  $\text{Beta}\left(\alpha + 1, n-\alpha+1 \right)$ first-order stochastic dominates $\text{Beta}\left(\alpha' + 1, n-\alpha'+1 \right)$ when $n \ge \alpha \ge \alpha' \ge 0$. 
Equality (c) uses the relationship between the CDFs of Beta distributions and Binomial distributions (Fact~\ref{cdfs}), and  inequality (d) uses the non-decreasing property of a CDF. Inequality (e) uses $\frac{1}{2^r} \le \frac{1}{2^{d_j} } < \frac{\Delta_j}{12}$.
Inequality (f) uses the Chernoff-Hoeffding bound (Fact~\ref{chernoff-hoeffding}), and inequality (g) uses Pinsker's inequality, i.e., $d_{\text{KL}}(x,y) \ge 2(x-y)^2$.

By plugging in the upper bound of $\gamma$ into (\ref{temp 4}), we have
$\eta \le 1/T$, which implies $\omega_1 \le O \left( \frac{\ln(T)}{\Delta_j \cdot \min \left\{\epsilon, \Delta_j  \right\}} \right) + 1$.
\end{proof}

\paragraph{Upper bounding $\omega_2$.} We prepare Lemma~\ref{lazy dp-ts high term} below to bound this term, which states that if the optimal arm's differentially private empirical mean is very close to the true empirical mean and suboptimal arm $j$'s posterior sample is distributed close to the true mean, 
then the learning agent only pulls this suboptimal arm $j$ a small number of times. For the proof, both concentration and anti-concentration bounds are needed.

 \begin{lemma}
For any suboptimal arm $j$, we have
\begin{equation}
\begin{array}{lll}
\underbrace{   \sum\limits_{t = K+1}^{T} \mathbb{E} \left[\bm{1}  \left\{J_t = j,   E_j^{\theta}(t), G_1(t-1) \right\}\right]}_{\omega_2} &\le & O \left( \frac{\log(T)}{\Delta_j^2}\right)  \quad.
    \end{array}   
    \label{sunshine 2}
\end{equation}
\label{lazy dp-ts high term}
\end{lemma}

\begin{proof}(of Lemma~\ref{lazy dp-ts high term})     Recall that $\mathcal{F}_{t-1}$ collects all the history information by the end of round $t-1$ consisting of  the pulled arms, the rewards associated with the pulled arms, and the injected noise. Let $\lambda^{(1)}_r$ be the round such that by this round we use $2^r$ fresh observations to update the optimal arm $1$'s differentially private empirical mean. Based on the definition of $\lambda^{(1)}_r$, we know that in all rounds $t \in \left\{ \lambda^{(1)}_r+1, \lambda^{(1)}_r +2, \dotsc, \lambda^{(1)}_{r+1} \right\}$, the number of observations for arm $1$ is exactly $2^r$ and the differentially private empirical mean for arm $1$ stays the same in these rounds.  According to Algorithm~\ref{Optimal DP-TS}, we know $\lambda^{(1)}_0  \le K$. We have
    \begin{equation}
 \begin{array}{lll}
\omega_2 & = & \mathbb{E} \left[ \sum\limits_{t=K+1}^{T} \bm{1} \left\{J_t = j  , E^{\theta}_j(t) , G_1(t-1)  \right\}\right] \\

&= & \mathbb{E}\left[\sum\limits_{t=K+1}^{T}  \underbrace{\mathbb{P}   \left\{J_t = j , E^{\theta}_j(t) ,G_1(t-1) \mid \mathcal{F}_{t-1}\right\}}_{\text{ Lemma~\ref{Martingale event}}} \right] \\
           &   \le & \mathbb{E}\left[\sum\limits_{t=K+1}^{T} \underbrace{ \frac{\mathbb{P}   \left\{\theta_1(t) \le y_j  \mid \mathcal{F}_{t-1}\right\}  }{\mathbb{P}   \left\{\theta_1(t) > y_j \mid \mathcal{F}_{t-1} \right\} } \cdot \mathbb{P}   \left\{J_t = 1 ,  E^{\theta}_j(t) ,G_1(t-1) \mid \mathcal{F}_{t-1} \right\}}_{\text{Lemma~\ref{Martingale event}}} \right]  \\
                &   \le & \mathbb{E}\left[\sum\limits_{t=K+1}^{T} \mathbb{E} \left[ \frac{\mathbb{P}   \left\{\theta_1(t) \le y_j  \mid \mathcal{F}_{t-1}\right\}  }{\mathbb{P}    \left\{\theta_1(t) > y_j \mid \mathcal{F}_{t-1} \right\} } \cdot \bm{1}   \left\{J_t = 1   , G_1(t-1)  \right\} \mid \mathcal{F}_{t-1} \right] \right]  \\
                    &\le & \mathbb{E}\left[\sum\limits_{t=K+1}^{T}  \frac{\mathbb{P}   \left\{\theta_1(t) \le y_j  \mid \mathcal{F}_{t-1}\right\}  }{\mathbb{P}   \left\{\theta_1(t) > y_j \mid \mathcal{F}_{t-1} \right\} } \cdot \bm{1}   \left\{J_t = 1 ,  G_1(t-1)  \right\} \right]  \\
          &    \le &\sum\limits_{r=0}^{\log(T)} \mathbb{E}\left[ \sum\limits_{t = \lambda^{(1)}_r + 1}^{\lambda^{(1)}_{r+1}}  \underbrace{\frac{\mathbb{P}   \left\{\theta_1(t) \le y_j   \mid \mathcal{F}_{t-1}\right\}  }{\mathbb{P}   \left\{\theta_1(t) > y_j \mid \mathcal{F}_{t-1} \right\} }  \cdot \bm{1} \left\{G_1(t-1)  \right\}}_{=:\rho} \cdot \bm{1}  \left\{J_t = 1 \right\} \right] \quad.
          \end{array}
          \label{snow 1}
          \end{equation}
 Let $\theta'_{1, O_1(t-1)} \sim \text{Beta} \left(\widehat{\mu}_{1,O_1(t-1)} \cdot O_1(t-1) +1, (1-\widehat{\mu}_{1,O_1(t-1)}) \cdot O_1(t-1) +1 \right)$. We now start to link the coefficient $\frac{\mathbb{P}   \left\{\theta_1(t) \le y_j   \mid \mathcal{F}_{t-1} = F_{t-1}\right\}  }{\mathbb{P}   \left\{\theta_1(t) > y_j \mid \mathcal{F}_{t-1} = F_{t-1} \right\} }$   to  
 $\frac{\mathbb{P}   \left\{\theta'_{1, O_1(t-1)} \le y_j   \mid \mathcal{F}_{t-1} = F_{t-1}\right\}  }{\mathbb{P}   \left\{\theta'_{1, O_1(t-1)} > y_j \mid \mathcal{F}_{t-1} = F_{t-1} \right\} }$ for any possible instantiation $F_{t-1}$ of $\mathcal{F}_{t-1}$.
  We can categorize all  the instantiations $F_{t-1}$ of $\mathcal{F}_{t-1}$ into two types based on whether $\bm{1} \left\{G_1(t-1)\right\}$ returns 1 or not.
              For the instantiation $F_{t-1}$ such that $\bm{1} \left\{G_1(t-1)\right\} = 0$, we have $\rho = 0$. Note that $\frac{\mathbb{P}   \left\{\theta_1(t) \le y_j   \mid \mathcal{F}_{t-1} = F_{t-1}\right\}  }{\mathbb{P}   \left\{\theta_1(t) > y_j \mid \mathcal{F}_{t-1}=F_{t-1} \right\} } < \infty$. 
              For the instantiation $F_{t-1}$ such that $\bm{1} \left\{G_1(t-1)\right\} = 1$, we have $   \frac{\mathbb{P}   \left\{\theta_1(t) \le y_j   \mid \mathcal{F}_{t-1} = F_{t-1}\right\}  }{\mathbb{P}   \left\{\theta_1(t) > y_j \mid \mathcal{F}_{t-1}=F_{t-1} \right\} }  \le \frac{\mathbb{P}   \left\{\theta'_{1, O_1(t-1)} \le y_j   \mid \mathcal{F}_{t-1} = F_{t-1}\right\}  }{\mathbb{P}   \left\{\theta'_{1, O_1(t-1)} > y_j \mid \mathcal{F}_{t-1}=F_{t-1} \right\} }$.  This is because, if event $G_1(t-1)$ is true, we have $\widetilde{\mu}_{1,O_1(t-1)} + \frac{3\ln(t)}{\epsilon \cdot O_1(t-1)} \ge \widehat{\mu}_{1,O_1(t-1)} \ge 0 $, which implies $\overline{\mu}_{1, O_1(t-1)} (t) = \text{Clip}_{[0,1]}\left(\widetilde{\mu}_{1,O_1(t-1)} + \frac{3\ln(t)}{\epsilon \cdot O_1(t-1)} \right) \ge \widehat{\mu}_{1,O_1(t-1)} $. Since $\overline{\mu}_{1, O_1(t-1)} (t) \in [0,1]$ and $\overline{\mu}_{1, O_1(t-1)} (t) \ge \widehat{\mu}_{1,O_1(t-1)}$, we have the fact that
               $\text{Beta} \left(\overline{\mu}_{1,O_1(t-1)}(t) \cdot O_1(t-1) +1, \left(1-\overline{\mu}_{1,O_1(t-1)}(t)\right) \cdot O_1(t-1) +1 \right)$ first-order stochastically dominates  
          $\text{Beta} \left(\widehat{\mu}_{1,O_1(t-1)} \cdot O_1(t-1) +1, (1-\widehat{\mu}_{1,O_1(t-1)}) \cdot O_1(t-1) +1 \right)$.  
          
          Now, we have
        \begin{equation}
        \begin{array}{lll}
   \omega_2 &\le &\sum\limits_{r=0}^{\log(T)}\mathbb{E}\left[ \sum\limits_{t = \lambda^{(1)}_r + 1}^{\lambda^{(1)}_{r+1}}  \frac{\mathbb{P}   \left\{\theta_1(t) \le y_j   \mid \mathcal{F}_{t-1}\right\}  }{\mathbb{P}   \left\{\theta_1(t) > y_j \mid \mathcal{F}_{t-1} \right\} }  \cdot \bm{1} \left\{G_1(t-1)  \right\} \cdot \bm{1}  \left\{J_t = 1 \right\} \right] \\
     &&\\
    & \le &\sum\limits_{r=0}^{\log(T)}\mathbb{E}\left[ \sum\limits_{t = \lambda^{(1)}_r + 1}^{\lambda^{(1)}_{r+1}}  \frac{\mathbb{P}   \left\{\theta'_{1, O_1(t-1)}  \le y_j   \mid \mathcal{F}_{t-1}\right\}  }{\mathbb{P}   \left\{\theta'_{1, O_1(t-1)} > y_j \mid \mathcal{F}_{t-1} \right\} } \cdot \bm{1}  \left\{J_t = 1 \right\} \right] \\
    &&\\
    & = &\sum\limits_{r=0}^{\log(T)}\mathbb{E}\left[ \sum\limits_{t = \lambda^{(1)}_r + 1}^{\lambda^{(1)}_{r+1}}  \frac{\mathbb{P}   \left\{\theta'_{1, 2^r}  \le y_j   \mid \mathcal{F}_{\lambda_r}^{(1)}\right\}  }{\mathbb{P}   \left\{\theta'_{1, 2^r}  > y_j \mid \mathcal{F}_{\lambda^{(1)}_r} \right\} } \cdot \bm{1}  \left\{J_t = 1 \right\} \right] \\
      &&\\
     & = &\sum\limits_{r=0}^{\log(T)} \mathbb{E}\left[ \frac{\mathbb{P}   \left\{\theta'_{1, 2^r} \le y_j   \mid \mathcal{F}_{\lambda^{(1)}_r}\right\}  }{\mathbb{P}   \left\{\theta'_{1, 2^r} > y_j \mid \mathcal{F}_{\lambda^{(1)}_r} \right\} } \cdot \underbrace{\sum\limits_{t = \lambda^{(1)}_r + 1}^{\lambda^{(1)}_{r+1}}  \bm{1}  \left\{J_t = 1 \right\}}_{\le 2^{r+1}} \right] \\
       &&\\
           &     \le &  \sum\limits_{r=0}^{\log(T)} 2^{r+1} \cdot  \mathbb{E}\left[\frac{\mathbb{P}   \left\{\theta'_{1, 2^r} \le y_j   \mid \mathcal{F}_{\lambda^{(1)}_r}\right\}  }{\mathbb{P}   \left\{\theta'_{1, 2^r} > y_j \mid \mathcal{F}_{\lambda^{(1)}_r} \right\} }  \right] \quad,
             \end{array}
             \label{do not disturb}
        \end{equation}
where the last inequality  uses the fact that the number of pulls of arm $1$ is at most $2^{r+1}$ from round $\lambda^{(1)}_r +1$ up to (and including) round $\lambda^{(1)}_{r+1}$  based on the definition of $\lambda^{(1)}_{r+1}$.
        
We continue upper bounding (\ref{do not disturb}) by using Lemma~2.9 in \cite{agrawal2017near}. Recall $d_1 =  \log \left( \frac{8}{\mu_1 - y_j} \right) $. When $r \le \lfloor d_1 \rfloor$, we have $2^r \le 2^{d_1} \le  \frac{8}{\mu_1 - y_j}$,  which implies the number of observations for arm $1$ is not enough to have a  concentrated posterior distribution.  When $ r \ge \lceil d_1 \rceil$, we have $2^r \ge 2^{d_1} \ge  \frac{8}{\mu_1 - y_j}$, which implies the number of observations  is enough to have a  concentrated posterior distribution.
From Lemma~2.9 in \cite{agrawal2017near}, we have
 \begin{equation}
  \begin{array}{l}
\mathbb{E}\left[\frac{\mathbb{P}   \left\{\theta'_{1,2^r} \le y_j   \mid \mathcal{F}_{\lambda^{(1)}_r}\right\}  }{\mathbb{P}   \left\{\theta'_{1,2^r} > y_j \mid \mathcal{F}_{\lambda^{(1)}_r} \right\} }  \right]  
   \le  \begin{cases}
    \frac{3}{\mu_1 - y_j},&  r \le \left\lfloor d_1 \right\rfloor\\
    \Theta\left( e^{-0.5(\mu_1 - y_j)^2 \cdot  2^{r}} + \frac{e^{-2(\mu_1 - y_j)^2\cdot 2^r} }{(2^r+1)  (\mu_1-y_j)^2} + \frac{1}{e^{0.25(\mu_1-y_j)^2 \cdot 2^r}-1} \right),              & r \ge \left\lceil d_1 \right\rceil.
\end{cases}
     \end{array}
   \label{temp 11}
\end{equation}
By applying the results shown in (\ref{temp 11}) into (\ref{do not disturb}), we have
 \begin{equation}
     \begin{array}{lll}
     
   \omega_2 & \le & \sum\limits_{r=0}^{\log(T)} 2^{r+1} \cdot  \mathbb{E}\left[\frac{\mathbb{P}   \left\{\theta'_{1,2^r}\le y_j   \mid \mathcal{F}_{\lambda^{(1)}_r}\right\}  }{\mathbb{P}   \left\{\theta'_{1,2^r} > y_j \mid \mathcal{F}_{\lambda^{(1)}_r} \right\} }  \right] \\
     & \le & \sum\limits_{r=0}^{\lfloor d_1 \rfloor} 2^{r+1} \cdot \frac{3}{\mu_1 - y_j} +  \sum\limits_{r=\lceil d_1 \rceil}^{\log(T)} 2^{r+1} \cdot \Theta\left( e^{-0.5(\mu_1 - y_j)^2 \cdot  2^{r} } + \frac{e^{-2(\mu_1 - y_j)^2\cdot 2^r} }{(2^r+1) \cdot (\mu_1-y_j)^2} + \frac{1}{e^{0.25(\mu_1-y_j)^2 \cdot 2^r}-1} \right)  \\
     & \le & O \left( \frac{1}{(\mu_1 - y_j)^2} \right) + \sum\limits_{r=\lceil d_1 \rceil}^{\log(T)}\Theta\left(2^r \cdot e^{-(\mu_1 - y_j)^2 \cdot  2^{r} \cdot  \frac{1}{2}} \right)  + O \left(  \frac{\log(T)}{(\mu_1 - y_j)^2}\right) \\
      &\le & O \left(  \frac{\log(T)}{(\mu_1 - y_j)^2}\right)  \\
     & \le & O \left(  \frac{\log(T)}{\Delta_j^2}\right) \quad,
     \end{array}
 \end{equation}
 which concludes the proof.
\end{proof}

Prior to proving Lemma~\ref{Martingale event}, we restate the lemma for convenience.

\paragraph{Restatement of Lemma~\ref{Martingale event}.}
For any suboptimal $j$ and any instantiation $F_{t-1}$ of $\mathcal{F}_{t-1}$, we have \begin{equation}
    \begin{array}{lll}
          \mathbb{P}   \left\{J_t = j ,  E^{\theta}_j(t) , G_1(t-1) \mid \mathcal{F}_{t-1} = F_{t-1} \right\}
    \le  \frac{\mathbb{P}   \left\{\theta_1(t) \le y_j  \mid \mathcal{F}_{t-1} = F_{t-1} \right\}  }{\mathbb{P}   \left\{\theta_1(t) > y_j  \mid \mathcal{F}_{t-1} = F_{t-1}  \right\} }  \mathbb{P}   \left\{J_t = 1 ,   E^{\theta}_j(t), G_1(t-1) \mid \mathcal{F}_{t-1} = F_{t-1}  \right\}. 
    \end{array}
    \label{martingale eq}
\end{equation}
  
 \begin{proof}(of Lemma~\ref{Martingale event})
 The proof is very similar to the proof for Lemma~2.8 \citep{agrawal2017near}. 
The LHS in (\ref{martingale eq}) is
\begin{equation}
    \begin{array}{ll}
    & \mathbb{P}   \left\{J_t = j , E^{\theta}_j(t) , G_1(t-1) \mid \mathcal{F}_{t-1} = F_{t-1}  \right\} \\
    \le & \mathbb{P}   \left\{ G_1(t-1), \theta_i(t) \le y_j, \forall i \in \mathcal{A}  \mid \mathcal{F}_{t-1} = F_{t-1}  \right\} \\
    = & \mathbb{P}   \left\{\theta_1(t) \le y_j, G_1(t-1) \mid \mathcal{F}_{t-1} = F_{t-1} \right\} \cdot \mathbb{P}   \left\{\theta_i(t) \le y_j, \forall i \in \mathcal{A} \backslash \{1\}  \mid \mathcal{F}_{t-1} = F_{t-1}  \right\} \\
    = & \mathbb{E} \left[ \bm{1}  \left\{\theta_1(t) \le y_j, G_1(t-1) \right\} \mid \mathcal{F}_{t-1} = F_{t-1} \right] \cdot \mathbb{P}   \left\{\theta_i(t) \le y_j, \forall i \in \mathcal{A} \backslash \{1\}  \mid \mathcal{F}_{t-1} = F_{t-1}  \right\} \\
    = &\bm{1} \left\{G_1(t-1)\right\} \mathbb{E} \left[ \bm{1}  \left\{\theta_1(t) \le y_j  \right\} \mid \mathcal{F}_{t-1} = F_{t-1} \right] \cdot \mathbb{P}   \left\{\theta_i(t) \le y_j, \forall i \in \mathcal{A} \backslash \{1\}  \mid \mathcal{F}_{t-1} = F_{t-1}  \right\} \\
      = &\bm{1} \left\{G_1(t-1)\right\} \mathbb{P}   \left\{\theta_1(t) \le y_j \mid \mathcal{F}_{t-1} = F_{t-1}  \right\}  \cdot \mathbb{P}   \left\{\theta_i(t) \le y_j, \forall i \in \mathcal{A} \backslash \{1\}  \mid \mathcal{F}_{t-1} = F_{t-1}  \right\} \quad.
    \end{array}
    \label{temp 7}
\end{equation}
We also have
\begin{equation}
    \begin{array}{ll}
    & \mathbb{P}   \left\{J_t = 1 , E^{\theta}_j(t), G_1(t-1)  \mid \mathcal{F}_{t-1} = F_{t-1}  \right\} \\
    \ge & \mathbb{P}   \left\{ G_1(t-1), \theta_1(t) > y_j \ge \theta_i(t), \forall i \in \mathcal{A} \backslash \{1\}  \mid \mathcal{F}_{t-1} = F_{t-1}  \right\} \\
    = & \mathbb{P}   \left\{\theta_1(t) > y_j , G_1(t-1) \mid \mathcal{F}_{t-1} = F_{t-1}  \right\}  \cdot \mathbb{P}   \left\{y_j \ge \theta_i(t), \forall i \in \mathcal{A} \backslash \{1\}  \mid \mathcal{F}_{t-1} = F_{t-1}  \right\}\\
    =& \bm{1} \left\{G_1(t-1)\right\} \mathbb{P}   \left\{\theta_1(t) > y_j \mid \mathcal{F}_{t-1} = F_{t-1}  \right\}  \cdot \mathbb{P}   \left\{\theta_i(t) \le y_j, \forall j \in \mathcal{A} \backslash \{1\}  \mid \mathcal{F}_{t-1} = F_{t-1}  \right\}\quad.
    \end{array}
    \label{temp 8}
\end{equation}
For the instantiation $F_{t-1}$ such that $\bm{1} \left\{G_1(t-1) \right\} = 0$, it is trivial to prove since both sides in (\ref{martingale eq}) are $0$. For the instantiation $F_{t-1}$  such that $\bm{1} \left\{G_1(t-1) \right\} = 1$, by combining (\ref{temp 7}) and (\ref{temp 8}), we  conclude the proof.  Note that for  any $y_j \in (0, 1)$, we have $\mathbb{P} \left\{\theta_1(t) >y_j  \mid \mathcal{F}_{t-1} = F_{t-1}\right\} \in (0,1)$.\end{proof}

\begin{lemma}
For any arm $j \in [K]$, we have
$ \sum\limits_{t =K+ 1}^{T} \mathbb{E} \left[ \bm{1} \left\{ \overline{C_j(t-1)} \right\}  \right] \le O(1) \quad.$
    \label{low prob event 1}
\end{lemma}
\begin{proof}(of Lemma~\ref{low prob event 1})
Using Hoeffding's inequality (Fact~\ref{Hoeffding}), we have
\begin{equation}
    \begin{array}{lll}
     \sum\limits_{t = K+1}^{T} \mathbb{E} \left[ \bm{1} \left\{ \overline{C_j(t-1)} \right\}  \right] 
    &=& \sum\limits_{t = K+1}^{T} \mathbb{P} \left\{ \left| \widehat{\mu}_{j, O_j(t-1)} - \mu_j \right| > \sqrt{\frac{3\ln(t)}{O_j(t-1)}}\right\}  \\
    &\le & \sum\limits_{t = K+1}^{T}\sum\limits_{r=0}^{\log(t-1)} \mathbb{P} \left\{ \left| \widehat{\mu}_{j, 2^r} - \mu_j \right| > \sqrt{\frac{3\ln(t)}{2^r}}\right\}  \\
&\le & \sum\limits_{t = K+1}^{T}\sum\limits_{r=0}^{\log(t-1)} 2e^{-2 \cdot 2^r \cdot \frac{3\ln(t)}{2^r}} \\
   & \le & O(1) \quad,
    \end{array}
\end{equation}
which concludes the proof.\end{proof}
\begin{lemma}
For any arm $j \in \mathcal{A}$, we have $\sum\limits_{t = K+1}^{T} \mathbb{E} \left[ \bm{1} \left\{ \overline{G_j(t-1)} \right\}  \right] \le O(1) \quad.$
\label{low prob event 2}
\end{lemma}
\begin{proof}(of Lemma~\ref{low prob event 2})
We prove this lemma by using the measure concentration of Laplace random variables (shown in Fact~\ref{fact 1}).  We have
\begin{equation}
    \begin{array}{ll}
    & \sum\limits_{t = K+1}^{T} \mathbb{E} \left[ \bm{1} \left\{ \overline{G_j(t-1)} \right\}  \right] \\
    =& \sum\limits_{t =K+ 1}^{T} \mathbb{P} \left\{ \left| \widehat{\mu}_{j, O_j(t-1)} - \widetilde{\mu}_{j, O_j(t-1)} \right| > \frac{3\ln(t)}{\epsilon \cdot O_j(t-1)}\right\}  \\
    = & \sum\limits_{t =K+ 1}^{T} \mathbb{E} \left[\mathbb{P} \left\{ \left| \widehat{\mu}_{j, O_j(t-1)} - \widetilde{\mu}_{j, O_j(t-1)} \right| > \frac{3\ln(t)}{\epsilon \cdot O_j(t-1)} \mid O_j{t-1}, \widehat{\mu}_{j, O_j(t-1)}\right\}\right] \\
    \le & \sum\limits_{t = K+1}^{T}e^{-3\ln(t)} \\
    \le & O(1) \quad,
    \end{array}
\end{equation}
which concludes the proof.\end{proof}

\section{Proofs for RNM-FTNL}
\label{app:ftnl} 
The full proof for Theorem~\ref{full information DP theorem} is shown in Section~\ref{app:full-info-dp}, the full proof for Theorem~\ref{thm:ftnl-regret} is shown in Section~\ref{app:full-info-upper-bound}, and the full proof for Theorem~\ref{thm:dp-dtol-lower-bound} is shown in Section~\ref{app:lower-bound}.

\subsection{Proofs of Theorem~\ref{full information DP theorem}}
\label{app:full-info-dp}

\begin{proof}(of Theorem~\ref{full information DP theorem})
Let ${X}_{1 : T}$ be the original reward vector sequence and ${X}'_{1 : T}$ be an arbitrary neighbouring reward vector sequence of ${X}_{1 : T}$ such that they differ from each other in at most an arbitrary reward vector. Let us say they differ in the reward vector in round $t$. Let $D_{1:T}$ be the sequence of played actions from    round $1$ to round $T$ when working over   ${X}_{1 : T}$. Similarly, let $D_{1:T}'$ be the sequence of played actions  when working over   ${X}'_{1 : T}$.

For an arbitrary point $\sigma_{1 : T} \in [K]^T$, we claim the probability mass functions of $D_{1:T}$ and $D'_{1:T}$ satisfy
 \begin{equation}
      P \left(  D_{1:T}= \sigma_{1 : T} \mid X_{1:T} \right) \le e^{\epsilon} \cdot P  \left( D_{1:T}'= \sigma_{1 : T} \mid X'_{1:T} \right) \quad.
      \label{dp full 1111}
 \end{equation}
  Let $\tau^{(r)}$ be the last round of epoch $r$, i.e., at the end of round $\tau^{(r)}$, a new action will be recommended. Also,  let  $r_0$ be the epoch including round $t$. Hence, we have $t \in \left\{ \tau^{(r_0 - 1) } + 1,\tau^{(r_0 - 1) } + 2, \dotsc , \tau^{(r_0) }    \right\}$. We set $\tau^{(-1)} = 0$. Note that for a fixed $t$, $r_0$ is also fixed.
    The LHS of (\ref{dp full 1111}) can be rewritten as
 \begin{equation}
 \begin{array}{lll}
 P \left( D_{1:T} = \sigma_{1 : T} \mid X_{1:T}  \right) &= &
 \mathop{\prod}\limits_{0 \le r \le r_0} P \left( D_{\tau^{(r-1)}+1:\tau^{(r)}} = \sigma_{\tau^{(r-1)}+1:\tau^{(r)}} \mid X_{1:T}   \right) \\
 
 &&\\
 &&\underbrace{P \left(D_{\tau^{(r_0) }+1:\tau^{(r_0 + 1)}}= \sigma_{\tau^{(r_0) }+1 : \tau^{(r_0 + 1)}}  \mid X_{1:T}\right)}_{\alpha}\\ 

 &&\mathop{\prod}\limits_{r \ge r_0 + 2} P\left( D_{\tau^{(r-1)}+1:\tau^{(r)}}= \sigma_{\tau^{(r-1)}+1:\tau^{(r)}}  \mid X_{1:T}\right)\quad.
 \end{array}
 \label{balabala 1}
 \end{equation}
   Similarly, the RHS of (\ref{dp full 1111}) can be 
  rewritten as
 \begin{equation}
     \begin{array}{lll}
P \left( D'_{1:T} = \sigma_{1 : T} \mid X'_{1:T}    \right) &= &
 \mathop{\prod}\limits_{0 \le r \le r_0} P \left( D'_{\tau^{(r-1)}+1:\tau^{(r)}} = \sigma_{\tau^{(r-1)}+1:\tau^{(r)}} \mid X'_{1:T}  \right) \\
 &&\\
 &&\underbrace{P  \left( D'_{\tau^{(r_0) }+1:\tau^{(r_0 + 1)}}= \sigma_{\tau^{(r_0) }+1 : \tau^{(r_0 + 1)}}  \mid X'_{1:T} \right)}_{\alpha'}\\
 &&\\
&&\mathop{\prod}\limits_{r \ge r_0 + 2} P \left( D'_{\tau^{(r-1)}+1:\tau^{(r)}}= \sigma_{\tau^{(r-1)}+1:\tau^{(r)}}  \mid X'_{1:T}\right)\quad.
     \end{array}
     \label{balabala 2}
 \end{equation}
      The idea behind the rewriting is  the decisions made in epoch $r$ only depends on the reward vectors and injected noise in epoch $r-1$. The decisions made in epoch $r$ and epoch $r-1$ are independent.
           Since ${X}_{1:T}$ and ${X}'_{1:T}$ can only differ in round $t$, the distribution for the played actions from round $1$ to round $\tau^{(r_0) }  $ (including round $\tau^{(r_0) } $) 
      {and from round $\tau^{(r_0 + 1)}+1$ to the end stays the same under either ${X}_{1:T}$ or ${X}'_{1:T}$}.
   Therefore,  in (\ref{balabala 1}) and (\ref{balabala 2}), only $\alpha$ and $\alpha'$ can be different. 
            Now, we analyze the relationship of $\alpha$ and $\alpha'$.
   Recall $J^{(r_0)}$ indicates the action recommended  at the end of epoch $r_0$ and it will be played in all rounds $t \in \left\{ \tau^{(r_0 ) } + 1, \dotsc , \tau^{(r_0+1) }    \right\}$. 
We have two cases. If $\sigma_t$ for all $t \in \left\{ \tau^{(r_0) } + 1, \dotsc , \tau^{(r_0+1) }    \right\}$ are not the same, we have $\alpha = \alpha' = 0$.  For the case where $\sigma_t$ for all $t \in \left\{ \tau^{(r_0 ) } + 1, \dotsc , \tau^{(r_0+1) }    \right\}$ are the same, we have  
\begin{equation}
    \begin{array}{lll}
         \alpha & = & P \left(D_{\tau^{(r_0) }+1:\tau^{(r_0 + 1)}}= \sigma_{\tau^{(r_0) }+1 : \tau^{(r_0 + 1)}}  \mid X_{1:T}\right)\\
         &= & P \left(J^{(r_0)} = \sigma_{\tau^{(r_0)} + 1}  \mid X_{1:T}\right)\\
         & \le & e^{\epsilon} \cdot  P \left(J^{(r_0)} = \sigma_{\tau^{(r_0)} + 1}  \mid X'_{1:T}\right)\\
         & = & e^{\epsilon} \cdot \alpha'\quad,
          \end{array}
\end{equation}
which concludes the proof. Note that the only inequality uses the result shown in Claim~3.9 in \cite{dwork2014algorithmic} stating that Report Noisy Max  is $\epsilon$-differentially private.
\end{proof}

\subsection{Proofs for Theorem~\ref{thm:ftnl-regret}}
\label{app:full-info-upper-bound}

For our analysis, we first re-express RNM-FTNL using a convenient aggregated reward (or gain) notation. 
Let $G^{(r)}_j$ denote the aggregated reward of action $j$ from the $2^r$ rounds in epoch $r$, with the convention that $G^{(-1)}_j = 0$ for all $j \in [K]$. 
In addition, let $\tilde{G}_j^{(r)}$ be the corresponding differentially private aggregated reward, defined as the sum of $G_j^{(r)}$ and an independent, zero-mean Laplace noise random variable with the scale parameter $b = 1/\epsilon$. Using this notation, at the start of any epoch $r \geq 1$, RNM-FTNL plays the action $j$ for which $\tilde{G}_j^{(r-1)}$ is maximized. Also, recall from Algorithm~\ref{Full Information Algorithm} that RNM-FTNL plays action 1 in epoch 0.

The proof of Theorem~\ref{thm:ftnl-regret} makes use of the following concentration result stated in Lemma~\ref{lemma:ftnl-concentration} below for RNM-FTNL.

\begin{lemma} \label{lemma:ftnl-concentration}
For all $r \geq 0$, $\lev \geq 1$, and $j \in \J_\lev$, we have
\begin{equation}
    \begin{array}{lll}
         \mathbb{P} \left\{ \tilde{G}_j^{(r)} - \tilde{G}_1^{(r)} \geq 0 \right\} & 
\leq & 2 e^{ -2^r \cdot \Delta_{(\lev)} \cdot  \min\{\Delta_{(\lev)}, \epsilon \} / 8 }\quad. 
    \end{array}
\end{equation}
\end{lemma}

\begin{proof}{(of Lemma~\ref{lemma:ftnl-concentration})}
We begin with the observation that
$\tilde{G}_j^{(r)} - \tilde{G}_1^{(r)} = G_j^{(r)} - G_1^{(r)} + \xi_{j,r}$,
where $\xi_{j,r}$ is the sum of two independent Laplace noise random variables each with their scale parameter set to $1/\epsilon$.
Clearly, we have
\begin{equation}
    \begin{array}{lll}
\mathbb{P} \left\{ \tilde{G}_j^{(r)} - \tilde{G}_1^{(r)} \geq 0 \right\} 
&= &\mathbb{P} \left\{ \tilde{G}_j^{(r)} - \tilde{G}_1^{(r)} + 2^r \Delta_j \geq 2^r \Delta_j \right\}  \\
&\leq & \mathbb{P} \left\{ G_j^{(r)} - G_1^{(r)} + 2^r \Delta_j \geq 2^r \Delta_j / 2 \right\} 
          + \mathbb{P} \left\{ \xi_{j,r} \geq 2^r \Delta_j / 2 \right\}. 
    \end{array}
\label{eqn:conc-proof-two-things}
\end{equation}
Next, we observe that $\Delta_j \geq \Delta_{(\lev)} = 2^{\lev-1} \Deltamin$ for all $j \in \J_\lev$. 
From Hoeffding's inequality with $2^r$ i.i.d.~reward differences (each of range at most 2), the first term of \eqref{eqn:conc-proof-two-things} is at most 
\begin{align*}
\mathbb{P} \left\{ G_j^{(r)} - G_1^{(r)} + 2^r \Delta_j \geq 2^r \Delta_j / 2 \right\} \le e^{-2^r \Delta_j^2 / 8} \leq e^{-2^r \Delta_{(\lev)}^2 / 8} \quad.
\end{align*}
For the second term of \eqref{eqn:conc-proof-two-things}, we just naively use a union bound; letting $Z$ be a Laplace random variable with scale parameter set to $1/\epsilon$, we have
\begin{align*}
\mathbb{P} \left\{ \xi_{j,r} \geq 2^r \Delta_j / 2 \right\} 
\leq 2 \mathbb{P} \left\{ Z \geq 2^{r-2} \Delta_j \right\} 
\leq e^{-\epsilon \cdot 2^{r-2} \Delta_j} 
\leq e^{-\epsilon \cdot  2^{r-2} \Delta_{(\lev)}} \quad,
\end{align*}
where we can drop the factor of 2 in the penultimate step because we only need a bound on the upper tail of $Z$.

Applying the above gives
\begin{align*}
\mathbb{P} \left\{ \tilde{G}_j^{(r)} - \tilde{G}_1^{(r)} \geq 0 \right\} 
&\leq e^{-2^r \Delta_{(\lev)}^2 / 8} + e^{-2^r \Delta_{(\lev)} \cdot \epsilon / 4} \leq 2 e^{-2^r \Delta_{(\lev)} \min\{\Delta_{(\lev)}, \epsilon \} / 8} \quad,
\end{align*}
which concludes the proof.\end{proof}

\begin{proof}(of Theorem~\ref{thm:ftnl-regret})
Observe that all plays of actions $j$ for which $\Delta_j \leq \frac{1}{T}$ can collectively contribute at most 1 to the regret, and hence it is safe to assume (ignoring actions as necessary) that $\Deltamin > \frac{1}{T}$. In other words, without loss of generality, we assume all the mean reward gaps are lower bounded by $\frac{1}{T}$.
Since the regret is non-decreasing in $T$, it is also without loss of generality that we increase $T$ if necessary so that $T = 2^{R+1} - 1$ for some nonnegative integer $R$. For any $r \geq 0$, let $\tau(r) := 2^r$ be the first round of epoch $r$. 
Then, the regret of RNM-FTNL can be written as 
\begin{align}
\sum_{t=1}^T \sum\limits_{j=2}^K \mathbb{P} \left\{J_t = j \right\}  \Delta_j 
&= \sum_{r=0}^R 2^r \sum_{j=2}^K \mathbb{P} \left\{J_{\tau(r)} = j \right\}  \Delta_j \nonumber \\ 
&= \sum_{\lev \geq 1: |\mathcal{J}_{\lev}| > 0} \underbrace{\sum\limits_{r=0}^R 2^r \sum\limits_{j \in \J_\lev} \mathbb{P} \left\{J_{\tau(r)} = j \right\}  \Delta_j}_{\lambda_{(\lev)}}. \label{eqn:bread}
\end{align}
We will bound each $\lambda_{(\lev)}$ term separately. 
We have
\begin{align*}
\lambda_{(\lev)} 
= \sum_{r=0}^R 2^r \sum_{j \in \J_\lev} \mathbb{P} \left\{ J_{\tau(r)} = j \right\}  \Delta_j 
\leq 2^\lev \Deltamin  \cdot \underbrace{\sum_{r=0}^R 2^r  \mathbb{P} \left\{ J_{\tau(r)} \in \J_\lev \right\}}_{\beta_{(\lev)}}.
\end{align*}
Let $r_0$ be a positive integer that will be tuned later. Then, we have
\begin{align}
\beta_{(\lev)} &= \sum_{r=0}^R 2^r \mathbb{P} \left\{ J_{\tau(r)} \in \J_\lev \right\} \nonumber \\
&\leq 1 + \sum_{r=1}^R 2^r \mathbb{P} \left\{ \exists j \in \J_\lev \colon \tilde{G}^{(r-1)}_j - \tilde{G}^{(r-1)}_1 \geq 0 \right\} \nonumber \\
&= 1 + \sum_{r=1}^{r_0} 2^r \mathbb{P} \left\{ \exists j \in \J_\lev \colon \tilde{G}^{(r-1)}_j - \tilde{G}^{(r-1)}_1 \geq 0 \right\}
      + \sum_{r=r_0 + 1}^R 2^r \mathbb{P} \left\{ \exists j \in \J_\lev \colon \tilde{G}^{(r-1)}_j - \tilde{G}_1^{(r-1)} \geq 0 \right\} \nonumber \\
&< 2^{r_0 + 1} + \sum_{r=r_0 + 1}^\infty 2^r \mathbb{P} \left\{ \exists j \in \J_\lev \colon \tilde{G}_j^{(r-1)} - \tilde{G}_1^{(r-1)} \geq 0 \right\} \nonumber \\
&\leq 2^{r_0 + 1} + 2 |\J_\lev| \max_{j \in \J_\lev} \sum_{r=r_0}^\infty 2^r \mathbb{P} \left\{\tilde{G}_j^{(r)} - \tilde{G}_1^{(r)} \geq 0 \right\} \nonumber \\
&\leq 2^{r_0 + 1} + 4 |\J_\lev| \max_{j \in \J_\lev} 
\sum_{r=r_0}^\infty 2^r e^{-2^r \Delta_{(\lev)} \min\{\Delta_{(\lev)}, \epsilon \} / 8}, \label{eqn:post-concentration}
\end{align}

where the last inequality bounds the probability using Lemma~\ref{lemma:ftnl-concentration}.

Letting $\alpha_\lev := \Delta_{(\lev)} \min\{\Delta_{(\lev)}, \epsilon \} / 8$, the summation of the last line above can be bounded as
\begin{align*}
\sum_{r=r_0}^\infty 2^r e^{-2^r \alpha_\lev} 
= \sum_{r=r_0}^\infty \sum_{s=1}^{2^r} e^{-2^r \alpha_\lev} 
&= 2 \sum_{r=r_0}^\infty \sum_{s=1}^{2^{r-1}} e^{-2^r \alpha_\lev} \\ 
&\leq 2 \sum_{r=r_0}^\infty \sum_{s=1}^{2^{r-1}} e^{-(2^{r-1} + s) \alpha_\lev} \\
&= 2 \sum_{s=2^{r_0 - 1} + 1}^\infty e^{-s \alpha_\lev} \\
&\leq 2 e^{-2^{r_0 - 1} \alpha_\lev} \sum_{s=0}^\infty e^{-s \alpha_\lev} \\
&= 2 e^{-2^{r_0 - 1} \alpha_\lev} \frac{1}{1 - e^{-\alpha_\lev}} \\
&\leq \frac{4 e^{-2^{r_0 - 1} \alpha_\lev}}{\alpha_\lev}\quad,
\end{align*}
where the last inequality holds since $e^{-x} \leq 1 - \frac{x}{2}$ for $x \in [0, 1]$ combined with the fact that $\Delta_{(\lev)} \leq 2$ for any $\lev$ that we consider (and hence $\alpha_\lev \leq \frac{1}{2}$).

Applying the above inequality in \eqref{eqn:post-concentration}, we have
\begin{align*}
\beta_{(\lev)} = 
\sum_{r=0}^R 2^r \mathbb{P} \left\{ J_{\tau(r)} \in \J_\lev \right\} 
\leq 2^{r_0 + 1} + 128 |\J_\lev| \frac{e^{-2^{r_0 - 1} \min\{\Delta_{(\lev)}^2, \Delta_{(\lev)} \epsilon \} / 8}}{\min\{\Delta_{(\lev)}^2, \Delta_{(\lev)} \epsilon \}} \quad.
\end{align*}
Now, tuning $r_0$ as $r_0 = \left\lceil \log \left( \frac{8 \ln |\J_\lev|}{\min\{\Delta_{(\lev)}^2, \Delta_{(\lev)} \epsilon\}} \right) \right\rceil + 1$, we have
\begin{align*}
\beta_{(\lev)} \leq 
\frac{64 \ln |\J_\lev|}{\min\{\Delta_{(\lev)}^2, \Delta_{(\lev)} \epsilon\}} 
+ \frac{128}{\min\{\Delta_{(\lev)}^2, \Delta_{(\lev)} \epsilon\}} \quad.
\end{align*} 

Recall that $\Delta_{(\lev)} = 2^{\lev-1}\Deltamin$. Then, we have
\begin{align*}
\lambda_{(\lev)} \le 2^{\lev} \Deltamin \cdot \beta_{(\lev)} 
 = 2\Delta_{(\lev)}  \cdot \beta_{(\lev)} = \frac{128 \ln |\J_\lev|}{\min\{\Delta_{(\lev)}, \epsilon\}} 
+ \frac{256}{\min\{\Delta_{(\lev)}, \epsilon\}} 
\leq \frac{256 (1 + \ln |\J_\lev|)}{\min\{\Delta_{(\lev)}, \epsilon\}} \quad.
\end{align*}
Applying the above result in (\ref{eqn:bread}), the regret of RNM-FTNL is at most
\begin{align*}
\sum_{\lev \geq 1 \colon |\J_\lev| > 0} \lambda_{(\lev)} \le 
\sum_{\lev \geq 1 \colon |\J_\lev| > 0} \frac{256 (1 + \ln |\J_\lev|)}{\min\{\Delta_{(\lev)}, \epsilon\}} \quad.
\end{align*}
To get something more interpretable, let us further upper bound the above expression:
\begin{align}
\sum_{\lev \geq 1 \colon |\J_\lev| > 0} \frac{256 (1 + \ln |\J_\lev|)}{\min\{\Delta_{(\lev)}, \epsilon\}}
&\leq \sum_{\lev \geq 1 \colon |\J_\lev| > 0} \frac{256 (1 + \ln |\J_\lev|)}{\Delta_{(\lev)}} 
          + \sum_{\lev \geq 1 \colon |\J_\lev| > 0, \Delta_{(\lev)} > \epsilon} \frac{256 (1 + \ln |\J_\lev|)}{\epsilon} . \label{eqn:regret-almost-there}
\end{align}

The second term already appears in the first bound of the theorem. 
Recalling that $\Delta_{(\lev)} = 2^{\lev-1} \Deltamin$, the first summation on the RHS can be bounded as
\begin{align*}
\sum_{\lev \geq 1 \colon |\J_\lev| > 0} \frac{256 (1 + \ln |\J_\lev|)}{2^{\lev-1} \Deltamin} 
&\leq \sum_{\lev \geq 1} \frac{256 (1 + \ln (K-1))}{2^{\lev-1} \Deltamin} \\
&\leq \sum_{\lev = 0}^\infty \frac{256 (1 + \ln (K-1))}{2^\lev \Deltamin} \\
&= \frac{512 (1 + \ln (K-1))}{\Deltamin} .
\end{align*}
Plugging this bound into \eqref{eqn:regret-almost-there} implies that the regret is at most
\begin{align*}
O \left( 
  \frac{\log (K)}{\Deltamin} 
  + \sum_{\lev \geq 1 \colon |\J_\lev| > 0, \Delta_{(\lev)} > \epsilon} \frac{1 + \log |\J_\lev|}{\epsilon}
\right) ,
\end{align*}
which is the first bound of the theorem.

We now explain how to go from the above bound to the second bound of the theorem. First, for each $\lev$ we use the upper bound $|\J_\lev| \leq K$. Next, we consider the minimum and maximum values of $\Delta_{(\lev)}$ in the summation term. The minimum value is clearly lower bounded by $\max\{\Deltamin, \epsilon\}$; moreover, as argued in the beginning of the proof, we need only consider actions for which $\Delta_j > \frac{1}{T}$ and so the minimum value is also lower bounded by $\frac{1}{T}$. The maximum value is clearly of order at most $\Delta_{\max}$. Hence, the the range of $\Delta_{(\lev)}$ in the summation is of order at most $\frac{\Delta_{\max}}{\max\left\{ \Deltamin, \epsilon, \frac{1}{T} \right\}}$; taking the logarithm bounds the number of terms, implying the result.
\end{proof}

\subsection{Proofs for Theorem~\ref{thm:dp-dtol-lower-bound}}
\label{app:lower-bound}

\begin{proof}{(of Theorem~\ref{thm:dp-dtol-lower-bound})}
We now present the formal proof, using the shorthand $\Delta := \Deltamin$ throughout. First, recall that the lower bound 
$\Omega \left( \frac{\log K}{\Delta} \right)$
comes from Proposition 4 of \cite{mourtada2019optimality}. The remainder of the proof details each step of the proof sketch to establish an $\Omega \left( \frac{\log K}{\epsilon} \right)$ on the regret.

\subsubsection*{Step 1: Constructing a packing} 
In an $\alpha$-packing with respect to some metric, all pairs of elements are at distance strictly greater than $\alpha$. We start with a $\frac{\Delta}{2}$-packing $\mathcal{P} := \{P_1, P_2, \ldots, P_K\}$ (with respect to total variation) of $K$ probability distributions; that is, for each distinct pair $i, j \in [K]$,
\begin{align*}
\sup_{A \subseteq \{0,1\}^K} \left| P_i(A) - P_j(A) \right| > \frac{\Delta}{2} \quad.
\end{align*}
Each distribution in $\mathcal{P}$ is a $K$-fold product distribution where the $i\nth$ component distribution is the Bernoulli reward distribution for action $i$. Distribution $P_j$ is defined by setting the mean reward for all but the $j\nth$ component to be $1/2$ and the mean reward for the $j\nth$ component to be $1/2 + \Delta$. 
Therefore, under $P_j$, action $j$ is uniquely optimal. Note that $\mathcal{P}$ satisfies the $\frac{\Delta}{2}$-packing property since, as is easy to work out, the total variance distance for any pair of distributions in the packing is equal to $\Delta$.\footnote{For the first and second distributions, the witnessing (maximizing) set is $A = \left\{ (1, 0) \times \{0, 1\}^{K-2} \right\}$.}

\subsubsection*{Step 2: Sample Complexity Lower Bound for $\epsilon$-DP Hypothesis Testing} 

This step involves lower bounding the sample complexity of $\epsilon$-DP multiple hypothesis testing when requiring some constant success probability. Because we focus on hypothesis testing, the (randomized) estimators we consider will be \emph{selectors}: a selector is an estimator whose output is restricted to the set
$\mathcal{P}$. 
The main work of this step  
is essentially an application of Corollary~4 of \cite{acharya2021differentially} (``ASZ'' hereafter), itself building on their Theorem~2. Before continuing, we take a brief detour related to Theorem~2 of ASZ.

Because we only consider selectors, in the proof of Theorem~2 of ASZ we can take $\mathcal{P} = \mathcal{V}$ (both symbols are from their notation), which means that the factor of $\frac{1}{2}$ introduced at the beginning of that proof is unnecessary. On the other hand, there is a small error near the end of their (very nice) proof. For clarity, we mention that in the below, ASZ's $M$ is our $K$, and their $D$ is nonnegative (see their Theorem 2 for how $D$ is defined). Specifically, their inequality
\begin{align*}
\frac{0.9 M (M - 1)}{M - 1 + e^{10 \epsilon D}} 
\geq 0.8 M \min \left\{ 1, \frac{M}{e^{10 \epsilon D}} \right\}
\end{align*}
is not always true; indeed, taking $M = 2$ and $\epsilon = 0$ gives $0.9 \geq 1.6$ (of course, a similar contradiction occurs for suitably small $\epsilon$). A simple fix is to use the inequality $\frac{x}{x + y} \geq \frac{1}{2} \min \left\{ 1, \frac{x}{y} \right\}$ for positive $x$ and $y$, giving
\begin{align*}
\frac{0.9 M (M - 1)}{M - 1 + e^{10 \epsilon D}} 
\geq 0.45 M \min \left\{ 1, \frac{M - 1}{e^{10 \epsilon D}} \right\} \quad.
\end{align*}
The above inequality combined with the aforementioned allowable removal of a factor of $\frac{1}{2}$ allows the differentially private part of the lower bound in ASZ's Theorem 2 (in our setting of selectors with the zero-one loss) to be written (in their notation) as
\begin{align*}
R(\mathcal{P}, \ell, \epsilon) \geq 0.45 \min \left\{ 1, \frac{M - 1}{e^{10 \epsilon D}} \right\} \quad;
\end{align*}
here, the quantity $R(\mathcal{P}, \ell, \epsilon)$ is the minimax probability of error.

With the above modification to Theorem~2 of ASZ, the proof of Corollary 4 of ASZ implies that
\begin{align}
\inf_{\widehat{P}}
\max_{P_j \in \mathcal{P}} 
{\mathbb{P}}_{X_{1:n} \sim P_j^n} \left\{ \widehat{P}(X_{1:n}) \neq P_j \right\}
\geq 0.45 \min \left\{ 1, \frac{K - 1}{e^{10 \epsilon n \Delta}} \right\} \quad, \label{eqn:dp-fano-corollary-mod}
\end{align}
where the infimum is over all $\epsilon$-differentially private selectors 
$\widehat{P}$.
 
Consequently, for $n \leq \frac{\log K}{10 \epsilon \Delta}$, the minimax probability of error is at least $0.45 \cdot \frac{K-1}{K} \geq 0.2$.

\subsubsection*{Step 3: Batch-to-Online Conversion} 

Next, we implement a ``batch-to-online'' conversion of the sample complexity lower bound for multiple hypothesis testing. 
Let $\mathcal{S}_K$ be the simplex over $[K]$. 
To see how the conversion works, consider the following method for selecting a hypothesis given a batch of $m$ data points:
\begin{enumerate}
\item Run an online learning algorithm on the batch of data, resulting in a sequence $S \in (\mathcal{S}_K)^m$ of $m$ elements of the simplex.
\item Select a hypothesis whose cumulative weight over the $m$ elements is the maximum, breaking ties arbitrarily.
\end{enumerate}
Clearly, if any hypothesis has cumulative weight in $S$ of at least $0.51 \cdot m$, then this hypothesis will be selected. Therefore, the correct hypothesis having at least this much weight in $S$ suffices for correctness of the selection algorithm.
But from \eqref{eqn:dp-fano-corollary-mod}, any hypothesis selection procedure (including the one described above) requires $m > \frac{\log K}{10 \epsilon \Delta}$ samples in order to have a failure probability less than $0.2$. 
Consequently, with probability at least $0.2$, in the first $m^* := \frac{\log K}{10 \epsilon \Delta}$ rounds the online learning algorithm must have given less than $0.51 \cdot m^*$ cumulative weight to the optimal action (optimal hypothesis). Therefore, with the same probability, in the first $m^*$ rounds the online learning algorithm must have given cumulative weight at least $0.49 \cdot m^*$ to suboptimal actions (suboptimal hypotheses). 
 It follows that in the first $m^*$ rounds, the online learning algorithm picks up regret of at least $0.2 \cdot 0.49 \cdot m^* \cdot \Delta = \Omega \left( \frac{\log K}{\epsilon} \right)$.
\end{proof}

\end{document}